\theoremstyle{plain}
\newtheorem{theorem}{Theorem}[section]
\newtheorem{proposition}[theorem]{Proposition}
\newtheorem{corollary}[theorem]{Corollary}
\theoremstyle{definition}
\newtheorem{definition}[theorem]{Definition}
\newtheorem{assumption}[theorem]{Assumption}
\theoremstyle{remark}
\newtheorem{remark}[theorem]{Remark}
\lstdefinestyle{alp_style}{
    commentstyle=\color{OliveGreen},
    numberstyle=\tiny\color{black!60},
    stringstyle=\color{BrickRed},
    basicstyle=\ttfamily\scriptsize,
    breakatwhitespace=false,
    breaklines=true,
    captionpos=b,
    keepspaces=true,
    numbers=none,
    numbersep=5pt,
    showspaces=false,
    showstringspaces=false,
    showtabs=false,
    tabsize=2
}
\def\ColorFive{rgb:yellow,0;blue,0;white,5;black,10;green,0;red,0;orange,0}
\tikzstyle{connection}=[thick,every node/.style={sloped,allow upside down},draw=\ColorFive,opacity=0.9]
\newcommand{\finalcells}[3]{%
  \begingroup\sbox0{\begin{minipage}{3cm}\raggedright#1\end{minipage}}%
  \sbox2{\begin{minipage}{3cm}\raggedright#2\end{minipage}}%
  \sbox3{\begin{minipage}{3cm}\raggedright#3\end{minipage}}%
  \xdef\finalheight{\the\dimexpr\ht2+\dp2+\smallskipamount\relax}%
  \xdef\finalheightB{\the\dimexpr\ht3+\dp3+\smallskipamount\relax}%
  \ifdim\finalheightB>\finalheight
    \global\let\finalheight\finalheightB
  \fi\endgroup
  \begin{minipage}[t][\finalheight][t]{3cm}\raggedright#1\end{minipage}&
  \begin{minipage}[t][\finalheight][t]{3cm}\raggedright#2\end{minipage}&
  \begin{minipage}[t][\finalheight][t]{3cm}\raggedright#3\end{minipage}}
\newcommand{\cmark}{\ding{51}}%
\newcommand{\xmark}{\ding{55}}%
\theoremstyle{plain} 
\newcommand{\thistheoremname}{}
\newtheorem*{genericthm*}{\thistheoremname}
\newenvironment{namedthm*}[1]
  {\renewcommand{\thistheoremname}{#1}%
  \begin{genericthm*}}
  {\end{genericthm*}}
\newcommand{\cond}{\,|\,}
\newcommand{\graph}{\mathcal{G}}
\newcommand{\graphi}{\mathcal{G}^0}
\newcommand{\Pstar}{{P^t}}  
\newcommand{\Q}{{Q}}  
\newcommand{\Pideal}{{P^0}}  
\newcommand{\indep}{\mathrel{\perp\mspace{-10mu}\perp}}
\newcommand{\ind}{\mathrel{\perp\mspace{-10mu}\perp}}
\newcommand{\nind}{\centernot{\ind}}
\newcommand{\pa}{{\text{pa}}}
\newcommand{\arr}{-{Triangle[length=2mm, width=2mm]}}
\DeclareMathOperator{\E}{\mathbb{E}}
\title{Mind the Graph When Balancing Data for Fairness or Robustness}
\author{%
  Jessica Schrouff \\
  Google DeepMind\\
  \texttt{schrouff@google.com} \\
  \And
  Alexis Bellot \\
  Google DeepMind\\
  \And
  Amal Rannen-Triki \\
  Google DeepMind\\
  \And
  Alan Malek \\
  Google DeepMind\\
  \And
  Isabela Albuquerque \\
  Google DeepMind\\
  \And
  Arthur Gretton \\
  Google DeepMind \\
  Gatsby, UCL \\
  \And
  Alexander D'Amour \\
  Google DeepMind\\
  \And
  Silvia Chiappa \\
  Google DeepMind\\
}
\begin{document}

\maketitle

\begin{abstract}
 Failures of fairness or robustness in machine learning predictive settings can be due to undesired dependencies between covariates, outcomes and auxiliary factors of variation. A common strategy to mitigate these failures is data balancing, which attempts to remove those undesired dependencies. In this work, we define conditions on the training distribution for data balancing to lead to fair or robust models. Our results display that, in many cases, the balanced distribution does not correspond to selectively removing the undesired dependencies in a causal graph of the task, leading to multiple failure modes and even interference with other mitigation techniques such as regularization. Overall, our results highlight the importance of taking the causal graph into account before performing data balancing.

\end{abstract}

\section{Introduction}
When training prediction models, practitioners often desire that the model's outputs display safety properties in addition to high performance, such as being fair across demographic subgroups \citep{hardt2016,Mehrabi2021} or being robust to distribution shifts \citep[e.g. ][]{Drenkow2021,Quinonero-Candela2022}. These objectives can be difficult to attain if there are undesired dependencies between covariates $X$, labels $Y$, and auxiliary factors of variation $Z$, such as confounding factors or hidden stratification \citep{geirhos2018,Gichoya2022}.
A commonly referenced example is that of an animal classification task from wildlife pictures \citep[e.g. ][]{Sagawa2020}: the model might identify patterns in the background of the images that are indicative of the type of animal (e.g. the presence of snow for polar bears or grass for cows), which might lead to the model failing to recognize the same animal when it is on another background. When the auxiliary factors relate to demographic attributes, the deployment of such models can have societal implications, e.g. patients not being assigned medical resources due to factors related to race \citep{Obermeyer2019}.

Multiple mitigation strategies have been proposed to remove undesired dependencies pre-, in- or post-processing. Amongst them, balancing the training data is typically considered a straightforward approach and has been used or researched in various settings \citep[e.g.][]{Kamiran2012-gs,Kehrenberg2020-ye,Rancic2021-ez, Brown2022,Idrissi2022,Kim2023, Alabdulmohsin2024}. This approach modifies the training distribution, indicated with $\Pstar(X,Y,Z)$, into a new, balanced distribution (which we refer to as $\Q(X,Y,Z)$) that aims to approximate an ‘idealized’ training distribution in which the undesired dependencies are absent \citep{makar22a,compton2023,wu2023}. Models are then trained on this balanced distribution to attain different fairness or robustness criteria. A popular approach to construct a balanced distribution is by balancing classes (resp.\ groups), leading to a uniform distribution over $Y$ (resp.\ $Z$). While successful for addressing failures of robustness \citep[e.g. ][]{Idrissi2022} or of fairness due to under-representation of certain groups \citep[e.g.][]{Wang2023}, this approach does not induce independence between $Y$ and $Z$. To approximate independence, a `joint' balancing on $(Y,Z)$ is often performed \citep[e.g.\ ][]{makar22a,Brown2022}. Joint balancing can be implemented by matching the numbers of samples in all $(y,z)$ groups (only feasible when $Y$ and $Z$ have small, discrete domains) via subsampling the majority groups \citep[e.g.\ ][]{Brown2022}, upsampling the minority groups \citep[e.g.\ ][]{rolf2021}, resampling the data with weights proportional to $ \Pstar(Y)\Pstar(Z) / \Pstar(Y,Z) $, or reweighting the loss \citep{Byrd2019}. Our work focuses on joint balancing given its suitability to mitigate a marginal dependence between $Y$ and $Z$.\footnote{We briefly discuss group or class data balancing in Appendix \ref{appendix:data_balancing_failure}.} While the choice of the method for jointly balancing can impact the results \citep{celis18,sagawa20b,Idrissi2022}, these methods can be all seen as modifying $\Pstar$ as described in Definition \ref{def:joint_balancing}.

\begin{definition}[Jointly balanced distribution]
We say that the distribution $\Q(X,Y,Z)$ is a jointly balanced version of $\Pstar(X,Y,Z)$ if $\Q(X,Y,Z) = \Pstar(X, Y, Z) \frac{\Pstar(Y)\Pstar(Z)}{\Pstar(Y,Z)}$.
\label{def:joint_balancing}
\end{definition}

In some cases, data balancing has proven to be an effective mitigation strategy for undesired dependencies, performing on-par with other, more complex mitigation techniques \citep{Idrissi2022}. Recently, data balancing has also shown promises for mitigation during fine-tuning or partial retraining \citep{Kirichenko2022,labonte2023,mao2023,yang23,Wang2023}, which is relevant to the settings of training large-scale models and with large amounts of data. Nevertheless, data balancing has also displayed failure modes in which the obtained models were not fair, robust or optimal \citep{Wang2018,makar22a,puli2022,Alabdulmohsin2024}. These failure modes have not been thoroughly characterized and can be difficult to predict. Furthermore, the impact of data balancing on other mitigation strategies has not been studied extensively.

Given data balancing's  popularity as a baseline mitigation strategy for undesired dependencies, we aim to formalize some of its promises and pitfalls. Our analysis relies on a causal graphical framework, which allows investigating the impact of data balancing in different data generating processes. Our contributions can be summarized as follows: (1) we display failure modes of data balancing in semi-synthetic tasks and highlight how predicting these failures can be challenging; (2) we introduce conditions for data balancing to attain invariance to undesired dependencies as defined by fairness or robustness criteria; (3) we prove that data balancing does not correspond to `removing' undesired dependencies from a causal perspective, and can negatively impact fairness or robustness criteria when combined with regularization strategies; and (4) we illustrate how our findings can be used to distinguish between failure modes and identify next steps. 

\section{Preliminaries}

Let $X$, $Y$, $Z$ be discrete random variables with ${X \in \mathcal X}$ corresponding to a set of covariates (e.g.\ tabular, images or text), $Y \in \mathcal{Y}$ to an outcome to be predicted, and $Z \in \mathcal{Z}$ to an auxiliary factor of variation, such as a sensitive attribute or the type of background of an image, that displays statistical dependence with $Y$. We assume access to data sampled from distribution $\Pstar(X,Y,Z)$, where $\Pstar$ is the true data-generating distribution. We consider a family of models $\mathcal F\in \mathcal X \rightarrow \mathcal Y$ that will be trained on data from $\Pstar(X,Y,Z)$ to minimize the risk $R_{\Pstar}(f):= \E_{X,Y \sim \Pstar}[\ell(f;X,Y)]$ where $\ell$ is a loss function. We define $f^* \in \mathcal{F}$ to be the \emph{optimal} model, i.e.\ one where the risk attains the minimum on $\Pstar$. We assume that $\E_\Q[Y|X] = f^*(X)$, which occurs, for example, if $\ell$ is the square loss or cross-entropy loss.

\begin{definition}[Optimality]
    A prediction model $f \in \mathcal{F}$ is optimal w.r.t. $\Pstar$ if $f=\arg\!\min_{f' \in \mathcal{F}} R_{\Pstar}(f')$.
\end{definition}

\subsection{Desired criteria on a model's predictions}
Due to undesired independencies, while a model may be optimal on $\Pstar$, it might not be optimal on another distribution of interest $P'(X,Y,Z)$ (e.g. in deployment), and/or might display disparities across subsets of the data (e.g. $\Pstar(X,Y \cond Z=z)$) \citep{dutta2020}. To mitigate this issue, multiple safety criteria have been defined in the fields of \emph{fairness} and \emph{robustness}.

\noindent\textbf{Fairness: }Fairness criteria can be defined in terms of the dependence between the model's output $f(X)$ and the auxiliary factor of variation $Z$. We consider established fairness criteria \citep{barocas,Mehrabi2021}, including \emph{demographic parity} \citep[$f(X) \ind Z$, ][]{Dwork2012}, \emph{equalized odds} \citep[$f(X) \ind Z \cond Y$, ][]{hardt2016} and \emph{predictive parity} \citep[$Y \ind Z \cond f(X)$,][]{Flores2016}. Beyond fairness of $f(X)$, we also consider fairness of intermediate \emph{representations} $\phi(X)$, e.g. $\phi(X) \ind Z$ \citep{zemel13}, for their usage in downstream tasks.

\noindent\textbf{Robustness: }In this field, the focus is typically on finding models $f_\theta$ parameterized by $\theta \in \Theta$ that provide the lowest risk across a \emph{family of target distributions} $\mathcal{P}$. For instance, the `worst group performance' criterion aims to select parameters such that the performance on a `worst' distribution $P'$ is optimized, i.e. $\theta^*=\min_{\theta \in \Theta}\{\sup_{P' \in \mathcal{P}}R_{P'}(f_\theta)\}$ \citep{Ben-Tal2013,Duchi2016}.
$\mathcal{P}$ can be defined so that each distribution $P'$ represents a specific subpopulation \citep{Sagawa2020}, to minimize the loss in each subgroup, or aiming for an invariance of $R_{P'}$ across subgroups \citep[\emph{risk-invariance},][]{makar22a}.
\begin{definition}[Risk-invariance]
    A prediction model $f$ is risk-invariant w.r.t. a family of distributions $\mathcal{P}$ if $R_{P}(f) = R_{P'}(f)$ $ \forall P,P' \in \mathcal{P}$.
\label{def:risk-invariance}
\end{definition}

If a model is optimal on $\Pstar$ and risk-invariant w.r.t. $\mathcal{P}$, it is also optimal w.r.t. $\mathcal{P}$. The choice of $\mathcal{P}$ is context-specific and reflects some domain knowledge about shifts that are likely to arise in a given application. For instance, a plausible family of target distributions could imply a shift in the dependence between $Y$ and $Z$, also known as a \emph{correlation shift} \citep{roh2023}, and be expressed as $\mathcal{P}=\{P'(X,Y,Z)=\Pstar(X \cond Y,Z)P'(Z \cond Y)\Pstar(Y), \forall P'(Z \cond Y)\}$. Alternatively, we can define $\mathcal{P}$ using a causal framework (see Section \ref{sec:causal_framework}) when the data generation process is known \citep{makar22a}. 

We acknowledge that selecting amongst those criteria is context-dependent and do not advocate for a specific choice. We call a prediction model $f$ \emph{invariant} to undesired dependencies, denoted with $f \in \mathcal{F}_{inv}$, if it satisfies one of such criteria. For brevity, we focus on risk-invariance in the main text and consider fairness criteria in Appendix. Obtaining an invariant model can be performed in different ways, with data balancing being a popular approach.

\subsection{Causal framework to analyse data balancing}
\label{sec:causal_framework}
To understand the effects of data balancing, we need to investigate its impact on the distribution $\Pstar$. A causal formalization is useful for studying how distributions change under different interventions. To analyse the implications of data balancing, we use the framework of \emph{causal Bayesian networks} (CBNs) \citep[e.g. ][]{Subbaswamy2018,Chiappa2019-kk,Mooij2020-sv,veitch2021,Galhotra2022,makar22a}.
A Bayesian network \citep{pearl1988probabilistic,pearl2000causality,cowell2001probabilistic,kollerl2009probabilistic} is a pair $\langle  \graph, \Pstar \rangle$, in which $\graph$ is a directed acyclic graph whose nodes $X^1,\ldots, X^D$ represent random variables and in which $\Pstar$ is a joint distribution over the nodes. The absence of edges in $\graph$ implies a set of statistical independence assumptions satisfied by $\Pstar$ that can be expressed by the factorization $\Pstar(X^1, \dots, X^D) = \prod_{d=1}^D \Pstar(X^d \cond \pa(X^d))$, where $\pa(X^d)$ denote the \emph{parents} of $X^d$, namely the nodes with an edge into $X^d$ (we say that $\Pstar$ \emph{factorizes according to} $\graph$). 
A CBN is a Bayesian network in which an edge expresses causal influence, so that $\pa(X^d)$ are \emph{direct causes} of $X^d$. A directed path between $X^i$ and $X^j$ in a CBN 
is also called a \emph{causal path}. A non-directed path, also called \emph{non-causal path}, expresses statistical dependence of non-causal nature. We refer to the statistical dependence between $X^i$ and $X^j$ that arises only due to the presence of non-causal paths as \emph{purely spurious}.
In our setting $X^1\cup\dots\cup X^D=X\cup Y\cup Z \cup \mathbf{U}$ where $\mathbf{U}$ are unobserved variables. Inspired by prior work \citep{veitch2021,anthis2023,sreekumar2023,wu2023}, we make a decomposition assumption on the form of the covariates $X$.

\begin{table*}[!ht]
\centering
\small
\begin{tabular}{p{0.5cm} m{3cm} m{2cm}  m{2cm} m{3cm}} 
\toprule
\multicolumn{2}{c}{Graph} & 
\multicolumn{1}{c}{Data Balancing} & 
\multicolumn{1}{c}{Regularization} & 
\multicolumn{1}{c}{Next steps} \\
\midrule
(a) & 
\adjustbox{valign=c}{
\begin{tikzpicture}
\node[circle,text=gray, dashed] (U) at (-1.75,-0.5) {$U$};
\node (Z) at (-1,-1) {$Z$};
\node (Y) at (-1,0) {$Y$};
\node (XY) at (0.,0) {$X^{\perp}_{Z}$};
\node (XZ) at (0.,-1) {$X^{\perp}_{Y}$};

\draw[line width=1pt,black,\arr, opacity=0.7](Y)--(XY);
\draw[line width=1pt, \arr, opacity=0.7](Z)--(XZ);
\draw[line width=1pt, \arr, red, opacity=0.5](U)--(Z);
\draw[line width=1pt, \arr, red, opacity=0.5](U)--(Y);
        \end{tikzpicture}
}
& \finalcells{\cmark risk-invariant \ \cmark optimal}
{$f(X) \ind Z \cond Y$ \ \cmark risk-invariant \cmark optimal}
{N.A.} \\
\hline
(b) &
\adjustbox{valign=c}{
\begin{tikzpicture}
\node[circle,text=gray, dashed] (U) at (-1.75,-0.5) {$U$};
\node (Z) at (-1,-1) {$Z$};
\node (Y) at (-1,0) {$Y$};
\node (XY) at (0,0) {$X^{\perp}_{Z}$};
\node (XYZ) at (0,-1) {$X^{\perp}_Y$};

\draw[line width=1pt,black,\arr, opacity=0.7](XY)--(Y);
\draw[line width=1pt, \arr, opacity=0.7](Z)--(XYZ);
\draw[line width=1pt, \arr, red, opacity=0.5](U)--(Y);
\draw[line width=1pt, \arr, red, opacity=0.5](U)--(Z);
\end{tikzpicture}
}
& \finalcells{\cmark risk-invariant \ \xmark optimal}
{$f(X) \ind Z$ \ \cmark risk-invariant \cmark optimal}
{Use regularization, without prior balancing (as per Section 5)} \\
\hline
(c) &
\adjustbox{valign=c}{
\begin{tikzpicture}
\node (Z) at (-1,-0.75) {$Z$};
\node (Y) at (-1,0) {$Y$};
\node (XY) at (0,0) {$X^{\perp}_{Z}$};
\node (XZ) at (0,-0.75) {$X^{\perp}_{Y}$};
\node[text=gray] (V) at (-1,0.75) {$V$};
\node[text=gray] (U1) at (-1.9,-0.375) {$U_1$};
\node[text=gray] (U2) at (-1.9,0.375) {$U_2$};
\node[text=gray] (U3) at (-2.25,0) {$U_3$};
\node (XV) at (0,0.75) {$X_V$};

\draw[line width=1pt,black,\arr, opacity=0.7](Y)--(XY);
\draw[line width=1pt, \arr, opacity=0.7](Z)--(XZ);
\draw[line width=1pt,red,\arr, opacity=0.5](U1)--(Y);
\draw[line width=1pt,red,\arr, opacity=0.5](U1)--(Z);
\draw[line width=1pt,black,\arr, opacity=0.7](V)--(XV);
\draw[line width=1pt,red, \arr, opacity=0.5](U3) to [bend left=+58] (V);
\draw[line width=1pt,red, \arr, opacity=0.5](U3) to [bend right=+58] (Z);
\draw[line width=1pt,red,\arr, opacity=0.5](U2)--(Y);
\draw[line width=1pt,red,\arr, opacity=0.5](U2)--(V);
\end{tikzpicture}
}
& \finalcells{\xmark risk-invariant \ \xmark optimal}
{$f(X) \ind Z \cond Y$ \ \xmark risk-invariant \xmark optimal}
{Refer to Kaur et al., 2023; Alabdulmohsin et al., 2024 which address the cases with multiple auxiliary factors.}  \\
\hline
(d) & 
\adjustbox{valign=c}{
\begin{tikzpicture}
\node[circle,text=gray, dashed] (U) at (-1.75,-0.5) {$U$};
\node (Z) at (-1,-1) {$Z$};
\node (Y) at (-1,0) {$Y$};
\node (XY) at (0.,0) {$X^{\perp}_{Z}$};
\node (XYZ) at (0.2,-1) {$X_{Y \wedge Z}$};

\draw[line width=1pt,black,\arr, opacity=0.7](Y)--(XY);
\draw[line width=1pt, \arr, red, opacity=0.5](Y)--(XYZ);
\draw[line width=1pt, \arr, red, opacity=0.5](U)--(Z);
\draw[line width=1pt, \arr, red, opacity=0.5](U)--(Y);
\draw[line width=1pt,black, \arr, opacity=0.7](Z)--(XYZ);
\end{tikzpicture}
}
& \finalcells{\xmark risk-invariant \ \xmark optimal}
{$f(X) \ind Z \cond Y$ \ \cmark risk-invariant \cmark optimal}
{Use regularization} \\
\bottomrule
    \end{tabular}
    \caption{Examples of causal Bayesian networks with undesired dependencies between $Y$ and $Z$ displayed by red edges. Light gray indicates unobserved variables. $X_{Y \wedge Z}=\emptyset$ in (a-b) and there is no entanglement between $Y$ and $Z$ via $X$. In (c), we expand the system to include $V \in \mathbf{U}$ and its influence on $X$, which is given by $X_V$. For each Causal Bayesian Network considered, we display when data balancing leads to a risk-invariant and/or optimal model. We compare these with regularization following \citet{veitch2021} and suggest next steps.}
    \label{fig:causal_graphs}
\end{table*}

\begin{assumption}[Decomposition of $X$]
    The covariates $X$ can be decomposed into three unobserved random variables $X^{\perp}_{Z}, X^{\perp}_{Y}$ and $X_{Y \wedge Z}$ such that: 1) $X^{\perp}_{Z}$  does not have causal paths to/from $Z$ but has causal paths to/from $Y$, 2) $X^{\perp}_{Y}$ does not have causal paths to/from $Y$ but has causal paths to/from $Z$, 3) $X_{Y \wedge Z}$ has causal paths to/from both $Y$ and $Z$, representing \emph{entangled} signals, and 4) $X$ is measurable w.r.t. $\sigma(X^{\perp}_{Z}, X^{\perp}_{Y}, X_{Y \wedge Z})$, the joint $\sigma$-algebra. In particular, there exists a function $g$ such that $X = g(X^{\perp}_{Z}, X^{\perp}_{Y}, X_{Y \wedge Z})$ almost everywhere and $\Pstar(X^{\perp}_{Z}, X^{\perp}_{Y}, X_{Y \wedge Z}, Y, Z, \mathbf U) = 
    \Pstar(g(X^{\perp}_{Z}, X^{\perp}_{Y}, X_{Y \wedge Z}), Y, Z, \mathbf U)$.
\label{assumption:covariates}
\end{assumption}

In the animal classification example, $X^{\perp}_{Z}$ would correspond to the animal pixels, $X^{\perp}_{Y}$ to the background pixels (e.g.\ snowy or grassy landscape), and $X_{Y \wedge Z}$ to characteristics of the animal that depend on its environment (e.g.\ color of the fur pixels in bears). Intuitively, we want to build a prediction model $f$ that only depends on the animal pixels. While the decomposition may be readily available when a causal graph of the application is available and the data is tabular, we typically do not have direct access to the different functions of $X$ and these need to be isolated algorithmically. 

Following \citet{scholkopf2012on}, we consider both the case in which 
$X^{\perp}_{Z} \cup X_{Y \wedge Z}$ are direct causes of the label $Y$ (\emph{causal task}) e.g.\ estimating the helpfulness of a text review, and the case in which $Y$ is a direct cause of $X^{\perp}_{Z} \cup X_{Y \wedge Z}$ (\emph{anti-causal task}) as in object detection tasks in computer vision. Figures ~\ref{fig:causal_graphs}(a-b) display examples of anti-causal and causal tasks with a purely spurious dependence between $Y$ and $Z$. It is important to note that statistical relationships between the different variables and functions of $X$ are determined by the graph: for instance, in Figure~\ref{fig:causal_graphs}(a) $X^{\perp}_Z \ind Z \cond Y$, while in Figure~\ref{fig:causal_graphs}(b) $X^{\perp}_Z \ind Z$. 

Based on a CBN of the task and Assumption \ref{assumption:covariates}, we characterize undesired dependencies as the presence of undesired paths between $Z$ and $Y$, which we indicate through red edges (Figure~\ref{fig:causal_graphs}). Based on this depiction of undesired dependencies, we can define the family of target distributions $\mathcal{P}$ such that black edges are preserved, but those in red may lead to changes in the distribution. For the anti-causal task in Figure \ref{fig:causal_graphs}(a), we can hence write $\mathcal{P}=\{P'(Y, Z, X) = \Pstar(Y)P'(Z\cond Y)\Pstar(X^{\perp}_Z \cond Y)\Pstar(X^{\perp}_Y \cond Z)\}$ in which $P'(Z \cond Y)$ represents any distribution but all other causal mechanisms are fixed \citep{makar22a}, which corresponds to a correlation shift.

\section{Can we predict when data balancing fails?}
\label{sec:failure_modes}

As reported previously, data balancing can display failure modes, e.g. due to the presence of other confounders \citep{Wang2018,Alabdulmohsin2024}, finite sampling effects \citep{makar22a} or a dependence between $Y$ and $Z$ when conditioning on $X$ ($Y \centernot{\ind} Z \cond X$) \citep{puli2022}. However, this list is non-exhaustive and, to the best of our knowledge, there is no unifying study of those failure modes or of how they could be mitigated. In this section, we perform joint data balancing on different tasks to illustrate that successes and failures of this approach can be difficult to predict (see Table~\ref{fig:causal_graphs}). For details of the experiments, see Appendix \ref{app:experiments}.

Let's first consider semi-synthetic examples generated from the graphs in Figure~\ref{fig:causal_graphs}(a,b), i.e. an anti-causal and causal task with a purely spurious correlation. We aim to obtain a risk-invariant and optimal model on these tasks by training on the jointly balanced distribution $\Q$.

\noindent\textbf{Anti-causal task: number detection in MNIST}. Inspired by \citet{Brown2022}, we modify MNIST images \citep{Lecun1998-kx,deng2012mnist} by adding a factor of variation $Z$ such that the top of the image is replaced by red noise for $Z=0$ and blue noise for $Z=1$ (Figure~\ref{fig:mnist_samples}). We sample a dataset in which the factor of variation and label are dependent ($\Pstar(Y=0 \cond Z=0)=0.95$, $\Pstar(Y=1 \cond Z=0)=0.10$, called the `confounded' data), a jointly balanced dataset, and a dataset from a distribution $\Pideal$ in which the undesired dependency is absent ($\Pideal(Z=0 \cond Y)=0.5$). We train convolutional networks to predict whether the number in an image is smaller or larger than 5, assessing the models on their training distribution and on $\Pideal$.

Models trained with confounded data (95/10) display biased outputs (Table \ref{tab:mnist_core}), with low worst group performance and high equalized odds. Performance on $\Pideal$ is also lower compared to that on $\Pstar$ ($0.937 \pm 0.002$), showing that these models are not risk-invariant w.r.t. $\mathcal{P}$. Models trained from balanced data obtain high overall performance and worst group accuracy, as well as low equalized odds. In addition, we were not able to decode $Z$ from the model representation $\phi(X)$ at the penultimate layer, suggesting that the model has not learned $X^{\perp}_Y$.

\noindent\textbf{Causal task: helpfulness of reviews with Amazon reviews \citep{ni2019justifying}}. Inspired by \citet{veitch2021}, we refer to the causal task of predicting the helpfulness rating of an Amazon review (thumbs up or down, $Y$) from its text ($X$). We add a synthetic factor of variation $Z$ such that words like `the' or `my' are replaced by `thexxxx' and `myxxxx' ($Z=0$) or `theyyyy' and `myyyyy' ($Z=1$). We train a BERT \citep{devlin2019bert} model on a class-balanced version of the data for reference (due to high class imbalance), and compare to a model trained on jointly balanced data, both evaluated on their training distribution and on a distribution $\Pideal$ with no association.

In this case, jointly balancing improves fairness and risk-invariance, with the model's performance on the training distribution (acc.: $0.574\pm0.016$) being similar to that on $\Pideal$ (Table \ref{tab:mnist_core}). This however comes at a high performance cost when compared to the class balanced model's performance on $\Pstar$ (acc: $0.658\pm0.015$). Therefore, data balancing might not lead to optimality for this causal task.

\begin{table*}[!t]
\centering
\begin{minipage}[l]{0.15\linewidth}
\begin{center}
\includegraphics[width=0.95\textwidth]{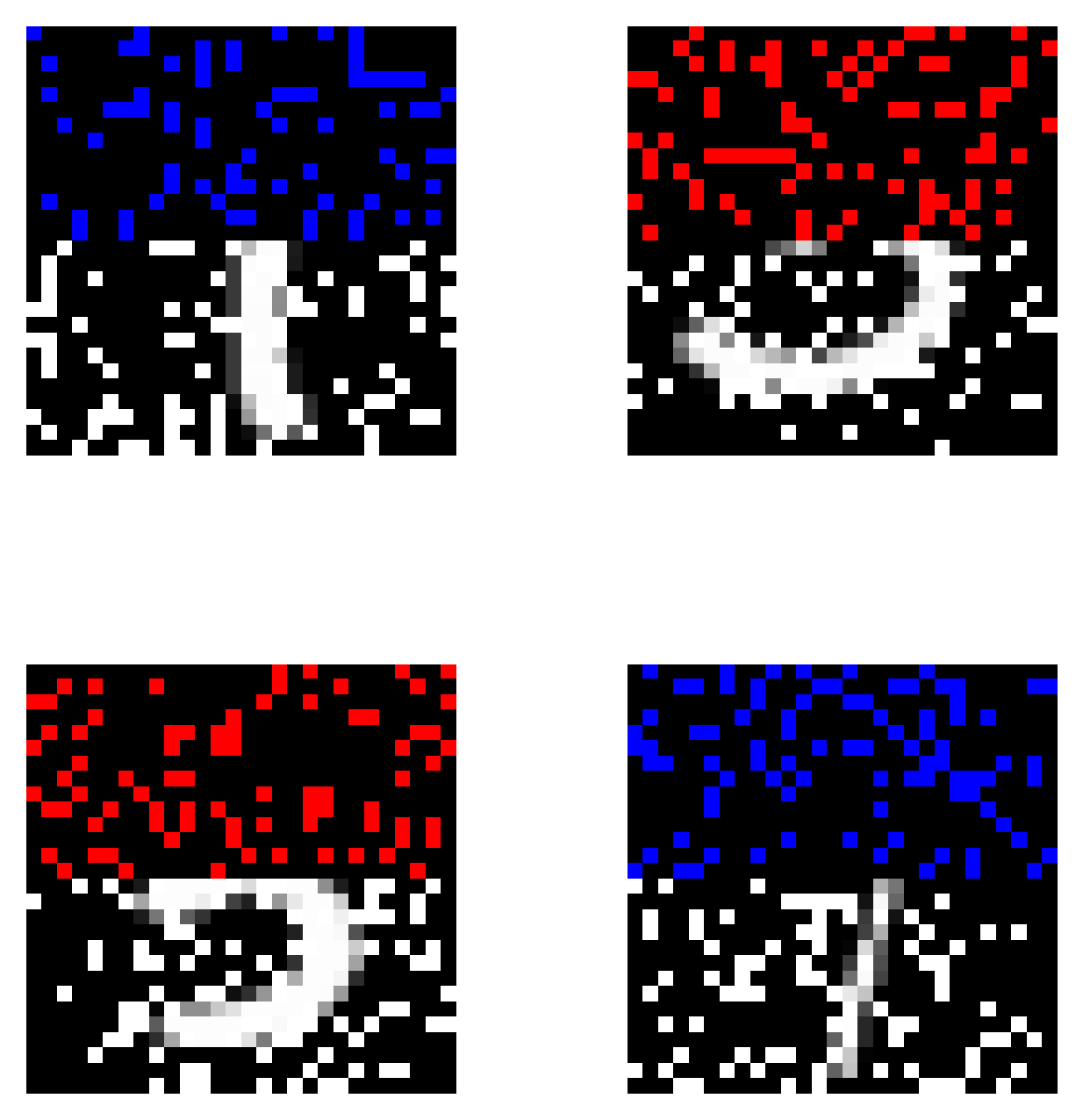} 
\end{center}
\captionof{figure}{MNIST data samples.}
\label{fig:mnist_samples}
\hspace{1cm}
\end{minipage}
\begin{minipage}[c]{0.83\linewidth}
\caption{Model performance on semi-synthetic data, for the tasks in Figure 1. `Acc' refers to accuracy, `Worst Grp' to worst group accuracy, `Encoding' to confounder encoding as measured by transfer learning and `Equ. Odds' refers to equalized odds between $Z$ subgroups. $\uparrow$ (resp. $\downarrow$) means the higher (resp. lower), the better.}
\resizebox{0.99\textwidth}{!}{
\begin{tabular}{llr|rrrr}
\toprule
Graph & Dataset & \multicolumn{1}{c|}{$\Pstar$} & \multicolumn{4}{c}{$\Pideal$} \\
 & & Acc. ($\uparrow$) & Acc. ($\uparrow$) &  Worst Grp ($\uparrow$) &  Encoding ($\sim 0.5$) &  Equ. Odds ($\downarrow$)\\
\midrule
 (a) & 95/10 &   $0.937 \pm 0.002$ &   $0.717 \pm  0.027$ &   $0.380 \pm 0.062$ &  $0.996 \pm 0.004$ &  $0.539 \pm 0.015$ \\
(a) & Balanced & $0.871 \pm 0.008$ &  $0.880 \pm 0.006$ &    $0.836 \pm 0.075$ & $0.486 \pm 0.005$ &  $0.018 \pm  0.008$\\
 \hline
(b) & Class bal. &   $0.658\pm0.015$ & $0.558 \pm 0.015$ &   $0.092 \pm 0.015$ &  $0.690 \pm 0.113$ &  $0.542 \pm 0.098$ \\
(b) & Jointly bal. & $0.574\pm0.016$ &  $0.583 \pm 0.017$ &    $0.399 \pm 0.014$ & $0.545 \pm 0.037$ &  $0.060 \pm  0.046$\\
 \hline
(c) & With $V$ & $0.769 \pm 0.001$ &  $0.647 \pm 0.023$ &   $0.555 \pm 0.031$ &  $0.665 \pm 0.134$ &  $0.094 \pm 0.035$ \\
(d) & Entangled & $0.903 \pm 0.011$ &  $0.672 \pm 0.004$ &    $0.000 \pm 0.001$ & $0.881 \pm 0.223$ &  $0.554 \pm  0.028$\\
\bottomrule
\end{tabular}}
\label{tab:mnist_core}
\end{minipage}
\hfill
\end{table*}

Using the same framework, we can replicate the failure modes due to another confounder described in \citet{Wang2018,Alabdulmohsin2024} as well as that from \citet{puli2022}. 

\noindent\textbf{Anti-causal task with another factor of variation $V$.} It is common for multiple auxiliary factors to influence the data generating process, and they tend to correlate with each other \citep[e.g.][]{Duffy2022}. To emulate this case, we introduce more unobserved variables $U_2,U_3$ as well as a factor of variation $V$ which affects the data through $X_V$ (Figure \ref{fig:causal_graphs}(c)).
We modify the MNIST data generation to include $X_V$ depicted by a green cross on the top left or top right of the image and jointly balance the data on $(Y,Z)$ before training the model. We evaluate the obtained predictor on a distribution where $V$ and $Z$ are not correlated with $Y$ and observe (Table \ref{tab:mnist_core}) a large gap between worst group accuracy and overall performance, as well as non-null equalized odds. These results suggest that the model is not fair or robust.

\noindent\textbf{Anti-causal task with entangled data.} We map the work in \citet{puli2022} to our decomposition of $X$ and propose the example graph in Figure~\ref{fig:causal_graphs}(d) where $X_{Y \wedge Z}$ represents an entangled function of $X$. To match this data generating process, the color of the noise in MNIST samples is defined by $\textsc{OR}(Y,Z)$ and the evaluation distribution is the disentangled $\Pideal$ with no dependence between $Y$ and $Z$. Once again, the obtained model is not fair, robust or optimal (Table \ref{tab:mnist_core}). Appendix \ref{appendix:entangled_failure} discusses this case further.

Motivated by these examples of both success and failures, we define conditions for the success of data balancing, and highlight when the cases above fail to meet these conditions.

\section{Conditions for data balancing to produce an invariant and optimal model}
\label{sec:databalancing}
In this section, we introduce a sufficient condition on the data generative process and a necessary condition on the trained model that, taken together, lead to a risk-invariant and optimal prediction model after training on $\Q$ (proofs in Appendix~\ref{app:suff_stats}). In Appendix \ref{app:suff_fairness}, we derive similar conditions for fairness criteria. Throughout the rest of the paper, we use an subscript to indicate under which of $\Pstar$ or $\Q$ a statistical independence holds, e.g.\ $Y \ind_\Pstar Z$  to indicate $\Pstar(Y\cond Z)=\Pstar(Y)$.

We consider the criterion of risk-invariance (Definition \ref{def:risk-invariance}) under correlation shift, i.e. $\mathcal{P}=\{P'(X,Y,Z)=\Pstar(X|Y,Z)P'(Z|Y)\Pstar(Y)\}$. 
According to our decomposition of $X$, the risk-minimizing function $f(X):=\E_\Q [ Y \cond X]$ should only be a function of $X^{\perp}_Z$ and not of $X^{\perp}_Y$ or $X_{Y \wedge Z}$. To achieve this result with data balancing, we build on a prior result by \citet{makar22a}, which shows that a model trained on a balanced distribution only depends on $X^{\perp}_Z$ if $X^{\perp}_Z$ represents a \emph{sufficient statistic} for $Y$, i.e.\ no other part of $X$ influences $Y$. 

\begin{definition}(Sufficient Statistic)
    We say that $X^{\perp}_Z$ is a sufficient statistic for $Y$ in $\Q$ if $\E_\Q [ Y \cond X]=\E_\Q [ Y \cond X^{\perp}_Z]$ (note that $X^{\perp}_Z$ is a function of $X$).
\label{def:suff_stats}    
\end{definition}

Definition \ref{def:suff_stats} implies that 
the risk-minimizing function $f$ for $\Q$ does not vary with $X^{\perp}_Y, X_{Y \wedge Z}$. However, this condition is not sufficient on its own to ensure that $f$ is risk-invariant w.r.t. $\mathcal{P}$, as $X^{\perp}_Z$ or $Y$ may have non-causal relationships with $Z$. To ensure optimality and risk-invariance w.r.t. $\mathcal{P}$, we derive the sufficient condition in Proposition \ref{prop:suff_stats_risk-invariance}.

\begin{proposition}
If $X^{\perp}_Z \indep_\Q Z \cond Y$ and $X^{\perp}_Z$ is a sufficient statistic for $Y$ in $\Q$, then the risk-minimizer $f(X):=\E_\Q [ Y \cond X]$ is risk-invariant and optimal w.r.t. $\mathcal{P}$.
\label{prop:suff_stats_risk-invariance}
\end{proposition}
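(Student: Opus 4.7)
My plan is to first convert the hypothesis $X^{\perp}_Z \indep_\Q Z \mid Y$ into an equivalent statement about every $P' \in \mathcal{P}$, and then use this to show both that $R_{P'}(f)$ is constant across $\mathcal{P}$ (risk-invariance) and that $f$ attains the min-max in the sense of Section 2.1 (optimality).

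For the translation step, Definition \ref{def:joint_balancing} gives $\Q(X, Y, Z) = \Pstar(X, Y, Z)\,\Pstar(Y)\Pstar(Z)/\Pstar(Y, Z)$. Since the reweighting factor depends only on $(Y, Z)$, marginalizing over the $X^{\perp}_Y$ and $X_{Y \wedge Z}$ components of $X$ (permitted by Assumption \ref{assumption:covariates}) yields $\Q(X^{\perp}_Z \mid Y, Z) = \Pstar(X^{\perp}_Z \mid Y, Z)$. Combining this with the hypothesis $\Q(X^{\perp}_Z \mid Y, Z) = \Q(X^{\perp}_Z \mid Y)$ forces $\Pstar(X^{\perp}_Z \mid Y, Z)$ to be $Z$-free, so $X^{\perp}_Z \indep_\Pstar Z \mid Y$ and the common value is $\Pstar(X^{\perp}_Z \mid Y) = \Q(X^{\perp}_Z \mid Y)$. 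For any $P' \in \mathcal{P}$, the defining factorization gives $P'(X \mid Y, Z) = \Pstar(X \mid Y, Z)$, and since $X^{\perp}_Z$ is a function of $X$, we obtain $P'(X^{\perp}_Z \mid Y, Z) = \Pstar(X^{\perp}_Z \mid Y)$; marginalizing against $P'(Z \mid Y)$ and using $P'(Y) = \Pstar(Y)$ yields $P'(X^{\perp}_Z, Y) = \Pstar(X^{\perp}_Z, Y)$ for every $P' \in \mathcal{P}$.

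By the sufficient statistic hypothesis, $f(X) = \E_\Q[Y \mid X] = \E_\Q[Y \mid X^{\perp}_Z]$, so $f = h(X^{\perp}_Z)$ for some measurable $h$. Hence $R_{P'}(f) = \E_{P'}[\ell(h(X^{\perp}_Z), Y)]$ depends on $P'$ only through $P'(X^{\perp}_Z, Y)$, which we just showed is constant across $\mathcal{P}$; this gives risk-invariance. For optimality, note that $\Q \in \mathcal{P}$ (take $P'(Z \mid Y) = \Pstar(Z)$), and $f$ is by construction the Bayes-optimal predictor on $\Q$ under the square or cross-entropy loss. Combining this with risk-invariance, for any competitor $g$ we have $\sup_{P' \in \mathcal{P}} R_{P'}(g) \geq R_\Q(g) \geq R_\Q(f) = \sup_{P' \in \mathcal{P}} R_{P'}(f)$, establishing min-max optimality over $\mathcal{P}$. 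The main technical obstacle is the first step: it crucially relies on the reweighting factor being constant on fibers of $(Y, Z)$, which lets the marginalization over the non-$X^{\perp}_Z$ components of $X$ preserve the equality of conditional densities between $\Q$ and $\Pstar$. Once that transfer is secured, the remaining claims are routine manipulations of the factorizations in Definition \ref{def:joint_balancing} and the family $\mathcal{P}$.
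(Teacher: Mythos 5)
Your proof is correct and follows essentially the same route as the paper's: both arguments reduce to showing that $P'(X^{\perp}_Z, Y)$ is the same for every $P' \in \mathcal{P}$ (using that all members of $\mathcal{P}$, including $\Q$ and $\Pstar$, share the conditional $\Pstar(X \mid Y, Z)$, so the assumed conditional independence makes $P'(X^{\perp}_Z \mid Y)$ independent of the choice of $P'(Z \mid Y)$), and then invoke the sufficient-statistic hypothesis to make the risk a functional of that invariant joint. Your explicit detour through $\Pstar$ and your min-max phrasing of optimality via $\Q \in \mathcal{P}$ are minor, if slightly cleaner, variations on the paper's argument rather than a different approach.
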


The conditions of Proposition~\ref{prop:suff_stats_risk-invariance} concern $\Q$. However, it would be of interest to express them in $\Pstar$ if it is possible to observe all covariates (e.g. in the case of tabular data). Based on our expression for $\Q$, we can derive sufficient conditions on $\Pstar$, expressed in Corollary \ref{prop:suff_stat_P}. Let's denote $\{X^{\perp}_Y,X_{Y \wedge Z}\}$ by $R$.

\begin{corollary}
 If $R \indep_\Pstar \{Y,X^{\perp}_Z\} \cond Z$ and $X^{\perp}_Z \indep_\Pstar Z \cond Y$, then the risk-minimizer $f(X):=\E_\Q [ Y \cond X]$ is risk-invariant and  optimal w.r.t. $\mathcal{P}$.
\label{prop:suff_stat_P}    
\end{corollary}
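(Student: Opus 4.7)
The plan is to reduce the corollary to Proposition~\ref{prop:suff_stats_risk-invariance} by showing that the two stated conditions on $\Pstar$ imply the two hypotheses of that proposition on $\Q$, namely (i) $X^{\perp}_Z \indep_\Q Z \cond Y$ and (ii) $X^{\perp}_Z$ is a sufficient statistic for $Y$ in $\Q$. Throughout, I would work with the joint distribution over $(X^{\perp}_Z, R, Y, Z)$ and use Assumption~\ref{assumption:covariates} to identify probabilities over $X$ with probabilities over $(X^{\perp}_Z, R)$, together with the definition $\Q(X,Y,Z) = \Pstar(X,Y,Z)\frac{\Pstar(Y)\Pstar(Z)}{\Pstar(Y,Z)}$ from Definition~\ref{def:joint_balancing}.

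For (i), I would first compute $\Q(Y,Z) = \Pstar(Y)\Pstar(Z)$ directly from Definition~\ref{def:joint_balancing}, and then marginalise $R$ out of $\Q(X^{\perp}_Z, R, Y, Z)$ to obtain $\Q(X^{\perp}_Z, Y, Z) = \Pstar(X^{\perp}_Z, Y, Z)\frac{\Pstar(Y)\Pstar(Z)}{\Pstar(Y,Z)}$. Dividing yields $\Q(X^{\perp}_Z \cond Y, Z) = \Pstar(X^{\perp}_Z \cond Y, Z) = \Pstar(X^{\perp}_Z \cond Y)$, where the last equality uses the assumption $X^{\perp}_Z \indep_\Pstar Z \cond Y$. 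Since this conditional does not depend on $Z$, we conclude $X^{\perp}_Z \indep_\Q Z \cond Y$.

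For (ii), I would show the stronger statement that $R \indep_\Q \{Y, X^{\perp}_Z\}$ under $\Q$. Starting from $\Pstar(X^{\perp}_Z, R, Y, Z) = \Pstar(R\cond Z)\,\Pstar(X^{\perp}_Z\cond Y)\,\Pstar(Y,Z)$, obtained by factoring using the two assumptions $R \indep_\Pstar \{Y, X^{\perp}_Z\}\cond Z$ and $X^{\perp}_Z \indep_\Pstar Z \cond Y$, I multiply by $\frac{\Pstar(Y)\Pstar(Z)}{\Pstar(Y,Z)}$ to get $\Q(X^{\perp}_Z, R, Y, Z) = \Pstar(R\cond Z)\,\Pstar(Z)\,\Pstar(X^{\perp}_Z \cond Y)\,\Pstar(Y)$. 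Summing over $Z$ produces $\Q(X^{\perp}_Z, R, Y) = \Pstar(R)\,\Pstar(X^{\perp}_Z\cond Y)\,\Pstar(Y)$, which factorises as $\Pstar(R)\,\Q(X^{\perp}_Z, Y)$. This gives $R \indep_\Q \{Y, X^{\perp}_Z\}$, and in particular $Y \indep_\Q R \cond X^{\perp}_Z$. Since $X$ is measurable w.r.t. $\sigma(X^{\perp}_Z, R)$, it follows that $\E_\Q[Y\cond X] = \E_\Q[Y\cond X^{\perp}_Z, R] = \E_\Q[Y\cond X^{\perp}_Z]$, which is Definition~\ref{def:suff_stats}.

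With both hypotheses verified in $\Q$, Proposition~\ref{prop:suff_stats_risk-invariance} immediately gives risk-invariance and optimality w.r.t.\ $\mathcal{P}$. The main obstacle is purely bookkeeping: one has to be careful about which factorisations hold in $\Pstar$ versus $\Q$ and to invoke Assumption~\ref{assumption:covariates} to pass between statements about $X$ and statements about the triple $(X^{\perp}_Z, X^{\perp}_Y, X_{Y \wedge Z})$; no deeper obstacle arises because the reweighting factor depends only on $(Y,Z)$, so conditional independencies that do not condition on both $Y$ and $Z$ simultaneously transfer cleanly between $\Pstar$ and $\Q$.
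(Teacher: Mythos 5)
Your proposal is correct and follows essentially the same route as the paper: reduce to Proposition~\ref{prop:suff_stats_risk-invariance} by factoring $\Pstar$ using the two conditional independences, multiplying by the balancing weight $\Pstar(Y)\Pstar(Z)/\Pstar(Y,Z)$, and marginalising over $Z$ to conclude $Y \indep_\Q R \cond X^{\perp}_Z$ and hence sufficiency of $X^{\perp}_Z$ in $\Q$. You are in fact slightly more careful than the paper's own proof, which only derives the sufficiency condition and implicitly relies on the (true, and proved elsewhere in the appendix) fact that $X^{\perp}_Z \indep_\Pstar Z \cond Y$ transfers to $\Q$ because the reweighting factor depends only on $(Y,Z)$ — a point you verify explicitly.
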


In general, we can expect that anti-causal tasks with purely spurious correlations will satisfy these conditions, as per their definition. However, this would not be the case for most causal tasks as $X^{\perp}_Z \centernot{\ind}_\Pstar Z \cond Y$. This result is in line with our findings in Section~\ref{sec:failure_modes}, as the MNIST data generated from the graph in Figure~\ref{fig:causal_graphs}(a) validates Corollary~\ref{prop:suff_stat_P}, but the Amazon reviews data generated from Figure~\ref{fig:causal_graphs}(b) does not.

It may be less obvious, but the conditions for a sufficient statistic are not met in Figures~\ref{fig:causal_graphs}(c,d) as $X_V \centernot{\ind}_\Pstar \{Y,X^{\perp}_Z\} \cond Z$ in the case of another factor of variation $V$, and $X_{Y \wedge Z} \centernot{\ind}_\Pstar \{Y,X^{\perp}_Z\} \cond Z$ in the case of entangled data. We hence see that when a causal graph of the application is available, Corollary~\ref{prop:suff_stat_P} can provide indicators on when data balancing might succeed or fail, with the caveat that it is not a necessary condition.

While Proposition \ref{prop:suff_stats_risk-invariance} and Corollary~\ref{prop:suff_stat_P} provide conditions on the data generating process, prior work \citep[e.g.][]{Carlini2017-tl,Hooker2020-fy} has demonstrated that the learning strategy also influences the model's fairness and robustness characteristics.

\begin{proposition}
Let $\hat f \in \mathcal F$ be some fitted model and $\epsilon > 0$. Assume that, for all $P',P'' \in \mathcal{P}$, we have $\left|\E_{P'}[Y \mid \hat f(X, Y)] - \E_{P'}[Y \mid X^{\perp}_Z] \right| \leq \frac{\epsilon}{2}$. Then $\hat f$ is $\epsilon$-risk invariant, meaning that 
\[
    \sup_{P', P''\in\mathcal P} R_{P'}(\hat f) - R_{P''}(\hat f) \leq \epsilon.
\]
\label{prop:dis_rep}
\end{proposition}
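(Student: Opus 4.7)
Proof plan. The plan is to first identify a structural invariance that the correlation-shift family $\mathcal P$ enforces on $(Y,X^{\perp}_Z)$, then use the hypothesis to transfer the risk of $\hat f$ onto an invariant anchor defined on $X^{\perp}_Z$. Under $\mathcal P=\{P'(X,Y,Z)=\Pstar(X\mid Y,Z)\,P'(Z\mid Y)\,\Pstar(Y)\}$, Assumption~\ref{assumption:covariates} implies that $X^{\perp}_Z$ has no causal paths to or from $Z$, and therefore $X^{\perp}_Z\ind_{\Pstar} Z\mid Y$. Combined with $P'(Y)=\Pstar(Y)$, this gives $P'(Y,X^{\perp}_Z)=\Pstar(Y,X^{\perp}_Z)$ for every $P'\in\mathcal P$, so $h(X^{\perp}_Z):=\E_{P'}[Y\mid X^{\perp}_Z]$ is a single $P'$-independent function and $R_h:=\E_{P'}[(Y-h(X^{\perp}_Z))^2]$ is a $P'$-independent constant.

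The remaining steps are as follows. First, I would decompose the squared-error risk by the tower property at the sigma-algebra $\sigma(\hat f(X))$: writing $g_{P'}(\hat f(X)):=\E_{P'}[Y\mid \hat f(X)]$, the tower identity gives $R_{P'}(\hat f)=\E_{P'}[(g_{P'}(\hat f(X))-Y)^2]+\E_{P'}[(\hat f(X)-g_{P'}(\hat f(X)))^2]$, isolating the $Y$-interacting portion into the Bayes-plugin risk of the representation $\hat f$. Second, I would expand $(a-Y)^2-(b-Y)^2=(a-b)(a+b-2Y)$ with $a=g_{P'}(\hat f(X))$ and $b=h(X^{\perp}_Z)$; the hypothesis gives $|a-b|\leq\epsilon/2$, and bounding $|a+b-2Y|$ by a constant (e.g.\ $2$ when $Y,a,b\in[0,1]$) yields $|\E_{P'}[(g_{P'}(\hat f(X))-Y)^2]-R_h|\leq\epsilon/2$. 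Third, I would conclude via the triangle inequality through the common invariant anchor $R_h$, which gives $|R_{P'}(\hat f)-R_{P''}(\hat f)|\leq\epsilon$.

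The main obstacle will be the second summand $\E_{P'}[(\hat f(X)-g_{P'}(\hat f(X)))^2]$ in the decomposition: it measures the deviation of $\hat f(X)$ from its own Bayes postprocessor under $P'$ and is not directly controlled by the hypothesis. The cleanest resolution, which I would adopt, is to read $R_{P'}(\hat f)$ as the Bayes-plugin risk $\E_{P'}[(g_{P'}(\hat f(X))-Y)^2]$ attached to the representation $\hat f$, which is precisely the object the hypothesis on $\E_{P'}[Y\mid\hat f(X,Y)]$ is tailored to control; otherwise a mild calibration-style side condition on $\hat f$ would be needed to push this residual to an invariant constant. Either way, the key algebraic step is the $(a-b)(a+b-2Y)$ identity combined with the triangle inequality through $R_h$.
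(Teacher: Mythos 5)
Your proof follows the same skeleton as the paper's: a triangle inequality through an invariant anchor whose risk is constant over $\mathcal P$. The paper's version is three lines long — it takes $f^*$ as the anchor, asserts that $f^*$ is risk-invariant "by definition," asserts that the hypothesis gives $|R_{P'}(\hat f)-R_{P'}(f^*)|\leq \epsilon/2$, and concludes by the triangle inequality. What you do differently is expand precisely the step the paper leaves implicit: you justify why the anchor's risk is invariant (via $P'(Y,X^{\perp}_Z)=\Pstar(Y,X^{\perp}_Z)$), and you attempt to convert the conditional-expectation hypothesis into a risk bound via the tower decomposition at $\sigma(\hat f(X))$ and the identity $(a-Y)^2-(b-Y)^2=(a-b)(a+b-2Y)$. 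In doing so you surface a genuine issue that the paper's proof silently skips over: the hypothesis controls $\E_{P'}[Y\mid \hat f(X)]$, not $\hat f(X)$ itself, so the miscalibration residual $\E_{P'}[(\hat f(X)-\E_{P'}[Y\mid \hat f(X)])^2]$ is uncontrolled and need not be invariant across $\mathcal P$ (it depends on the distribution of $X^{\perp}_Y$, which shifts with $P'(Z\cond Y)$). The paper offers no resolution of this either; your two proposed fixes (reinterpreting $R_{P'}(\hat f)$ as the Bayes-plugin risk of the representation, or adding a calibration condition) are reasonable repairs, but note that both amount to strengthening the proposition rather than proving it as stated.

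One caveat on your anchor step: you derive $X^{\perp}_Z\ind_{\Pstar} Z\cond Y$ from the absence of causal paths between $X^{\perp}_Z$ and $Z$, but that inference is not valid in general — a non-causal path through a collider at $Y$ (as in the causal task of Figure 1(b), where $X^{\perp}_Z\rightarrow Y\leftarrow U\rightarrow Z$) creates exactly this conditional dependence. So your anchor's invariance requires an extra hypothesis that the proposition does not state. The paper's proof has the mirror-image defect: it imports the risk-invariance of $f^*$ from Proposition \ref{prop:suff_stats_risk-invariance} without restating that proposition's conditions. Neither proof is self-contained on this point, but you should flag the assumption explicitly rather than deriving it from Assumption \ref{assumption:covariates} alone.
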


Proposition \ref{prop:dis_rep} states that the learned function $\hat f$ needs to be nearly optimal over $\mathcal{P}$. This statement, while straightforward, implies that (i) $\hat f(X)$ needs to preserve all the information about the expectation of $Y$ in $X_Z^\perp$, and that (ii) $\hat f(X)$ changes with $X_Y^\perp$ or $X_{Y \wedge Z}$ only marginally. Let's rewrite $\hat f(X)= h(\phi(X))$, where $h$ is `simple' function and $\phi(X)$ is a model representation. This case could correspond to the last layers of a neural network or when learning a model based on a representation $\phi(X)$ (e.g. embeddings, transfer learning). Based on Proposition~\ref{prop:dis_rep}, $\phi(X)$ must be disentangled in the sense that the simple function $h$ eliminates any dependence on $X_Y^\perp$ or $X_{Y \wedge Z}$. For example, if $h$ is a linear function, it must be possible to linearly project out all dependence on $X_Y^\perp$ and $X_{Y \wedge Z}$.
We note that such a representation can be obtained even if the data is entangled, e.g. by dropping modes of variation during training. Unlike other strategies \citep{Arjovsky2019-nn,makar22a,puli2022}, data balancing cannot enforce this property on its own and a disentangled representation would be necessary. This condition hence suggests another failure mode of data balancing when the conditions on the data are validated, but the representation is of low quality. We believe this failure mode is displayed in \citet{Kirichenko2022}, as the success of their data balancing mitigation only holds when using models pre-trained on large datasets.

In this section, we have identified conditions for data balancing to be successful. In the next section, we go one step further to understand how data balancing impacts the data generating process, and how it interacts with other mitigation strategies for undesired dependencies, focusing on regularization.

\section{Impact of data balancing on the CBN}
\label{sec:causal_data_balancing}

Joint data balancing is assumed to \emph{remove} statistical dependence between $Y$ and $Z$ while keeping other relationships in the CBN of the task unaffected \citep[e.g. ][]{makar22a,wu2023,compton2023}. This could be interpreted as `dropping' edges in the undesired paths in $\graph$, e.g. removing the influence of $U$ on $Y$ and/or $Z$ in Figure \ref{fig:causal_graphs}(a), leading to a new graph $\graphi$.
While this interpretation is correct for joint balancing in the case of Figure \ref{fig:causal_graphs}(a), Proposition \ref{prop:arrow_dropped} below (proof in Appendix \ref{app:balancing_causal}) shows that it can be erroneous in general: the distribution $\Q$ underlying the balanced data might not factorize according to $\graphi$ and therefore might not obey the statistical dependence relationships implied by $\graphi$. Therefore, balancing data to make $Z$ and $Y$ statistically independent, i.e.\ selecting samples in proportion to $\Pstar(Z)\Pstar(Y)/\Pstar(Z,Y)$, is not equivalent to generating data from a distribution that factorises according to $\graphi$ in general.
This factorization is important because downstream distributions $P'(X, Y, Z)$ are often assumed to follow this factorization; in fact, this assumption underlies a number recommendations for applying regularization methodologies such as in \citep{veitch2021}.

\begin{proposition}
     Let  $\langle  \graph, \Pstar \rangle$ be the CBN underlying the data, where $\graph$ contains an undesired path between $Z$ and $Y$, and let $\graphi$ be a modification of $\graph$ in which the undesired path has been removed. The distribution $\Q$ obtained by jointly balancing the data need not factorize according to $\graphi$.
 \label{prop:arrow_dropped}
 \end{proposition}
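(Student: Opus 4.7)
The plan is to prove Proposition~\ref{prop:arrow_dropped} by counterexample. Since the claim is only that $\Q$ \emph{need not} factorize according to $\graphi$, it suffices to exhibit one CBN and one $\graphi$ for which factorization fails. I would use a minimal version of Figure~\ref{fig:causal_graphs}(c): a single unobserved confounder $V$ with $V \to Y$ and $V \to Z$ (so the undesired non-causal path is the fork $Y \leftarrow V \to Z$), together with a single observed descendant $X_V$ with $V \to X_V$. The graph $\graphi$ is then obtained by deleting one of $V \to Y$ or $V \to Z$; in either case $\graphi$ implies a d-separation statement binding $X_V$ to only one of $\{Y, Z\}$, and in particular $X_V \ind Z$ or $X_V \ind Y$ must hold for any distribution that factorizes according to $\graphi$.

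First I would write $\Pstar(V,Y,Z,X_V) = \Pstar(V)\Pstar(Y\cond V)\Pstar(Z\cond V)\Pstar(X_V\cond V)$ and, applying Definition~\ref{def:joint_balancing}, derive $\Q(Y,Z,X_V) = \frac{\Pstar(Y)\Pstar(Z)}{\Pstar(Y,Z)} \sum_{V} \Pstar(V)\Pstar(Y\cond V)\Pstar(Z\cond V)\Pstar(X_V\cond V)$. By construction $Y \ind_\Q Z$, but balancing leaves $\Pstar(X\cond Y,Z)$ unchanged, so the dependence of $X_V$ on $Y$ and on $Z$ induced by the confounder survives in $\Q$. Second, I would instantiate the construction with concrete binary parameters — e.g., $V \sim \mathrm{Bern}(1/2)$, $\Pr(Y{=}1\cond V) = \Pr(Z{=}1\cond V) = 0.9V + 0.1(1-V)$, and $X_V = V$ deterministically — and compute $\Q(X_V, Z)$, $\Q(X_V)$, $\Q(Z)$ to show $\Q(X_V{=}1, Z{=}1) \neq \Q(X_V{=}1)\Q(Z{=}1)$. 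This gives $X_V \nind_\Q Z$ and, by symmetry, $X_V \nind_\Q Y$, contradicting the conditional independence required by $\graphi$ no matter which edge of the fork is removed.

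The main subtlety is arguing that a single instantiation suffices, and ensuring the chosen parameters yield strict inequality rather than accidental cancellation; the latter is easy to check by direct computation given the asymmetric $(0.9, 0.1)$ choice. The deeper takeaway, which I would state explicitly to explain why Figure~\ref{fig:causal_graphs}(a) is \emph{not} itself a counterexample, is that joint balancing only forces the bivariate marginal over $(Y,Z)$ to be a product while preserving every conditional $\Pstar(X\cond Y,Z)$; edge deletion, by contrast, severs every path through the removed edge and in particular decouples $X_V$ from the variable downstream of that edge. The two operations therefore coincide only in graphs where the confounder has no additional observed children (as in Figure~\ref{fig:causal_graphs}(a)), and diverge as soon as any such child exists.
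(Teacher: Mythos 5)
Your proposal is correct and follows essentially the same strategy as the paper's proof: both argue by counterexample, deriving $\Q(X \cond Z) = \sum_Y \Pstar(X \cond Y, Z)\Pstar(Y)$ and observing that this remains a function of $Z$ whenever $X \nind_\Pstar Z \cond Y$, which contradicts an independence that any distribution factorizing according to $\graphi$ must satisfy. The paper works through four such graphs (causal and anti-causal variants, e.g.\ $\{X\rightarrow Y,\; Z\leftarrow U \rightarrow Y\}$) and leaves the existence of a suitable $\Pstar$ implicit, whereas you use a single confounder-with-observed-child graph and add an explicit numerical instantiation; both are valid.
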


Proposition \ref{prop:arrow_dropped} shows that statistical (in)dependencies that we assumed would remain fixed (i.e. the black edges on the graph) can be modified by the process of joint balancing. As a consequence, further interventions on $\Q$ (e.g. the addition of a regularizer) should not be motivated by $\graphi$, and we show below that combining data balancing with other mitigation strategies can lead to unexpected results.

\subsection{Data balancing can hinder regularization and vice-versa}
\label{sec:reg}

\begin{figure*}[t]
\centering
\small
\begin{subfigure}[c]{0.85\linewidth}
\includegraphics[width=0.99\linewidth]{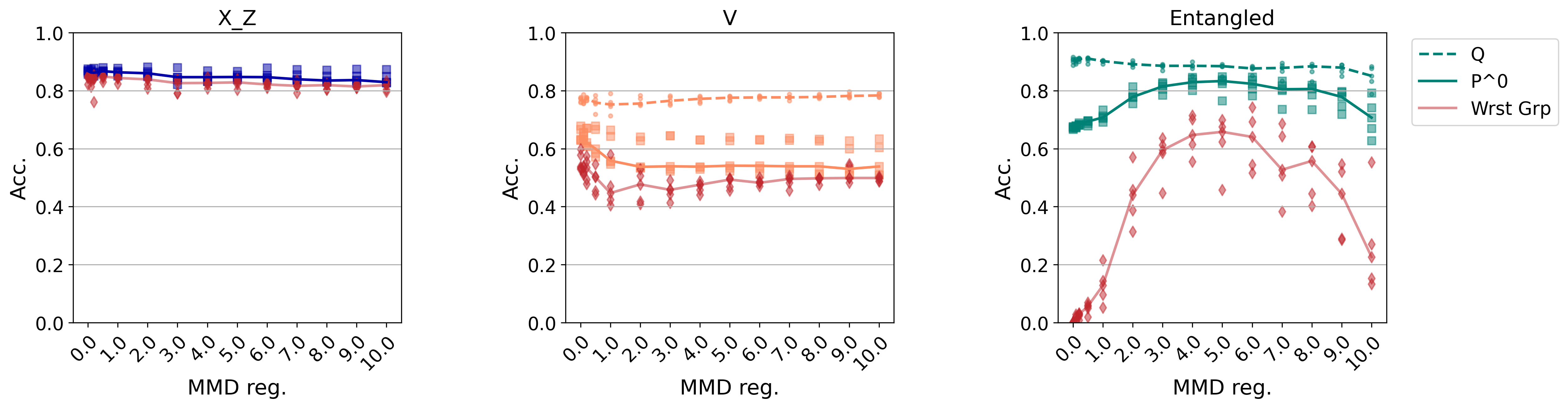}
\end{subfigure}
\caption{Accuracy across different values of the MMD hyper-parameter for models trained on balanced data and evaluated on their respective training distribution (dashed) and $\Pideal$ (solid line) averaged across replicates. We consider anti-causal tasks: (left) purely spurious case, (middle) when another confounder $V$ is present, and (right) the entangled dataset. Worst group performance on $\Pideal$ is displayed in red. Markers display individual replicates.}
\label{fig:mnist_mmd}
\end{figure*}

When confronted with a failure mode, it is reasonable to ask whether an additional fairness or robustness regularizer on the training loss might be beneficial. Based on Proposition \ref{prop:arrow_dropped}, we see that this question might have a different answer if we are in $\Pstar$ or in $\Q$. Below, we consider each failure mode and ask whether performing an additional regularization motivated by the literature would mitigate the undesired dependencies in $\Q$. The results are summarized in Table~\ref{fig:causal_graphs}, with suggested next steps. In Appendix \ref{app:fairness_with_balancing_reg}, we discuss when balancing with regularization is sufficient for different fairness criteria.

\noindent\textbf{Anti-causal task.} In the case of an anti-causal task with a dependence between $Y$ and $Z$ (Figures \ref{fig:causal_graphs}(a,c,d)), \citet{veitch2021} recommend to impose an independence between $f(X)$ and $Z$ conditioned on $Y$. If we consider both the purely spurious correlation and the entangled case, we see that regularization and data balancing would have the same effects of blocking any dependence between $\{Y, X^{\perp}_Z\}$ and $\{Z, X^{\perp}_Y, X_{Y \wedge Z}\}$. We demonstrate that $X^{\perp}_Z \ind Z \cond Y$ in both $\Pstar$ and $\Q$ (see Appendix \ref{app:invariance_balancing_reg}), and this regularization is sensible under both distributions. This means that performing the regularization provides the sufficient conditions for a risk-invariant model, whether or not joint data balancing is performed. In theory, data balancing is not needed but is also not harmful. In the case of an added confounder, we have that $X_V$ depends on both $Y$ and $Z$ due to non-causal paths through $V$. Therefore, imposing that $f(X) \ind_\Q Z \mid Y$ might lead to results whereby the model only depends on $V$ or is trivial (e.g.\ predicts a constant) as the regularization encourages the removal of any dependence on $Z$, which is related to $Y$ via $X_V$. This behavior would be observed in both $\Pstar$ and $\Q$, but data balancing on its own might be less detrimental than regularization in terms of predictive power even though it does not resolve all undesired dependencies. In this case, regularization hinders data balancing.

Based on the balanced data from Section \ref{sec:failure_modes}, we add a conditional Maximum Mean Discrepancy \citep[MMD, ][]{Gretton2012} to encourage $f(X) \ind_\Q Z \cond Y$ during training, varying the strength of this regularizer via a hyper-parameter. In the case of the purely spurious statistical dependence between $Y$ and $Z$ (Figure~\ref{fig:causal_graphs}(a)), there is little variation between the metrics across MMD strengths, and the model is fair and robust (Figure~\ref{fig:mnist_mmd}(left)). In the entangled case (Figure~\ref{fig:mnist_mmd}(right)), the model's performance on $\Q$ and $\Pideal$ are close for medium values of the hyper-parameter (before MMD overpowers the training) and worst group performance improves markedly. This result suggests that, with the added regularizer, $f$ only varies with $X^{\perp}_Z)$. Performing the same regularization in the presence of another confounder (Figure~\ref{fig:mnist_mmd}(middle))  leads to a plateau in performance on $\Q$, but low performance on $\Pideal$ and chance-level worst group performance. In this case, we posit that the model relies exclusively on $X_V$ for its predictions, and the regularizer is detrimental compared to data balancing on its own (MMD=0 on the plot).

\begin{figure*}[!t]
\centering
\small
\begin{subfigure}[c]{0.32\linewidth}
(a) Trained on $\Pstar$
\end{subfigure}
\begin{subfigure}[c]{0.32\linewidth}
(b) Trained on $\Q$
\end{subfigure}
\begin{subfigure}[c]{0.32\linewidth}
(c) MMD=16
\end{subfigure}\\
\begin{subfigure}[c]{0.32\linewidth}
\includegraphics[width=0.99\linewidth]{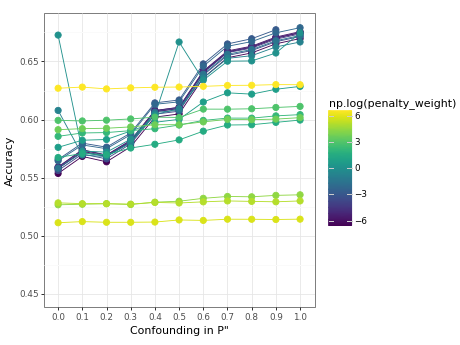}
\end{subfigure}
\begin{subfigure}[c]{0.32\linewidth}
\includegraphics[width=0.99\linewidth]{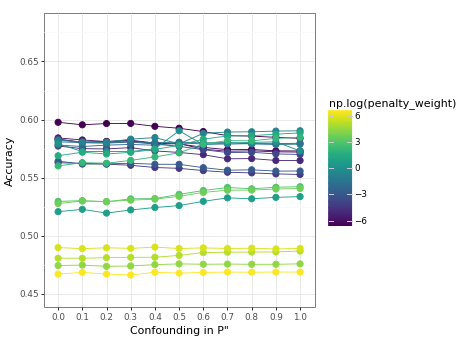}
\end{subfigure}
\begin{subfigure}[c]{0.32\linewidth}
\includegraphics[width=0.8\linewidth]{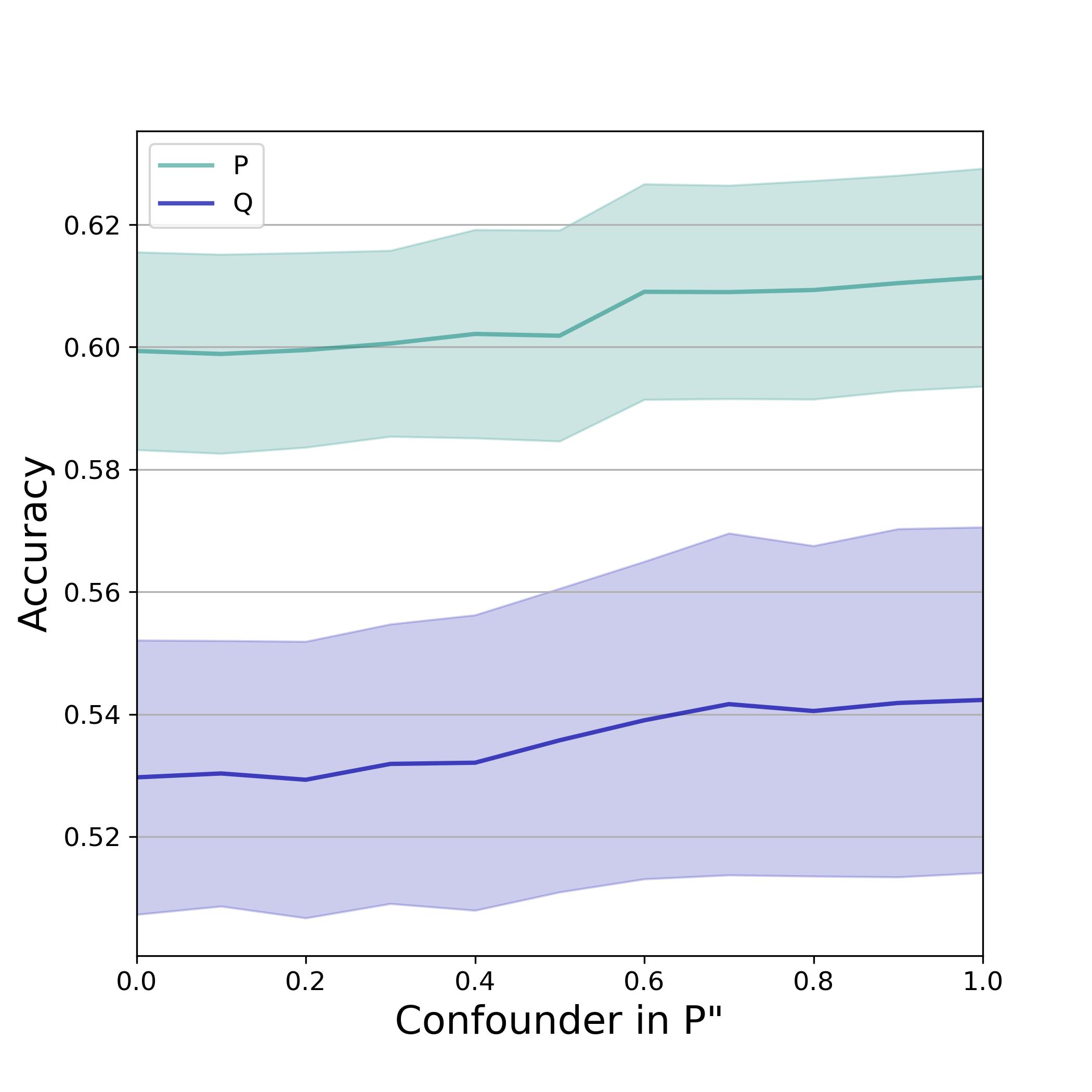}
\end{subfigure}\\
\caption{Accuracy across different values of the confounder strength (i.e. different $P' \in \mathcal{P}$), for each value of MMD regularization considered (displayed by the color gradient). (a) Models trained on $\Pstar$. (b) Models trained on $\Q$. Results are averaged across seeds for clarity. Notice the different y-scales. (c) Displays the mean and standard deviation across seeds for MMD=16.}
\label{fig:amazon_mmd}
\end{figure*}

\noindent\textbf{Causal task.}
Finally, let us consider the causal task in Figure~\ref{fig:causal_graphs}(b). In a similar case, \citet{veitch2021} suggests a regularizer such that $f(X) \ind_{\Pstar} Z$, which would encourage the model $f(X)$ to vary only with $X^{\perp}_Z$ as $X^\perp_Z \ind_\Pstar Z$. However, data balancing induces a dependence between $X^{\perp}_Z$ and $Z$, as expressed below:

{\centering
  $ \displaystyle
    \begin{aligned}
        \Q(X^\perp_Z \cond Z) = \frac{\sum_{X^\perp_Y, Y} \Pstar(X^\perp_Z, X^\perp_Y \cond Z, Y)\Pstar(Z)\Pstar(Y)}{\sum_{X^\perp_Y, X^\perp_Z, Y}\Pstar(X^\perp_Z, X^\perp_Y \cond Z, Y)\Pstar(Z)\Pstar(Y)} = \sum_{Y} \Pstar(X^\perp_Z \cond Z, Y)\Pstar(Y)
    \end{aligned} ,$
\par}
The RHS cannot be simplified further because $X^\perp_Z \not \ind_\Pstar Z \mid Y$, because $Y$ is a collider under $\Pstar$. Thus, the left hand side is a function of $Z$ in general (see Appendix \ref{app:invariance_balancing_reg} for further details and a numerical simulation). In this case, regularizing to enforce $f(X) \ind_\Q Z$ would destroy information in $X^\perp_Z$, whereas the same regularization under $\Pstar$ would have enabled $f(X)$ to use all of the information in $X^\perp_Z$. Therefore, data balancing may hinder regularization.

We illustrate this result on the Amazon reviews dataset from Section~\ref{sec:failure_modes} by imposing a marginal MMD regularization $f(X) \ind Z$ during training and evaluating risk-invariance across multiple $P' \in \mathcal{P}$. When training on $\Pstar$, we observe that the regularization allows to 'flatten' the curve, such that from medium to high values of MMD regularization, the model is risk-invariant (Figure \ref{fig:amazon_mmd}(a)). On the jointly balanced data, medium values of the regularization degrade risk-invariance (see green curves on Figure \ref{fig:amazon_mmd}(b)). Overall, model performance is also lower for the models trained on $\Q$ compared to models trained on $\Pstar$ across test sets from $P' \in \mathcal{P}$, at similar levels of regularization (see Figure~\ref{fig:amazon_mmd}(c) for MMD=16). This result displays that $X^\perp_Z$ is not a sufficient statistic for $Y$ in $\Q$.

\section{Case study: distinguishing between failure modes in CelebA}
\label{sec:CelebA}
In this section, we show that when $Y$ and $Z$ are available at training time, we can try to distinguish between failure modes of data balancing by using our different observations, even in the absence of a full causal graph. We illustrate this using the benchmark task of detecting blond hair in pictures of celebrities in the CelebA \citep{Liu2015} dataset. This label has a strong correlation with perceived gender: half of the non-males have blond hair, while only $\sim 7\%$ of males do. We consider a balanced, subsampled dataset (train: $n=4,096$, test/valid: $n=400$) and the original, confounded dataset. We train a VGG \citep{Simonyan15} and four Vision Transformer \citep[ViT,][]{Dosovitskiy2020} architectures, with number of parameters ranging from 17 to 690 millions. 

We observe that, while training with balanced data leads to higher worst group accuracy and lower equalized odds scores than training with the historical data (Table~\ref{tab:celebA}), an important gap remains between the overall and worst group performances. These results show that data balancing leads to improvements in downstream fairness and robustness metrics, but does not provide a risk-invariant or fair model on its own. Therefore, it is likely that one of the conditions for data balancing to be sufficient is not fulfilled and understanding which condition is violated can guide our selection of another technique.

\noindent\textbf{Distinguishing between failure modes.} We first assume that the task is anti-causal. We then aim to understand whether there is another confounder, the data is entangled, or the representation is entangled (Proposition \ref{prop:dis_rep}). As per \citet{Kirichenko2022}, we first attempt to improve our representation by pre-training the VGG with ImageNet \citep{Deng2009}. While we observe an increase in performance with pre-training, there is no clear decrease in equalized odds. This result suggests that the failure may lie elsewhere. We then train models with MMD on $\Pstar$, with the expectation that we would observe a plateau for entangled data when the model learns $f(X^{\perp}_Z)$, or a stark decrease in worst group performance in the presence of another confounder. While there is no major pattern of correlation between $Y$ and another attribute in the balanced data (see Appendix \ref{app:sec_celebA_failures}), small effects might combine, or there might be other, unobserved attributes that influence $Y$. For a medium value of the regularization hyper-parameter, the model displays a plateau in performance and poor worst group performance. This result suggests an effect of another confounder and next steps can include methods such as \citet{Alabdulmohsin2024}, which controls for all (observed) auxiliary factors of variation.

\begin{table*}[t]
\centering
\begin{minipage}[l]{0.64\linewidth}
\caption{VGG model performance on CelebA, when trained on $\Pstar$, on $\Q$, with ImageNet pre-training (`Pre-trained') on $\Q$, with MMD (`MMD') on $\Pstar$ with regularizer=5. All models are evaluated on $\Q$.}
\resizebox{0.99\textwidth}{!}{
\begin{tabular}{lrrrr}
\toprule
 Model & Acc. ($\uparrow$) &  Worst Grp ($\uparrow$) &  Encoding ($\sim 0.5$) &  Equ. Odds ($\downarrow$)\\
\midrule
 ERM on $\Pstar$ &   $ 0.791 \pm  0.037$ &   $0.314 \pm 0.093$ &  $0.868 \pm 0.015$ &  $0.243 \pm 0.036$ \\
ERM on $\Q$ &   $ 0.839 \pm 0.022$ &    $0.674 \pm 0.088$ & $0.709 \pm 0.066$ &  $0.125 \pm  0.022$\\
 \hline
  Pre-trained on $\Q$ &   $ 0.874 \pm  0.006$ &   $0.726 \pm 0.037$ &  $0.740 \pm 0.033$ &  $0.111 \pm 0.010$ \\
  MMD on $\Pstar$ &   $ 0.813 \pm 0.036$ &    $0.146 \pm 0.172$ & $0.630 \pm 0.010$ &  $0.001 \pm  0.002$\\
\bottomrule
\end{tabular}}
\label{tab:celebA}
\end{minipage}
\begin{minipage}[c]{0.34\linewidth}
\begin{center}
\includegraphics[width=0.99\textwidth]{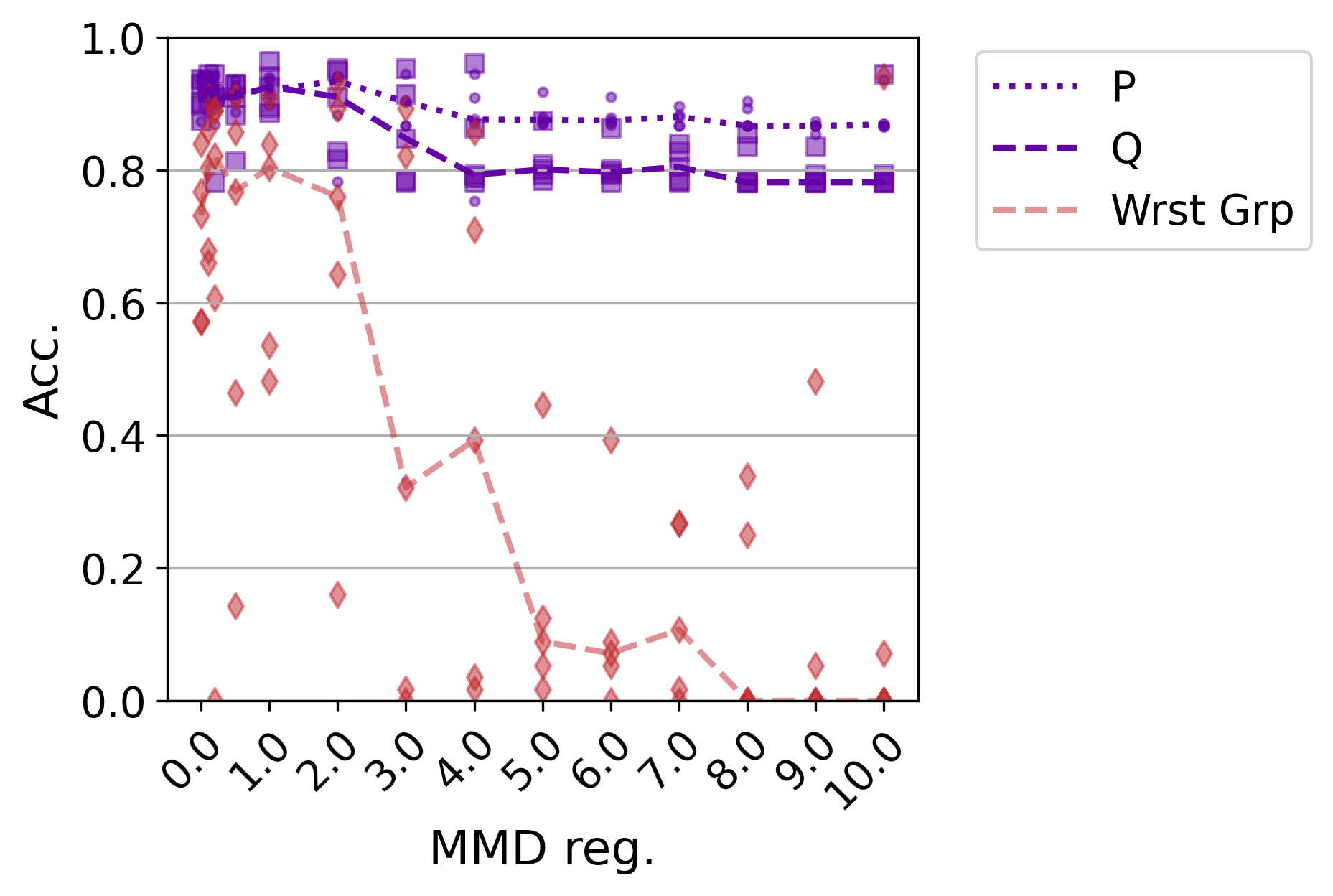} 
\end{center}
\captionof{figure}{Model performance on test sets sampled from $\Pstar$ (dotted) and $\Q$ (dashed). The model is trained on $\Pstar$ with regularization $f(X) \ind Z \mid Y$.}
\label{fig:celebA_mmd_p}
\end{minipage}
\hfill
\end{table*}


\section{Related works}


\noindent\textbf{Balanced data as mitigation for invariant models.} 
Our results extend those of \citet{makar22a} which considered a single causal graph. \citet{Wang2018} displayed that balancing data did not lead to a reduction in bias amplification. The authors posit that this failure of balanced data to correct for spurious signals is due to unobserved confounding factors which is confirmed in \citet{Alabdulmohsin2024}. \citet{rolf2021} investigated upsampling by relying on a scaling law per group, focusing on the question of fairness vs performance trade-off \citep{dutta2020}. Focusing on causal NLP settings, \citet{joshi2022} investigated causal and non-causal features, concluding that data balancing does not help in all cases. Closer to our work is that of \citet{puli2022}, in which the authors showed that having $Y \ind_\Q Z$ does not imply that $Y \ind_\Q Z \cond X$ and the model can learn signals related to $Z$. \citet{puli2022} propose a method to learn a representation $r$ such that $Y \ind Z \cond r(X)$. Our work provides a framework to understand these different failure modes and proposes strategies to distinguish between them. While we focus on pre-processing mitigation with a fixed distribution $\Q(X,Y,Z)$, another line of work considers dynamic resampling in-processing \citep[e.g. ][]{jiang20a,roh2021fairbatch,chen2023}. As the resampling converges towards a fixed distribution $P’(Z|Y)$, we would expect failure modes in the presence of entangled data or of another confounder. Nevertheless, the variation in $P’(Z|Y)$ at the early stages of training might be beneficial, e.g. by disentangling the representation. We leave this investigation for future work.

\noindent\textbf{Causal feature selection.} Some works have used a causal framing to select features such that $f(X)$ has robustness and/or fairness properties \citep[e.g. ][]{Magliacane2018-oh,Subbaswamy2018,Singh2021,Galhotra2022,Schrouff22,kaur2023modeling}. Similarly, our work defines independence conditions on covariates to obtain an optimal, invariant model, and can be used to select features. Two major distinctions between feature selection works and ours reside in the fact that we consider the case in which we do not observe $X^{\perp}_Z$ explicitly and that we investigate the impact of data balancing. 

\section{Discussion}

In this work, we uncover important results to guide the use of data balancing for mitigating undesired dependencies between covariates, outcomes and auxiliary factors of variation. We first show (Section \ref{sec:failure_modes}) that joint data balancing might not achieve the desired fairness or robustness criteria, and that the failures may seem difficult to predict. Motivated by these results, we introduce conditions under which data balancing leads to a robust or fair model (Sections \ref{sec:databalancing}, \ref{app:suff_fairness}). Importantly, we show that data balancing is not equivalent to `dropping an edge' in the causal graph and can lead to distributions that do not factorize according to the desired graph (Section \ref{sec:causal_data_balancing}). This can have downstream consequences if further mitigation strategies are motivated by the causal graph and highlights why regularization and data balancing might not go `hand in hand'. This last result shows that data balancing should not be performed as a `default', and mitigation strategies should be based on the causal graph of the application. Finally, even in the absence of a causal graph, our findings may help to pinpoint which condition(s) are not fulfilled, and guide further mitigation (Section \ref{sec:CelebA}). 


\noindent\textbf{Limitations. }The conditions defined in Section \ref{sec:databalancing} for risk-invariance depend on the expression of $\mathcal{P}$ as a correlation shift \citep{makar22a,roh2023}. Other expressions or shifts are likely to lead to other conditions. In our experiments, we have mostly subsampled datasets to obtain balanced distributions. We would expect similar results for other joint balancing methods. Variations are, however, possible due to the finite-set nature of the computations \citep{makar22a}, e.g.\ with reweighting displaying more variance \citep{Idrissi2022}, potentially under-performing in overparametrized settings \citep{celis18,sagawa20b}. We also note that, while we aimed to provide upper bounds for the effectiveness of data balancing, we did not use additional training strategies for mitigation beyond regularization. We believe that our causal framework can be a useful tool to analyze other pre- or in-processing methods that enforce independence between variables in the data generating process \citep[e.g. ][]{alabdulmohsin2021,puli2022}. On the other hand, our framework might not be suited to analyze the effects of other mitigation strategies, e.g. hyper-parameter optimization \citep{perrone2021fair}. We discuss the broader societal impacts of our work in Appendix~\ref{app:broader_impact}.

\noindent\textbf{Future work. }This work considered a variety of causal graphs in order to provide general insights rather than task-specific conditions. However, investigating specific graphs could enable to leverage further strategies including other balancing techniques \citep[e.g.][]{kaur2023modeling,sun2023}. We believe that our causal framing could then be a useful resource to analyze the effect of these strategies on downstream fairness and robustness criteria. 
Finally, we illustrate our propositions with binary classification tasks and confounders. While our reasoning applies to more complex settings, there might be further considerations to account for when generalizing beyond binary variables, especially with respect to estimation.



\begin{ack}
We thank Virginia Aglietti for feedback on this work and Victor Veitch for sharing experimental code for the Amazon reviews experiments. This work was funded by Google DeepMind.
\end{ack}
\bibliographystyle{neurips_2024_nomonth}
\bibliography{neurips_2024}

\newpage
\appendix
\onecolumn

\section{Failure modes of data balancing}

\subsection{Failure mode: Balancing on one variable can increase bias}
\label{appendix:data_balancing_failure}

It is common to consider balancing on classes or groups as it requires fewer labels than joint balancing. However, without further intervention, class or group balancing on its own does not provide an invariant model when $Y$ and $Z$ are marginally dependent \citep[e.g.][]{labonte2023}. In Figure \ref{fig:causal_graphs}(a), this means that $X^{\perp}_Z \nind_\Q Z \cond Y$, invalidating Prop.\ref{prop:suff_stats_risk-invariance}. Below, we formalize the observation in \citet{Yan2020} that balancing on one variable might affect the representation of the other, and provide bounds on the impact of this strategy.

\paragraph{Formalization and proof.} 
We formalize this issue in Proposition~\ref{prop:fm1} for the binary case with a binary attribute.

\begin{proposition}
    Consider data balancing of $Y$; the marginal of $Z$ will be farther from uniform than the marginal of $Z$ before balancing if
    \[
    \mathrm{sgn}\left(
    \frac{\Pstar(Z=1) - \frac{1}{2}}{ \Pstar(Y=1) - \frac{1}{2}}\right)
    = \mathrm{sgn}\left(\E[Z|Y=0] - \E[Z|Y=1]\right).
    \]
    \label{prop:fm1}
\end{proposition}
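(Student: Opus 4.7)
The plan is to reduce the claim to a one-line identity describing how the marginal of $Z$ shifts under $Y$-balancing, followed by an elementary sign argument. First, I would introduce the shorthand $p = \Pstar(Y=1)$, $a = \E[Z \mid Y=0]$, and $b = \E[Z \mid Y=1]$. Balancing $Y$ produces a distribution $Q$ that is uniform on $Y$ but preserves the $Z$-conditionals, i.e.\ $Q(Y=0) = Q(Y=1) = 1/2$ and $Q(Z \mid Y) = \Pstar(Z \mid Y)$, since reweighting the $Y$-marginal does not touch $\Pstar(Z \mid Y)$. Expanding the two marginals then gives $\Pstar(Z=1) = (1-p)a + pb$ and $Q(Z=1) = (a+b)/2$, from which the key identity follows by a short algebraic manipulation: $Q(Z=1) - \Pstar(Z=1) = (p - 1/2)(a - b)$.

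Next, I would translate "farther from uniform" into the inequality $|Q(Z=1) - 1/2| > |\Pstar(Z=1) - 1/2|$ and note that a clean sufficient condition for it is that the shift $Q(Z=1) - \Pstar(Z=1)$ points in the same direction as the existing offset $\Pstar(Z=1) - 1/2$: when the two have the same sign, $\Pstar(Z=1) - 1/2$ and $Q(Z=1) - 1/2$ lie on the same side of $0$ and the absolute value strictly grows by $|Q(Z=1) - \Pstar(Z=1)|$. Substituting the identity from the previous step gives $\mathrm{sgn}(Q(Z=1) - \Pstar(Z=1)) = \mathrm{sgn}(p - 1/2)\,\mathrm{sgn}(a - b)$, so the sufficient condition reads $\mathrm{sgn}(\Pstar(Z=1) - 1/2) = \mathrm{sgn}(\Pstar(Y=1) - 1/2)\,\mathrm{sgn}(\E[Z \mid Y=0] - \E[Z \mid Y=1])$, which rearranges into the form in the proposition after dividing by $\mathrm{sgn}(\Pstar(Y=1) - 1/2)$.

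The argument is short, and the only mild subtlety I anticipate is handling the degenerate cases in which $\Pstar(Y=1) = 1/2$ or $\E[Z \mid Y=0] = \E[Z \mid Y=1]$: in either case the shift $(p - 1/2)(a - b)$ vanishes and the $Z$-marginal is left unchanged, consistent with the convention that the sign-ratio in the hypothesis is read as well-defined (nonzero denominator and nontrivial numerator). The only real step is thus the derivation and sign-analysis of the one-line identity above, so I do not expect a deeper obstacle.
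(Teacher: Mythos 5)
Your proposal is correct and follows essentially the same route as the paper: both expand $\Pstar(Z=1)$ and the balanced marginal via the law of total expectation, reduce everything to the identity that the shift in the $Z$-marginal equals $\bigl(\Pstar(Y=1)-\tfrac{1}{2}\bigr)\bigl(\E[Z\cond Y=0]-\E[Z\cond Y=1]\bigr)$, and then argue by signs that the bias grows when this shift points in the same direction as the pre-existing offset $\Pstar(Z=1)-\tfrac{1}{2}$. Your packaging of the computation as a single displacement identity $Q(Z=1)-\Pstar(Z=1)=(p-\tfrac{1}{2})(a-b)$ is in fact slightly cleaner than the paper's two-case bookkeeping, which contains a sign slip in its case analysis that your version avoids.
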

Intuitively, if the biases of $Y$ and $Z$ are in the same (resp. opposite) direction, then this condition is satisfied if $Z$ has a negative (resp. positive) correlation with $Y$. For example, if we have $\Pstar(Y=1) = \frac{1}{4}$, $\E[Z|Y=1] = 1$ and $\E[Z|Y=0] = \frac{1}{3}$, then $\E[Z] = \frac{1}{2}$ before balancing but $\E[Z] = \frac{1}{3}$ after balancing.

\begin{proof}[Proof of Proposition~\ref{prop:fm1}.]
We assume that $Y$ and $Z$, representing the label and confounder, are both binary. We will data-balance on $Y$. Let $Z\cond S$ denote the distribution of $Z$ after data balancing. To characterize when the distribution of $Z\cond S$ is farther from uniform than the distribution of $Z$, we will first derive 
\[
    \E[Z] - \frac{1}{2} = \Pstar(Y=1)\left(\E[Z\cond Y=1] - \frac{1}{2}\right) + 
    \Pstar(Y=0)\left(\E[Z\cond Y=0] - \frac{1}{2}\right)
\]
and
\[
    \E[Z\cond S] - \frac{1}{2} = \frac{1}{2}\left(\E[Z\cond Y=1] - \frac{1}{2}\right) + 
    \frac{1}{2}\left(\E[Z\cond Y=0] - \frac{1}{2}\right).
\]
Now, taking the difference, we have
\begin{align*}
    \E[Z] - \frac{1}{2}
&=
    \E[Z\cond S] - \frac{1}{2}
    + 
    \left(\Pstar(Y=1) - \frac{1}{2}\right)\left(\E[Z\cond Y=1] - \frac{1}{2}\right) + 
    \left(\Pstar(Y=0) - \frac{1}{2}\right)\left(\E[Z\cond Y=0] - \frac{1}{2}\right)\\
 &=
    \E[Z\cond S] - \frac{1}{2}
    + 
    \left(\Pstar(Y=1) - \frac{1}{2}\right)\E[Z\cond Y=1]
    +\left(\Pstar(Y=0) - \frac{1}{2}\right)\E[Z\cond Y=0]\\
 &=
    \E[Z\cond S] - \frac{1}{2}
    + 
    \left(\Pstar(Y=1) - \frac{1}{2}\right)\left(\E[Z\cond Y=1] - \E[Z\cond Y=0]\right).
\end{align*}

We can derive some sufficient conditions for bias increase, which occurs when 
$|\E[Z\cond S] - \frac{1}{2}| \geq  |\E[Z] - \frac{1}{2}|$. We proceed by cases. If
$\E[Z] - \frac{1}{2} > 0$,
then
\begin{align*}
\E[Z\cond S] - \frac{1}{2} 
&=
\E[Z] - \frac{1}{2} + \left(\Pstar(Y=1) - \frac{1}{2}\right)\left(\E[Z\cond Y=1] - \E[Z\cond Y=0]\right)\\
&=
\left|\E[Z] - \frac{1}{2}\right| + \left(\Pstar(Y=1) - \frac{1}{2}\right)\left(\E[Z\cond Y=1] - \E[Z\cond Y=0]\right),
\end{align*}
so $\left|\E[Z\cond S] - \frac{1}{2}\right| =  \left|\E[Z] - \frac{1}{2}\right| + \left(\Pstar(Y=1) - \frac{1}{2}\right)\left(\E[Z\cond Y=1] - \E[Z\cond Y=0]\right)$. Thus, the bias gets worse if $\left(\Pstar(Y=1) - \frac{1}{2}\right)\left(\E[Z\cond Y=1] - \E[Z\cond Y=0]\right) > 0$. 

Similar reasoning shows that if $\E[Z] - \frac{1}{2} < 0$, then
\[
\left|\E[Z\cond S] - \frac{1}{2}\right| =  \left|\E[Z] - \frac{1}{2}\right| - \left(\Pstar(Y=1) - \frac{1}{2}\right)\left(\E[Z\cond Y=1] - \E[Z\cond Y=0]\right),
\]
and we can conclude that the bias is worsened if $\left(\Pstar(Y=1) - \frac{1}{2}\right)\left(\E[Z\cond Y=1] - \E[Z\cond Y=0]\right) < 0$. Taking both statements together, we obtain the statement of the proposition.
\end{proof}

For example, if we have $\Pstar(Y=1) = \frac{1}{4}$, $\E[Z\cond Y=1] = 1$ and $\E[Z\cond Y=0] = \frac{1}{3}$, then $\E[Z] = \frac{1}{2}$ but $\E[Z\cond S] - \frac{1}{2} = \frac{1}{6}$; despite $Z$ starting as unbiased, the data balancing induces a bias of $\frac{1}{6}$.

There are a few implications of this derivation. First, we obtain an easy upper bound for the worsening of the bias of $Z$ caused by data balancing: taking absolute values of both sizes and using the triangle inequality on the right yields
\[
  \left|\E[Z] - \frac{1}{2}\right|\leq  \left|\E[Z\cond S] - \frac{1}{2}\right|+
  \left|\Pstar(Y=1) - \frac{1}{2}\right|\left|\E[Z\cond Y=1] - \E[Z\cond Y=0]\right|,
\]
Bringing the second term over to the left hand side and applying the same logic produces
\[
  \left|\E[Z\cond S] - \frac{1}{2}\right|\leq  \left|\E[Z] - \frac{1}{2}\right|+
  \left|\Pstar(Y=1) - \frac{1}{2}\right|\left|\E[Z\cond Y=1] - \E[Z\cond Y=0]\right|,
\]
and combining both terms shows that the difference in bias of $Z$ and $Z\cond S$ is bounded by
\[
  \left|\left|\E[Z] - \frac{1}{2}\right| -  \left|\E[Z\cond S] - \frac{1}{2}\right|\right|
  \leq
  \left|\Pstar(Y=1) - \frac{1}{2}\right|\left|\E[Z\cond Y=1] - \E[Z\cond Y=0]\right|.
\]

\paragraph{Simulation.} We present a simple simulation to illustrate our reasoning: $U \sim \mathcal{N}(0, 0.1)$ is a common cause to $Z$ and $Y$. More specifically, the continuous distributions of $Y$ and $Z$ both have the form $U + \epsilon$, with $\epsilon \sim \mathcal{N}(0.05, 0.02)$. We then binarize $Y$ by thresholding at 0. This creates an imbalance in the marginal of $Y$, such that a random sample of 5,000 examples has $\sim 68\%$ of positive labels. We then want to vary the marginal of $Z$, which also requires affecting their correlation. To this end, we vary the threshold for binarizing $Z$. This leads us to 2 main cases: for thresholds above 0 (i.e. $Y$'s threshold), the marginal of $Z$ is imbalanced in the same direction as that of $Y$. For thresholds smaller than 0., we obtain the opposite, i.e. if $Y=1$ is over-represented, $Z=1$ is under-represented.

We illustrate these 2 cases in Figure \ref{fig:app_sim_balance_y}. We observe that when the marginals are similar, balancing $Y$ brings $Z$ closer to a uniform distribution (top row). However, the marginal distribution of $Z$ becomes more imbalanced after balancing on $Y$ if the two distributions are reversed (bottom row). When the correlation is small, there is little change in the marginal of $Z$ when balancing on $Y$, which is expected.

\begin{figure*}[t]
    \centering
    \begin{subfigure}[!t]{0.15\textwidth}
    Same direction
    \end{subfigure}
    \begin{subfigure}[!t]{0.45\textwidth}
    \includegraphics[width=\textwidth]{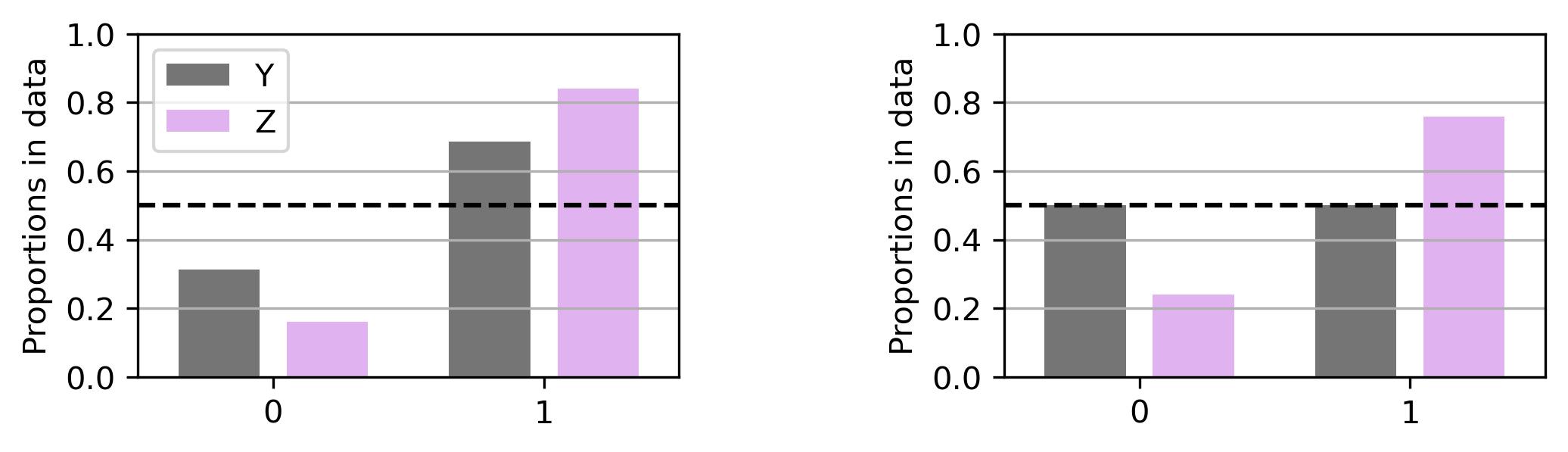}
    \end{subfigure} \\
    \begin{subfigure}[!t]{0.15\textwidth}
    Reverse direction
    \end{subfigure}
    \begin{subfigure}[!t]{0.45\textwidth}
    \includegraphics[width=\textwidth]{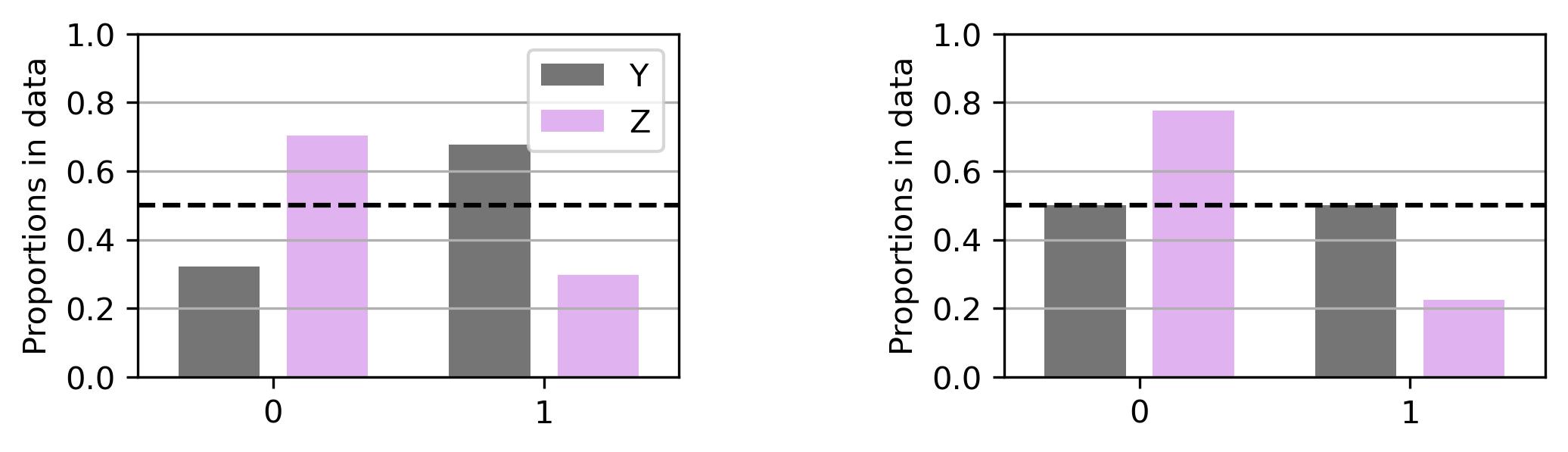}
    \end{subfigure}
    \caption{Proportions of $Y={0,1}$ (grey bars) and $Z={0,1}$ (purple bars) before (left) and after (right) balancing the data on $Y$. }
    \label{fig:app_sim_balance_y}
\end{figure*}

For completeness, we perform 200 simulations with different thresholdings for $Z$ and present the results in Figure \ref{fig:app_sim_vary_threshold}. 

\begin{figure}[t]
    \centering
    \includegraphics[width=0.5\textwidth]{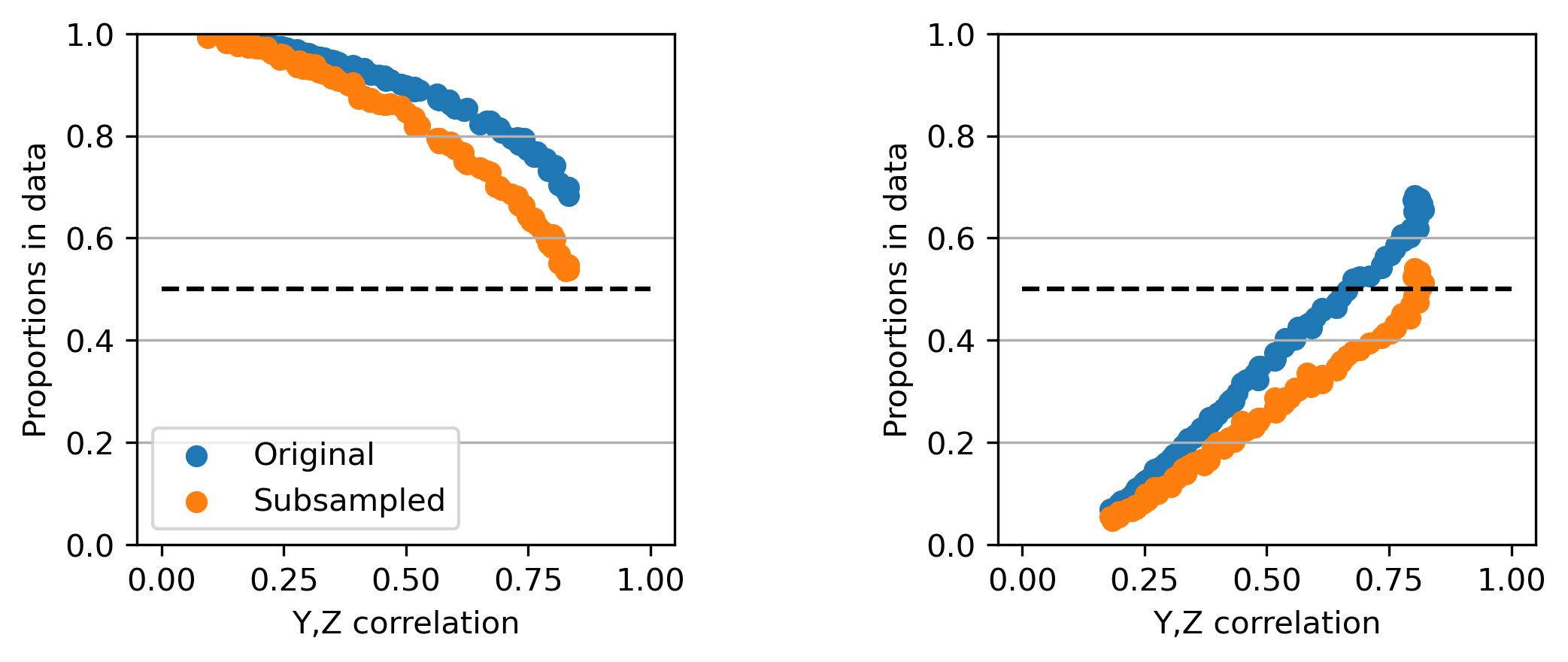}
    \caption{Distribution $\Pstar(Z=1)$ before (blue) and after (orange) balancing the data according to $Y$, for different values of the binarization threshold of $Z$ which translates into different correlation coefficients between $Y$ and $Z$. Left: similar direction of under-representation. Right: opposite direction.}
    \label{fig:app_sim_vary_threshold}
\end{figure}

\subsection{Failure mode: entangled signals}
\label{appendix:entangled_failure}
In the case where $X$ includes non-trivial intersection information $X_{Y \wedge Z}$, data balancing will in general be insufficient to ensure that there is no association bias. This is because a risk-minimizing predictor $f(X)$ will condition on $X_{Y \wedge Z}$, and the distribution of these intersection features is influenced by $Z$.

Specifically, we will give a case where $Y$ is marginally independent of $Z$ and there is no uncontrolled confounding, but $E[f(X) \mid Z=0] \neq E[f(X) \mid Z=1]$.

Suppose we have the following data generating process (DGP):
\begin{align*}
\Pstar(Y = 1) &= 0.5\\
\Pstar(Z = 1) &= 0.5\\ 
\Pstar(Y = 1, Z = 1) &= \Pstar(Y = 1)\Pstar(Z = 1)\textrm{, i.e., $Y^{\perp}_Z$}\\
\Pstar(X = 1) &= \left\{ \begin{array}{rl} p &\textrm{if $Y$ OR $Z$}\\
                                      q &\textrm{o.w.} \end{array}\right.
\end{align*}
Note that in this case the entirety of $X$ would be classified as intersection information $X_{Y\wedge Z}$.

In this setup, the Bayes-optimal probabilities for classification, $f(X)$, are given by:
\begin{align*}
f(1) := \Pstar(Y = 1 \mid X = 1) &= \frac{\Pstar(X = 1 \mid Y = 1) \Pstar(Y = 1)}{\Pstar(X = 1)}
= \frac{p \cdot 0.5}{0.75 p + 0.25 q}
\end{align*}
and
\begin{align*}
f(0) := \Pstar(Y = 1 \mid X = 0) &= \frac{(1 - \Pstar(X = 1 \mid Y = 1))\Pstar(Y = 1)}{\Pstar(X = 0)}
= \frac{(1 - p) \cdot 0.5}{1 - (0.75p + 0.25 q)}
\end{align*}
Note that when we condition on $Z = {0,1}$, the expectation of $f(X)$ is different whenever (1) $p \neq q$, i.e., whenever the distribution of $X$ actually depends on the function of $Y$ and $Z$, and (2) $f(1) \neq f(0)$, i.e., there is some information in $X$ to predict $Y$:
\begin{align*}
E[f(X) \mid Z = 1] &= E[E[f(X) \mid X, Z = 1]]
 = p f(1) + (1-p) f(0)
\end{align*}
\begin{align*}
E[f(X) \mid Z = 0] &= E[E[f(X) \mid X, Z = 0]]\\
&= (0.5p + 0.5q) f(1) + (0.5(1-p) + 0.5(1-q)) f(0)
\end{align*}

In the simple case where $p = 1$ and $q = 0$ (i.e., $X = Y \textrm{ OR } Z$ deterministically), we get
$$
f(X) := \Pstar(Y=1 \mid X) = \left\{ \begin{array}{rl} 2/3 &\textrm{if }X=1\\
                                     0 &\textrm{if }X=0.
                                    \end{array}\right.
$$

$$
E[f(X) \mid Z] = \left\{ \begin{array}{rl} 2/3 &\textrm{if }Z=1\\
                                 1/3 &\textrm{if }Z=0.
               \end{array}\right.
$$

\section{Conditions for data balancing to lead to an invariant and optimal model}

We first investigate the case of a risk-invariant model w.r.t $\mathcal{P}$, and then discuss fairness criteria.

\subsection{Risk-invariant, optimal model}
\label{app:suff_stats}
In this section we provide proofs for Section \ref{sec:databalancing}. 

Recall that $\mathcal{P}=\{P'(X,Y,Z)=\Pstar(X^{\perp}_Z|Y,Z)\Pstar(X^{\perp}_Y|Y,Z)\Pstar(X_{Z \wedge Y}|Y,Z)P'(Z|Y)\Pstar(Y)\}$ and that we assume a data balancing distribution $\Q(X,Y,Z)\in \mathcal P$ of the form $\Q(X,Y,Z)=\Pstar(X\cond Y,Z)\Pstar(Z)\Pstar(Y)$. Also recall that we define $X^{\perp}_Z$ to be a sufficient statistic for $Y$ in $\Q$ if $\E_\Q [ Y \cond X]=\E_\Q [ Y \cond X^{\perp}_Z]$.

\begin{namedthm*}{Proposition \ref{prop:suff_stats_risk-invariance}}
If $X^{\perp}_Z \indep_\Q Z \cond Y$ and $X^{\perp}_Z$ is a sufficient statistic for $Y$ in $\Q$, then the risk-minimizer $f(X):=\E_\Q [ Y \cond X]$ is risk-invariant and optimal w.r.t. $\mathcal{P}$.
\end{namedthm*}
\begin{proof}
Let $P'$ be an arbitrary distribution in $\mathcal P$. We have 
    \begin{align*}
        P'(X^{\perp}_Z\cond Y) &= \sum_Z P'(X^{\perp}_Z \cond Y, Z)P'(Z \cond Y)\\
        &\stackrel{(1)}{=} \sum_Z \Q(X^{\perp}_Z \cond Y, \cancel{Z})P'(Z \cond Y)\\
        &= \Q(X^{\perp}_Z\cond Y),
    \end{align*}
where (1) holds as $P',\Q\in {\mathcal P}$ and by the independence assumption. As $P'(Y)=\Q(Y)$ we obtain 
$P'(X^{\perp}_Z, Y)=\Q(X^{\perp}_Z, Y)$. As $X^{\perp}_Z$ is a sufficient statistic for $Y$ in $\Q$, $f(X):=\E_\Q [ Y \cond X]=\E_\Q [ Y \cond X^{\perp}_Z]$, that is $f(X)$ (and therefore the loss $\ell(f;X,Y)$) remains constant for different values of $X^{\perp}_Y,X_{Y \wedge Z}$, giving
\begin{align*}
\E_{X,Y \sim P'}[\ell(f;X,Y)]=\E_{X^{\perp}_Z,Y \sim P'}[\ell(f;X,Y)]=\E_{X^{\perp}_Z,Y \sim \Q}[\ell(f;X,Y)].
\end{align*}
The same reasoning can be repeated for $P''\in\mathcal P$, obtaining 
$\E_{X,Y \sim P'}[\ell(f;X,Y)]=\E_{X,Y \sim P''}[\ell(f;X,Y)]$, 
which proves that $f$ is risk-invariant w.r.t. $\mathcal{P}$.\\
As $f=\min_{f'} \E_{X,Y \sim \Q}[\ell(f';X,Y)]$ and $\E_{X,Y \sim P'}[\ell(f;X,Y)]=\E_{X,Y \sim \Q}[\ell(f;X,Y)$ $\forall P'\in \mathcal{P}$, we obtain  $f=\min_{f'} \E_{X,Y \sim P'}[\ell(f';X,Y)]\big)$, $\forall P'\in \mathcal{P}$, which implies that $f$ is optimal w.r.t. $\mathcal{P}$.
\end{proof}

\begin{namedthm*}{Corollary \ref{prop:suff_stat_P}} Let $R=\{X^{\perp}_Y,X_{Y \wedge Z}\}$. If $R \indep_\Pstar \{X^{\perp}_Z,Y\} \cond Z$ and $X^{\perp}_Z \indep_\Pstar Z \cond Y$, then $f(X^{\perp}_Z)=\E_\Q [ Y \cond X^{\perp}_Z ]$ is risk-invariant and optimal w.r.t. $\mathcal{P}$.
\end{namedthm*}
\begin{proof}
    We have
    \begin{align*}
        \Q(Y \cond R, X^{\perp}_Z) &= \frac{\sum_Z \Q(R, X^{\perp}_Z, Y, Z)}{\sum_{Z,Y} \Q(R, X^{\perp}_Z, Y, Z)}\\
        &\stackrel{(1)}{=} \frac{\sum_Z \Pstar(R, X^{\perp}_Z \cond Y, Z)\Pstar(Z)\Pstar(Y)}{\sum_{Z,Y} \Pstar(R, X^{\perp}_Z \cond Y, Z)\Pstar(Z)\Pstar(Y)}\\     
        &\stackrel{(2)}{=} \frac{\sum_Z \Pstar(R\cond \cancel{X^{\perp}_Z, Y}, Z)\Pstar(X^{\perp}_Z\cond Y, \cancel{Z})\Pstar(Z)\Pstar(Y)}{\sum_{Z,Y} \Pstar(R\cond \cancel{X^{\perp}_Z, Y}, Z)\Pstar(X^{\perp}_Z\cond Y, \cancel{Z})\Pstar(Z)\Pstar(Y)}\\
        &=\frac{\Pstar(R)\Pstar(X^{\perp}_Z\cond Y)\Pstar(Y)}{\Pstar(R)\sum_{Y} \Pstar(X^{\perp}_Z\cond Y)\Pstar(Y)}\\
        &=\Pstar(Y \cond X^{\perp}_Z),
    \end{align*}
    where (1) holds by the definition of the balanced distribution $\Q$ and (2) holds by the independence assumptions. This derivation shows that $Y \indep_\Q R \cond X^{\perp}_Z$ and therefore that $X^{\perp}_Z$ is a sufficient statistic for $Y$ in $\Pstar$.
    We are in the same conditions as in Proposition \ref{prop:suff_stats_risk-invariance}, which implies that $f$ is risk-invariant and optimal w.r.t. $\mathcal{P}$.
\end{proof}

\begin{namedthm*}{Proposition \ref{prop:dis_rep}}
Let $\hat f \in \mathcal F$ be some fitted model and $\epsilon > 0$. Assume that, for all $P',P'' \in \mathcal{P}$, we have $\left|\E_{P'}[Y \mid \hat f(X, Y)] - \E_{P'}[Y \mid X^{\perp}_Z] \right| \leq \frac{\epsilon}{2}$. Then $\hat f$ is $\epsilon$-risk invariant, meaning that 
\[
    \sup_{P', P''\in\mathcal P} R_{P'}(\hat f) - R_{P''}(\hat f) \leq \epsilon.
\]
\end{namedthm*}

\begin{proof}

By definition, $f^*$ is risk-invariant w.r.t. $\mathcal{P}$ and optimal. By the triangle inequality, we can then write 
\begin{align*}
    | R_{P'}(\hat f) - R_{P''}(\hat f) | &\leq  | R_{P'}(\hat f) - R_{P'}(f^*) | +  | R_{P'}(f^*) - R_{P''}(\hat f) | \\
    &\leq  | R_{P'}(\hat f) - R_{P'}(f^*) | +  | R_{P''}(f^*) - R_{P''}(\hat f) | \\
    &\leq \frac{\epsilon}{2} + \frac{\epsilon}{2}.
\end{align*}
\end{proof}

\subsection{Conditions for data balancing to lead to a fair model}
\label{app:suff_fairness}

In this section, we focus on fairness definitions and provide sufficient conditions for data balancing to lead to a fair model. The results we describe do not address the case where $X^{\perp}_Z$ is not accessible directly.

\begin{proposition}[Demographic parity]
    $X^\perp_Z \indep_\Q Z$ if $X^\perp_Z \indep_\Pstar Z \cond Y$; that is balancing successfully induces independence between $X^\perp_Z$ and $Z$ if $X^\perp_Z$ and $Z$ are independent given $Y$ in the original data distribution.
\end{proposition}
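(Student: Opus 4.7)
The plan is to unfold $\Q$ in terms of $\Pstar$ via Definition \ref{def:joint_balancing}, marginalize out everything except $X^\perp_Z$ and $Z$, and then apply the conditional independence hypothesis.

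First I would rewrite the joint balancing identity in the convenient form $\Q(X, Y, Z) = \Pstar(X \mid Y, Z)\Pstar(Y)\Pstar(Z)$, which follows directly from Definition \ref{def:joint_balancing} by noting that $\Pstar(X,Y,Z)/\Pstar(Y,Z) = \Pstar(X \mid Y, Z)$. Since $X$ decomposes (Assumption \ref{assumption:covariates}) into $X^\perp_Z, X^\perp_Y, X_{Y\wedge Z}$, I would marginalize the other two components to obtain $\Q(X^\perp_Z, Y, Z) = \Pstar(X^\perp_Z \mid Y, Z)\Pstar(Y)\Pstar(Z)$.

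Next I would compute the joint $\Q(X^\perp_Z, Z)$ by summing over $Y$:
\begin{equation*}
\Q(X^\perp_Z, Z) \;=\; \sum_Y \Pstar(X^\perp_Z \mid Y, Z)\,\Pstar(Y)\,\Pstar(Z).
\end{equation*}
Applying the hypothesis $X^\perp_Z \indep_\Pstar Z \mid Y$ collapses $\Pstar(X^\perp_Z \mid Y, Z)$ to $\Pstar(X^\perp_Z \mid Y)$, so the sum factors as $\Pstar(Z)\sum_Y \Pstar(X^\perp_Z \mid Y)\Pstar(Y) = \Pstar(Z)\Pstar(X^\perp_Z)$. Finally, marginalizing over $Z$ (and separately over $X^\perp_Z$) yields $\Q(X^\perp_Z) = \Pstar(X^\perp_Z)$ and $\Q(Z) = \Pstar(Z)$, so the joint equals the product $\Q(X^\perp_Z)\Q(Z)$, which is exactly $X^\perp_Z \indep_\Q Z$.

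There is no real obstacle here; the proof is essentially a bookkeeping exercise in swapping between $\Pstar$ and $\Q$ using Definition \ref{def:joint_balancing}. The only subtlety worth flagging is that the conditional independence statement is expressed in $\Pstar$ rather than $\Q$, and the key observation is that the reweighting in Definition \ref{def:joint_balancing} only touches the marginal on $(Y,Z)$ while leaving $\Pstar(X \mid Y, Z)$ untouched, which is what allows the $\Pstar$-level assumption to transfer cleanly to a $\Q$-level independence after summing out $Y$.
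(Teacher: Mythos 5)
Your proof is correct and follows essentially the same route as the paper's: unfold $\Q$ via Definition \ref{def:joint_balancing}, marginalize out $X^\perp_Y$ and $X_{Y\wedge Z}$, sum over $Y$, and apply the hypothesis $X^\perp_Z \indep_\Pstar Z \cond Y$ to collapse the conditional. The only cosmetic difference is that you exhibit the factorization of the joint $\Q(X^\perp_Z, Z)$ while the paper shows $\Q(X^\perp_Z \cond Z)$ does not depend on $Z$; these are equivalent.
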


\begin{proof}
    First, note that, under the assumed conditional independences,
    \begin{align*}
        \Q(X^\perp_Z, Y, Z)
        &=
        \sum_{X^\perp_Y, X_{Y\wedge Z}} \Q(X^\perp_Z, X^\perp_Y, X_{Y\wedge Z}, Y, Z)\\
        &=
        \sum_{X^\perp_Y, X_{Y\wedge Z}} \Pstar(X^\perp_Z, X^\perp_Y, X_{Y\wedge Z}| Y, Z)\Pstar(Y)\Pstar(Z)\\
        &=
        \Pstar(X^\perp_Z| Y, Z)\Pstar(Y)\Pstar(Z)\\
        &=
        \Pstar(X^\perp_Z| Y)\Pstar(Y)\Pstar(Z),
    \end{align*}
    where the first equality is by applying the definition of $\Q$ and the last line follows from the assumed invariance. We can then derive
    \begin{align*}
        \Q(X^\perp_Z \cond Z)
    &= 
        \frac{\sum_Y \Q(X^\perp_Z, Y, Z)}{\sum_{Y, X^\perp_Z} \Q(X^\perp_Z, Y, Z)}\\
    &=
        \frac{\sum_Y \Pstar(X^\perp_Z| Y)\Pstar(Y)\Pstar(Z),}{\sum_{Y, X^\perp_Z}\Pstar(X^\perp_Z| Y)\Pstar(Y)\Pstar(Z)}\\
    &=
        \sum_{Y} 
          \Pstar(X^\perp_Z\cond Z, Y)\Pstar(Y)\\
    &=
        \sum_{Y} 
          \Pstar(X^\perp_Z\cond Y)\Pstar(Y)\\
    &=
       \Pstar(X^\perp_Z).
    \end{align*}
    This equality implies that $\Q(X^\perp_Z \cond Z)$ does not depend on $Z$, verifying the conditional independence claim.
\end{proof}

\begin{remark}
    We would not expect $X^\perp_Z$ and $Z$ to be independent in $\Q$ if $X^\perp_Z$ and $Z$ are not independent given $Y$ in $\Pstar$. In the previous proof, we derived
    \[
    \Q(X_Z^\perp | Z)
    = 
    \sum_{Y} \Pstar(X^\perp_Z\cond Z, Y)\Pstar(Y).
    \]
    Except for some corner cases, we would expect that $\Pstar(X^\perp_Z\cond Z, Y)$ would not vary with $Z$ if $Q(X_Z^\perp | Z)$ does not.
\end{remark}

\begin{proposition}[Predictive parity]
    $Y \indep_\Q Z \cond X^\perp_Z$ if $X^\perp_Z \indep_\Pstar Z \cond Y$; that is, data balancing successfully induces independence between $Y$ and $Z$ given $X^\perp_Z$ if $X^\perp_Z$ and $Z$ are independent given $Y$ in the original data distribution.
\end{proposition}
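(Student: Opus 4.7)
The plan is to compute the joint distribution $\Q(X^\perp_Z, Y, Z)$ explicitly, then use a factorization argument to read off the desired conditional independence.

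First, I would marginalize the definition $\Q(X,Y,Z) = \Pstar(X \cond Y, Z)\Pstar(Y)\Pstar(Z)$ over $X^\perp_Y$ and $X_{Y \wedge Z}$, using Assumption~\ref{assumption:covariates} to identify the integral over these with the conditional $\Pstar(X^\perp_Z \cond Y, Z)$. This yields
\begin{equation*}
\Q(X^\perp_Z, Y, Z) = \Pstar(X^\perp_Z \cond Y, Z)\,\Pstar(Y)\,\Pstar(Z).
\end{equation*}
This is exactly the same first step as in the demographic parity proposition just proved, so no new work is required here.

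Next, I would apply the hypothesis $X^\perp_Z \indep_\Pstar Z \cond Y$ to collapse $\Pstar(X^\perp_Z \cond Y, Z)$ to $\Pstar(X^\perp_Z \cond Y)$, giving
\begin{equation*}
\Q(X^\perp_Z, Y, Z) = \underbrace{\Pstar(X^\perp_Z \cond Y)\,\Pstar(Y)}_{\text{function of }(X^\perp_Z,Y)} \cdot \underbrace{\Pstar(Z)}_{\text{function of }Z}.
\end{equation*}
By the standard factorization criterion for independence, this factorization shows that $Z \indep_\Q (X^\perp_Z, Y)$ under $\Q$. The claimed conditional independence $Y \indep_\Q Z \cond X^\perp_Z$ then follows immediately from the weak union graphoid property (or just by dividing through by $\Q(X^\perp_Z)$ and checking that the conditional factorizes as a function of $Y$ times a function of $Z$).

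There is essentially no obstacle here: the statement is a corollary of the same computation performed for demographic parity, strengthened by observing that the joint $\Q(X^\perp_Z, Y, Z)$ splits as a product of a $(X^\perp_Z, Y)$-term and a pure $Z$-term. The only thing worth being careful about is to note that $\Q(X^\perp_Z) > 0$ wherever we condition, which is automatic since $\Q(X^\perp_Z) = \sum_Y \Pstar(X^\perp_Z \cond Y)\Pstar(Y) = \Pstar(X^\perp_Z)$, matching the support of the original distribution.
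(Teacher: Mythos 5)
Your proof is correct and follows essentially the same computation as the paper's: both reduce to the identity $\Q(X^\perp_Z, Y, Z) = \Pstar(X^\perp_Z \cond Y, Z)\Pstar(Y)\Pstar(Z)$ and then apply the hypothesis $X^\perp_Z \indep_\Pstar Z \cond Y$ to eliminate the $Z$-dependence; the paper divides by $\sum_Y \Q(X^\perp_Z, Y, Z)$ to show $\Q(Y \cond X^\perp_Z, Z) = \Pstar(Y \cond X^\perp_Z)$ directly, whereas you read off the factorization and invoke weak union. Your route incidentally establishes the slightly stronger joint independence $Z \indep_\Q (X^\perp_Z, Y)$, but the substance is the same.
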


\begin{proof}
    Let $X^\perp_Z \indep_\Pstar Z \cond Y$. The following derivation demonstrates the claim,
    \begin{align*}
        \Q(Y \cond X^\perp_Z, Z) &= \frac{\Q(X^\perp_Z, Y, Z)}{\sum_{Y} \Q(X^\perp_Z, Y, Z)}\\
    &\stackrel{(1)}{=} 
        \frac{\Pstar(X^\perp_Z\cond Y,Z)\Pstar(Y)\Pstar(Z)}{\sum_{Y} \Pstar(X^\perp_Z\cond Y,Z)\Pstar(Y)\Pstar(Z)}\\
    &\stackrel{(2)}{=} 
        \frac{\Pstar(X^\perp_Z\cond Y)\Pstar(Y)\Pstar(Z)}{\sum_{Y} \Pstar(X^\perp_Z\cond Y)\Pstar(Y)\Pstar(Z)}\\
    &= 
    \Pstar(Y \cond X^\perp_Z),\\
    \end{align*}
    where $(1)$ holds by the definition of data balancing on the joint, $(2)$ holds by the assumption of conditional independence. Therefore, the l.h.s is not a function of $Z$ which establishes conditional independence.
\end{proof}

\begin{remark}
    We would not expect $Y$ and $Z$ to be independent given $X^\perp_Z$ in $\Q$ unless $X^\perp_Z$ and $Z$ are independent given $Y$ in $\Pstar$. From the previous proof, we wrote
    \begin{align*}
        \Q(Y \cond X^\perp_Z, Z) &= 
        \frac{\Pstar(X^\perp_Z\cond Y, Z)\Pstar(Y)\Pstar(Z)}{\sum_{Y} \Pstar(X^\perp_Z\cond Y, Z)\Pstar(Y)\Pstar(Z)},
    \end{align*}
    so generally we would expect $\Q(Y \cond X^\perp_Z, Z)$ to depend on $Z$, which implies $Y \not\indep_\Q Z \cond X^\perp_Z$, whenever $\Pstar(X^\perp_Z\cond Y, Z)\Pstar(Y)\Pstar(Z)$ depends on $Z$ (i.e. when $X^\perp_Z \not\indep_\Pstar Z \cond Y$).
\end{remark}

\begin{proposition}[Equalized odds]
    $X^\perp_Z \indep_\Q Z \cond Y$ if $X^\perp_Z \indep_{\Pstar} Z \cond Y$; that is data balancing does not disturb independence between $X^\perp_Z$ and $Z$ given $Y$ if $X^\perp_Z$ and $Z$ are independent given $Y$ in the original data distribution $\Pstar$.
\end{proposition}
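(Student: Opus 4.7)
The plan is to follow the same recipe used in the two preceding propositions: compute the conditional $\Q(X^\perp_Z \cond Y, Z)$ explicitly from the definition of joint balancing, then invoke the hypothesis $X^\perp_Z \indep_{\Pstar} Z \cond Y$ to show the resulting expression does not depend on $Z$. The calculation should be essentially mechanical; there is no need to invoke any sufficient-statistic argument or go back to Proposition~\ref{prop:suff_stats_risk-invariance}.

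Concretely, I would first marginalize out $X^\perp_Y$ and $X_{Y \wedge Z}$ in the balanced joint: using Definition~\ref{def:joint_balancing} together with Assumption~\ref{assumption:covariates}, obtain
\begin{align*}
\Q(X^\perp_Z, Y, Z) = \sum_{X^\perp_Y, X_{Y \wedge Z}} \Pstar(X^\perp_Z, X^\perp_Y, X_{Y \wedge Z} \cond Y, Z) \Pstar(Y) \Pstar(Z) = \Pstar(X^\perp_Z \cond Y, Z) \Pstar(Y) \Pstar(Z).
\end{align*}
Then form the conditional
\begin{align*}
\Q(X^\perp_Z \cond Y, Z) = \frac{\Q(X^\perp_Z, Y, Z)}{\sum_{X^\perp_Z} \Q(X^\perp_Z, Y, Z)} = \frac{\Pstar(X^\perp_Z \cond Y, Z) \Pstar(Y) \Pstar(Z)}{\Pstar(Y) \Pstar(Z)} = \Pstar(X^\perp_Z \cond Y, Z).
\end{align*}
Finally, apply the hypothesis to replace $\Pstar(X^\perp_Z \cond Y, Z)$ with $\Pstar(X^\perp_Z \cond Y)$, so the right-hand side has no dependence on $Z$, which is exactly the conditional independence claim.

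I do not anticipate a real obstacle here; the only thing to be careful about is making sure the $\sum_{X^\perp_Y, X_{Y \wedge Z}}$ marginalization is justified before the independence is applied, so that the derivation does not silently assume what we want to prove. In particular, the step $\sum_{X^\perp_Y, X_{Y \wedge Z}} \Pstar(X^\perp_Z, X^\perp_Y, X_{Y \wedge Z} \cond Y, Z) = \Pstar(X^\perp_Z \cond Y, Z)$ is purely the marginalization of a conditional, independent of any graphical assumption. This proposition is strictly easier than the predictive parity case because we do not need to divide by a sum over $Y$; the $\Pstar(Y)\Pstar(Z)$ factors in numerator and denominator simply cancel.
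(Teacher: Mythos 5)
Your proposal is correct and follows essentially the same route as the paper's proof: write $\Q(X^\perp_Z \cond Y, Z)$ as a ratio, substitute the balanced-distribution definition, and apply $X^\perp_Z \indep_\Pstar Z \cond Y$ to remove the $Z$-dependence. Your explicit marginalization over $X^\perp_Y, X_{Y\wedge Z}$ and the observation that the denominator collapses to $\Pstar(Y)\Pstar(Z)$ are minor presentational differences, not a different argument.
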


\begin{proof}
    Let $X^\perp_Z \indep_\Pstar Z \cond Y$. Note that in this case we just need to show that data balancing does not disturb the conditional independence $X^\perp_Z \indep_\Pstar Z \cond Y$ present in the original data (we already had equalized odds in original data). The following derivation demonstrates the claim,
    \begin{align*}
        \Q(X^\perp_Z \cond Z, Y) &= \frac{\Q(X^\perp_Z, Y, Z)}{\sum_{X^\perp_Z} \Q(X^\perp_Z, Y, Z)}\\
        &\stackrel{(1)}{=} \frac{\Pstar(X^\perp_Z\cond Y,Z)\Pstar(Y)\Pstar(Z)}{\sum_{X^\perp_Z} \Pstar(X^\perp_Z\cond Y,Z)\Pstar(Y)\Pstar(Z)}\\
        &\stackrel{(2)}{=} \frac{\Pstar(X^\perp_Z\cond Y)\Pstar(Y)\Pstar(Z)}{\sum_{X^\perp_Z} \Pstar(X^\perp_Z\cond Y)\Pstar(Y)\Pstar(Z)}\\
        &= \Pstar(X^\perp_Z \cond Y),\\
    \end{align*}
    where $(1)$ holds by the definition of data balancing and $(2)$ holds by the assumption of conditional independence. Therefore, the l.h.s is not a function of $z$ which establishes conditional independence.
\end{proof}

\begin{remark}
    Similar to the demographic and predictive parity cases, we would expect that, in most cases when  $X^\perp_Z \indep_\Q Z \cond Y$ holds, it is because $X^\perp_Z \indep_{\Pstar} Z \cond Y$.
\end{remark}

\section{Impact of data balancing on the CBN}
\label{app:balancing_causal}

Recall that  we assumed that $Z$ is discrete, but all the results are easily extended for continuous $Z$.
\begin{wrapfigure}{l}{0.19\textwidth}
\centering
\resizebox{0.17\textwidth}{!}{
\begin{tikzpicture}
\node[circle,text=gray, dashed] (U) at (-1.75,-0.5) {$U$};
\node (Z) at (-1,-1) {$Z$};
\node (Y) at (-1,0) {$Y$};
\node (X) at (0,0) {$X$};

\draw[line width=1pt,black,\arr, opacity=0.7](X)--(Y);
\draw[line width=1pt,black,\arr, opacity=0.7](Z)--(X);
\draw[line width=1pt, \arr, red, opacity=0.5](U)--(Z);
\draw[line width=1pt, \arr, red, opacity=0.5](U)--(Y);
\end{tikzpicture}}
\end{wrapfigure}

\begin{namedthm*}{Proposition \ref{prop:arrow_dropped}}
     Let  $\langle  \graph, \Pstar\rangle$ be the CBN underlying the data, where $\graph$ contains an undesired path between $Z$ and $Y$, and let $\graphi$ be a modification of $\graph$ in which the undesired path has been removed. The distribution $\Q$ obtained by joint balancing the data to make $Z$ and $Y$ statistically independent, i.e. $\Q(Y,X,Z)=\Pstar(X\cond Y,Z)\Pstar(Z)\Pstar(Y)$, might not factorize according to $\graphi$.
\end{namedthm*}

\begin{proof}
    \textbf{Example 1: Causal task with causal and non-causal paths.} Consider $\graph=\{Z\rightarrow X\rightarrow Y, Z\leftarrow U \rightarrow Y\}$, for  unobserved $U$. We have 
    \begin{align*}
        \Q(Y \cond X,Z) &= \frac{\Q(X,Y,Z)}{\sum_Y \Q(X,Y,Z)} = \frac{\Pstar(X \cond Y, Z)\Pstar(Z)\Pstar(Y)}{\sum_Y  \Pstar(X \cond Y, Z)\Pstar(Z)\Pstar(Y)} = \frac{ \Pstar(X \cond Z, Y)\Pstar(Y)}{\sum_Y  \Pstar(X \cond Z, Y)\Pstar(Y)},\\
    \end{align*}
    where the r.h.s is a function of $Z$ in general as $X$ is not independent of $Y$ given $Z$ in $\Pstar$. If $\Q$ were $\graphi=\{Z\rightarrow X\rightarrow Y\}$, then $Y \indep Z \cond X$ in $\Q$. To show the claim it suffices therefore to construct a distribution $\Pstar$  such that $X$ is not independent of $Y$ given $Z$.
    
    \textbf{Example 2: Causal task with non-causal path.} Consider $\graph=\{X\rightarrow Y, Z\leftarrow U \rightarrow Y\}$. We have that,
    \begin{align*}
        \Q(X \cond Z) = \frac{\sum_Y \Q(X,Y,Z)}{\sum_{Y,X} \Q(X,Y,Z)} = \frac{\sum_Y \Pstar(X \cond Y, Z)\Pstar(Z)\Pstar(Y))}{\sum_{Y,X} \Pstar(X \cond Y, Z)\Pstar(Z)\Pstar(Y)} = \sum_Y \Pstar(X \cond Y, Z)\Pstar(Y).
    \end{align*}
    The r.h.s is a function of $Z$ in general as $X$ is not independent of $Z$ given $Y$ in a distribution $\Pstar$ consistent with $\graph$. Therefore, one may not interpret the mutilated graph $\graphi=\{X\rightarrow Y\}$ as a correct representation of the conditional independences implied by the balanced distribution $\Q$.
    
    \textbf{Example 3: Causal task with causal path.} Consider $\graph=\{Z\rightarrow X\rightarrow Y\}$. We have that,
    \begin{align*}
    \Q(Y \cond X,Z) &= \frac{\Q(X,Y,Z)}{\sum_Y \Q(X,Y,Z)} = \frac{\Pstar(X \cond Y, Z)\Pstar(Z)\Pstar(Y)}{\sum_Y  \Pstar(X \cond Y, Z)\Pstar(Z)\Pstar(Y)} = \frac{ \Pstar(X \cond Z, Y)\Pstar(Y)}{\sum_Y  \Pstar(X \cond Z, Y)\Pstar(Y)},\\
    \end{align*}
    The r.h.s is a function of $Z$ in general as $X$ is not independent of $Z$ given $Y$ in $\Pstar$. Therefore, one may not interpret the mutilated graph $\graphi=\{Z, X\rightarrow Y\}$ as a correct representation of the conditional independences implied by the balanced distribution $\Q$.
    
    \textbf{Example 4: Anti-causal task.} Consider $\graph=\{Y\rightarrow X, Z\leftarrow U \rightarrow Y, Z\rightarrow W \rightarrow X\}$. We have that,
    \begin{align*}
        \Q(X \cond Z) = \frac{\sum_{Y,W} \Q(X,Y,Z,W)}{\sum_{Y,X,W} \Q(X,Y,Z,W)} = \frac{\sum_{Y,W} \Pstar(X,W \cond Y, Z)\Pstar(Z)\Pstar(Y))}{\sum_{Y,X,W} \Pstar(X,W \cond Y, Z)\Pstar(Z)\Pstar(Y)} = \sum_Y \Pstar(X \cond Y, Z)\Pstar(Y).
    \end{align*}
    The r.h.s is a function of $Z$ in general as $X$ is not independent of $Z$ given $Y$ in a distribution $\Pstar$ consistent with $\graph$. Therefore, one may not interpret the mutilated graph $\graph'=\{Y\rightarrow X, Z\rightarrow W \rightarrow X\}$ as a correct representation of the conditional independences implied by the balanced distribution $\Q$.
    
\end{proof}

\subsection{Regularization and data balancing don't always go hand in hand}
\label{app:invariance_balancing_reg}

\subsubsection{Risk-invariance}

We first consider the graph in Figure \ref{fig:causal_graphs}(d) and show that $X^{\perp}_Z \ind Z \cond Y$ in both $\Q$, which justifies its use in addition to data balancing, although there might not be a benefit of using both techniques simultaneously (in theory).

\begin{proposition}
    Consider the graph $\graph$ in Figure \ref{fig:causal_graphs}(d). Then $X^\perp_Z \indep Z \cond Y$ in both the training data distribution $\Pstar$ (consistent with $\graph$) and the distribution after balancing, namely $\Q$.
\end{proposition}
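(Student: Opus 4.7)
The plan is to prove the two claims separately: first for $\Pstar$ using graphical d-separation, then for $\Q$ by pushing the independence from $\Pstar$ through the reweighting identity given by Definition \ref{def:joint_balancing}.

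First, I would dispatch the $\Pstar$ direction by d-separation on the graph in Figure \ref{fig:causal_graphs}(d). There are only two paths from $X^\perp_Z$ to $Z$: one through $Y \leftarrow U \rightarrow Z$ and one through $Y \rightarrow X_{Y \wedge Z} \leftarrow Z$. Conditioning on $Y$ blocks the first path (non-collider at $Y$) and the second path is already blocked at the collider $X_{Y \wedge Z}$ which is not in the conditioning set. Hence $X^\perp_Z \indep_\Pstar Z \cond Y$.

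Next, I would establish the $\Q$ direction by a short algebraic manipulation. Using Definition \ref{def:joint_balancing} and marginalizing the other components of $X$ (recall that in graph (d) the decomposition reduces to $X^\perp_Z$ and $X_{Y\wedge Z}$, since $X^\perp_Y$ is empty), one obtains
\begin{align*}
\Q(X^\perp_Z, Y, Z)
&= \sum_{X_{Y\wedge Z}} \Pstar(X^\perp_Z, X_{Y\wedge Z}, Y, Z)\,\frac{\Pstar(Y)\Pstar(Z)}{\Pstar(Y,Z)} \\
&= \Pstar(X^\perp_Z \cond Y, Z)\,\Pstar(Y)\,\Pstar(Z).
\end{align*}
Now invoke the $\Pstar$-independence established in the first step to replace $\Pstar(X^\perp_Z \cond Y, Z)$ with $\Pstar(X^\perp_Z \cond Y)$. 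Dividing by $\Q(Y,Z) = \Pstar(Y)\Pstar(Z)$ (which is just the marginal of the displayed joint) yields $\Q(X^\perp_Z \cond Y, Z) = \Pstar(X^\perp_Z \cond Y)$, which is independent of $Z$, proving $X^\perp_Z \indep_\Q Z \cond Y$.

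There is no real obstacle here; the only thing to be careful about is the bookkeeping for the covariate decomposition (making sure the marginalization over the remaining components of $X$ collapses cleanly, using Assumption \ref{assumption:covariates}), and being explicit that the second half uses the first half as an input rather than re-proving anything graphical inside $\Q$. This is appropriate because Proposition \ref{prop:arrow_dropped} tells us we cannot simply d-separate on a mutilated graph to reason about $\Q$; the proof must go through the explicit reweighting formula.
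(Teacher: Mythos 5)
Your proof is correct and follows essentially the same route as the paper's: the $\Pstar$ half by $d$-separation on the graph in Figure \ref{fig:causal_graphs}(d), and the $\Q$ half by marginalizing $X_{Y\wedge Z}$ out of the reweighted joint to get $\Q(X^\perp_Z \cond Y,Z)=\Pstar(X^\perp_Z\cond Y,Z)$ and then invoking the $\Pstar$-independence. The only cosmetic difference is that you compute the joint and divide by $\Q(Y,Z)=\Pstar(Y)\Pstar(Z)$ whereas the paper writes the conditional directly as a ratio of sums; the substance is identical.
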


\begin{proof}
    $X^\perp_Z \indep Z \cond Y$ holds in the training data distribution $\Pstar$ by $d$-separation. For the conditional independence in $\Q$, consider the following derivation,
    \begin{align*}
       \Q(X^\perp_Z \cond Y, Z) &= \frac{\sum_{X_{Y \wedge Z}} \Pstar(X^\perp_Z, X_{Y \wedge Z} \cond Z, Y)\Pstar(Z)\Pstar(Y)}{\sum_{X_{Y \wedge Z}, X^\perp_Z}\Pstar(X^\perp_Z, X_{Y \wedge Z} \cond Z, Y)\Pstar(Z)\Pstar(Y)} \\
        &= \Pstar(X^\perp_Z \cond Z, Y) = \Pstar(X^\perp_Z \cond Y) = g(X^\perp_Z \cond Y)
    \end{align*}
    The r.h.s is not a function of $Z$ and therefore $X^\perp_Z \indep Z \cond Y$ holds in $\Q$.
\end{proof}

However, when considering the graph in Figure \ref{fig:causal_graphs}(b), we introduce a dependence between $X^{\perp}_Z$ and $Z$, which can be easily checked by the simulation Figure \ref{fig:app_simulation} in which we consider the simplified graph $Z \rightarrow Y \leftarrow X$. While we are able to obtain the marginal dependence between $Y$ and $Z$ ($\chi^2: p=0.34$), we introduce a dependence between $X$ and $Z$ ($\chi^2: p < 0.0001$).

\begin{figure}
\begin{lstlisting}[language=Python]

import numpy as np
import scipy

# Number of samples.
n = 10000

# Generate binary data with simple data generating model Z -> Y <- X
x = 1*(np.random.normal(size=n) > 0)
u = 1*(np.random.normal(size=n) > 0.3)
y = 1*(x - u + 0.5*np.random.normal(size=n) > 0.5)
z = 1*(u - 0.5*np.random.normal(size=n) > 0.1)

# Marginal of z.
p_z = np.array([np.mean(z==i) for i in z])
# Marginal of y.
p_y = np.array([np.mean(y==i) for i in y])
# Joint of z and y.
p_zy = np.array([np.mean((z==i)&(y==j)) for i, j in zip(z,y)])

# Resampling probabilities
indep_probs = p_z * p_y / p_zy
indep_probs /= np.sum(indep_probs)

# Re-sample according to computed probabilities
indeces = np.random.choice(n, size=n, replace=True, p=indep_probs)
z_bal, x_bal, y_bal = z[indeces], x[indeces], y[indeces]

# Check that Y and Z are independent
# Create contingency table.
contingency_table_bal_zy = scipy.stats.contingency.crosstab(z_bal,y_bal)
# Implement chi squared test.
statistic, pvalue, _, _ = scipy.stats.chi2_contingency(contingency_table_bal_zy)

# Check whether X and Z are independent
contingency_table_bal_xz = scipy.stats.contingency.crosstab(z_bal,x_bal)
statistic, pvalue, _, _ = scipy.stats.chi2_contingency(contingency_table_bal_xz)

\end{lstlisting}
\caption{Python code to assess the impact of balancing in a numerical simulation of graph Figure \ref{fig:causal_graphs}(b).}
\label{fig:app_simulation}
\end{figure}




\subsubsection{When does data-balancing together with regularization lead to fair models?}
\label{app:fairness_with_balancing_reg}
This section gives several results to analyze the combination of data balancing implemented to generate independence between outcomes $Y$ and sensitive attributes $Z$ and regularization in two variants. First, regularizing to learn representations $W=\phi(X^\perp_Z)$ such that $W \indep_\Q Z \cond Y$; and second regularizing to learn representations $W=\phi(X^\perp_Z)$ such that $W \indep_\Q Z$. We write $X^\perp_Z \indep_\Pstar Y$ to state that $X^\perp_Z$ and $Y$ are independent in distribution $\Pstar$.

\paragraph{Regularization such that $\phi(X^\perp_Z) \indep Z \cond Y$.}

\begin{proposition}[Demographic parity]
    Balancing and regularization such that $W=\phi(X^\perp_Z)$ and $W \indep_\Q Z \cond Y$ is sufficient for demographic parity, i.e. $W \indep_\Q Z$.
\end{proposition}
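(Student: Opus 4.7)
The plan is to show $W \indep_\Q Z$ by computing $\Q(W \cond Z)$ and verifying that it does not depend on $Z$, using only two facts: (i) joint balancing enforces $Y \indep_\Q Z$, which follows directly from Definition \ref{def:joint_balancing} since $\Q(Y,Z) = \Pstar(Y)\Pstar(Z)$; and (ii) the regularization hypothesis $W \indep_\Q Z \cond Y$.

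The central computation is a single application of the law of total probability conditioning on $Y$. First I would write $\Q(W \cond Z) = \sum_Y \Q(W \cond Y, Z)\,\Q(Y \cond Z)$. Then, using the regularization assumption $W \indep_\Q Z \cond Y$, I replace $\Q(W \cond Y, Z)$ by $\Q(W \cond Y)$. Next, using the marginal independence $Y \indep_\Q Z$ induced by joint balancing, I replace $\Q(Y \cond Z)$ by $\Q(Y)$. Recombining gives $\Q(W \cond Z) = \sum_Y \Q(W \cond Y)\,\Q(Y) = \Q(W)$, which is exactly $W \indep_\Q Z$.

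There is no real obstacle here: the result is essentially an instance of the standard graphoid contraction rule applied to the independences $Y \indep_\Q Z$ and $W \indep_\Q Z \cond Y$, which together imply $\{W, Y\} \indep_\Q Z$ and hence $W \indep_\Q Z$. The only thing worth stating explicitly is that $W$ being a function of $X^\perp_Z$ is not used in the derivation itself (it matters only for interpreting the hypothesis and for the downstream claim that the fair representation $W$ is informative for $Y$); the conclusion follows purely from the two independence statements. A brief remark to this effect can be added after the derivation to clarify where the structural assumption $W = \phi(X^\perp_Z)$ enters (or rather, why it is not needed for this particular implication).
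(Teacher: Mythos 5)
Your proof is correct and is essentially identical to the paper's: both decompose $\Q(W \cond Z)$ via the law of total probability over $Y$, then apply the regularization independence $W \indep_\Q Z \cond Y$ and the balancing-induced independence $Y \indep_\Q Z$ to conclude $\Q(W \cond Z) = \Q(W)$. Your added observation that the hypothesis $W = \phi(X^\perp_Z)$ plays no role in the derivation is a fair clarification but does not change the argument.
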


\begin{proof}
    \begin{align*}
        \Q(W \cond Z) = \sum_Y \Q(W \cond Z, Y)\Q(Y \cond Z)
        \stackrel{(1)}{=} \sum_Y \Q(W \cond Y)\Q(Y)
        = \Q(W),
    \end{align*}
    where (1) holds by the assumption of balancing in which $Z \indep_\Q Y$ and regularization $W \indep_\Q Z \cond Y$.
\end{proof}

\begin{proposition}[Predictive parity]
    Balancing and regularization such that $W=\phi(X^\perp_Z)$ and $W \indep_\Q Z \cond Y$ is sufficient for predictive parity, i.e. $Y \indep_\Q Z \cond W$.
\end{proposition}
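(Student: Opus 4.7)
The plan is to directly manipulate $\Q(Y \cond W, Z)$ using Bayes' rule and peel off the $Z$-dependence using the two hypotheses (balancing gives $Y \indep_\Q Z$ and regularization gives $W \indep_\Q Z \cond Y$), together with the demographic parity conclusion $W \indep_\Q Z$ that was just established in the preceding proposition.

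Concretely, I would write
\begin{align*}
\Q(Y \cond W, Z) = \frac{\Q(W \cond Y, Z)\,\Q(Y \cond Z)}{\Q(W \cond Z)},
\end{align*}
and then simplify each of the three factors. The numerator's first factor collapses to $\Q(W \cond Y)$ by the regularization assumption $W \indep_\Q Z \cond Y$, and the second factor collapses to $\Q(Y)$ by the balancing assumption $Y \indep_\Q Z$. For the denominator, I would invoke the immediately preceding demographic parity proposition to replace $\Q(W \cond Z)$ by $\Q(W)$. Recombining yields $\Q(Y \cond W, Z) = \Q(W \cond Y)\Q(Y)/\Q(W) = \Q(Y \cond W)$, which is exactly predictive parity.

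There is essentially no obstacle here — the result is a straightforward chaining of conditional independences. The only mild care needed is to justify the use of Bayes' rule on the relevant supports (no zero-probability conditioning events), and to make explicit that we are allowed to reuse the demographic parity conclusion since its hypotheses are identical to those assumed here. I would note in a single sentence that this proof shows regularization toward $W \indep_\Q Z \cond Y$ together with joint balancing is in fact sufficient for all three parity notions (demographic, predictive, and, trivially, equalized odds), reinforcing the discussion of Section~\ref{sec:reg}.
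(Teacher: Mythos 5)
Your proof is correct, but it takes a slightly different route from the paper's. The paper argues in the other conditioning direction and avoids Bayes' rule entirely: it writes $\Q(Z \cond Y, W) = \Q(Z \cond Y) = \Q(Z)$, where the first equality is the regularization assumption $W \indep_\Q Z \cond Y$ and the second is the balancing assumption $Y \indep_\Q Z$; since the result does not depend on $Y$ (indeed on neither $Y$ nor $W$), predictive parity $Y \indep_\Q Z \cond W$ follows immediately. Your version decomposes $\Q(Y \cond W, Z)$ into three factors and must additionally import the demographic parity conclusion $W \indep_\Q Z$ to handle the denominator $\Q(W \cond Z)$, so it carries an extra dependency on the preceding proposition (harmless here, since that proposition has identical hypotheses, but worth flagging explicitly as you do) and requires the positivity caveat for Bayes' rule that the paper's computation sidesteps. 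Both arguments rest on exactly the same two conditional independences; the paper's is simply the more economical packaging, establishing the stronger fact $Z \indep_\Q (Y, W)$ along the way. Your closing remark is consistent with the paper, which indeed records in the same appendix section that equalized odds holds by design under this regularization and that demographic parity follows by the analogous two-line chain.
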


\begin{proof}
    \begin{align*}
        \Q(Z \cond Y, W) &= \Q(Z \cond Y)
        = \Q(Z),
    \end{align*}
    where both equalities hold by the assumption of balancing in which $Z \indep_\Q Y$ and regularization $W \indep_\Q Z \cond Y$.
\end{proof}

\begin{proposition}[Equalized odds]
    Balancing and regularization such that $W=\phi(X^\perp_Z)$ and $W \indep_\Q Z \cond Y$ is sufficient for equalized odds, i.e. $W \indep_\Q Z \cond Y$.
\end{proposition}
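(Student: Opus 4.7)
The plan is to observe that this proposition is essentially immediate from the hypotheses, unlike the preceding demographic parity and predictive parity results where a non-trivial chain of equalities was required. Here, the equalized odds criterion stated as the conclusion, $W \indep_\Q Z \cond Y$, is literally identical to the regularization condition imposed by assumption. So the proof would be a single line: by hypothesis, the regularizer has enforced $W \indep_\Q Z \cond Y$, which is the criterion we wished to verify.

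The one thing worth checking or commenting on is the role of the balancing step, since it is listed as part of the hypothesis but does not appear to be used. I would note that the balancing contributes $Y \indep_\Q Z$, but this is not needed to conclude equalized odds in terms of $W$. The balancing matters for the demographic parity proposition (to transport the conditional independence into a marginal one) and is trivially compatible here; including it in the statement keeps the hypotheses uniform across the three propositions in this subsection, but for equalized odds it is redundant.

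The hard part, if any, is conceptual rather than technical: one should be clear that equalized odds is formulated with respect to the representation $W = \phi(X^\perp_Z)$ (rather than the final prediction $f(X)$), so that the regularization target and the fairness criterion align definitionally. Given that alignment, no manipulation of the joint $\Q(W, Y, Z)$ is needed, and no appeal to the decomposition of $X$ via Assumption~\ref{assumption:covariates} enters the argument. I would therefore present the proof as a one-sentence observation, optionally followed by a short remark clarifying that balancing was not invoked and explaining why the statement is nevertheless grouped with its companions.
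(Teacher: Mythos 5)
Your proposal matches the paper's proof exactly: the paper also disposes of this case in one line, noting that the regularization induces $W \indep_\Q Z \cond Y$ and so equalized odds holds by design. Your additional observation that the balancing hypothesis is not actually invoked here is correct and consistent with the paper's treatment.
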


\begin{proof}
    Regularization induces $W \indep_\Q Z \cond Y$ and so equalized odds is satisfied by design.
\end{proof}

\textbf{Remark:} Note that balancing and regularization together are not always necessary, for example the section above shows that balancing on its own can be successful in some cases.

\paragraph{Regularization such that $\phi(X^\perp_Z) \indep Z$.}

\begin{proposition}[Demographic parity]
    Balancing and regularization such that $W=\phi(X^\perp_Z)$ and $W \indep_\Q Z$ is sufficient for demographic parity, i.e. $W \indep_\Q Z$.
\end{proposition}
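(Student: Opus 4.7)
The plan is essentially to observe that the conclusion coincides definitionally with one of the hypotheses, so the argument is immediate. Demographic parity, as used throughout this section (see the statement of the analogous proposition for the conditional regularization scheme), is the assertion $W \indep_\Q Z$. Since the regularization step is by assumption enforcing exactly $W \indep_\Q Z$, there is nothing left to show beyond invoking that hypothesis.

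Concretely, the proof would proceed in one line: assume the regularization has succeeded so that $W = \phi(X^\perp_Z)$ satisfies $W \indep_\Q Z$ in the balanced distribution; this is by definition demographic parity for $W$ under $\Q$, completing the argument.

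Two remarks worth including. First, unlike the preceding proposition (regularization under the conditional $W \indep_\Q Z \cond Y$), the balancing condition $Y \indep_\Q Z$ plays no role here: in the conditional case, balancing was needed to marginalize out $Y$ while preserving independence with $Z$ via $\Q(W \cond Z) = \sum_Y \Q(W \cond Y,Z)\Q(Y\cond Z) = \sum_Y \Q(W\cond Y)\Q(Y)$, whereas here the marginal independence is already supplied directly by the regularizer. Second, although the result is formally trivial, it is worth stating for symmetry with the companion propositions for predictive parity and equalized odds under the same regularization scheme, and to make explicit that data balancing is, in this specific combination of criterion and regularization target, redundant from the standpoint of the stated sufficient condition. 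The main (non-)obstacle is simply resisting the temptation to add a spurious computation; the cleanest presentation is to flag the tautology explicitly and note the role (or non-role) of balancing.
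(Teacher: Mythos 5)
Your proposal is correct and takes essentially the same approach as the paper, whose entire proof reads ``Regularization induces $W \indep_\Q Z$ and so demographic parity is satisfied by design.'' Your additional observation that the balancing hypothesis plays no role here (in contrast to the conditional-regularization case) is accurate and consistent with the paper's treatment.
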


\begin{proof}
    Regularization induces $W \indep_\Q Z$ and so demographic parity is satisfied by design.
\end{proof}

\begin{proposition}[Predictive parity]
    Balancing and regularization such that $W=\phi(X^\perp_Z)$ and $W \indep_\Q Z$ is not sufficient for predictive parity, i.e. $Y \indep_\Q Z \cond W$ does not hold.
\end{proposition}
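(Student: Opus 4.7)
The claim is a negative result, so the plan is to exhibit a concrete counterexample: a joint distribution $\Q(Y, Z, W)$ together with an underlying causal model in which $Y \indep_\Q Z$ (achieved by joint balancing) and $W \indep_\Q Z$ (achieved by regularization on the representation), yet $Y \not\indep_\Q Z \cond W$. The canonical obstruction is a \emph{collider-like} representation: if $W$ is a function that ``mixes'' $Y$-information and $Z$-information in such a way that $W$ is marginally independent of each, then conditioning on $W$ reintroduces dependence between $Y$ and $Z$.

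First, I would construct the joint on $(Y, Z, W)$. Take $Y, Z \sim \text{Bernoulli}(1/2)$ independent under $\Q$, and set $W = Y \oplus Z$ (the XOR). A direct check shows $W \sim \text{Bernoulli}(1/2)$ marginally, and $\Q(W = 1 \cond Z = z) = \Q(Y = 1 - z) = 1/2$ for both $z \in \{0, 1\}$, so $W \indep_\Q Z$. By symmetry $W \indep_\Q Y$ as well. However, $\Q(Y = 0 \cond W = 0, Z = 0) = 1$ and $\Q(Y = 0 \cond W = 0, Z = 1) = 0$, so $Y \not\indep_\Q Z \cond W$, violating predictive parity.

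Next, I would show that this joint distribution is realizable as $\Q(Y, Z, \phi(X^\perp_Z))$ for some valid CBN in our framework. For instance, in a causal task consistent with Figure \ref{fig:causal_graphs}(b), $X^\perp_Z$ has a causal path into $Y$ and shares a confounder $U$ with $Z$, so a representation $\phi(X^\perp_Z)$ can in principle encode information correlated with both $Y$ and $Z$ (through $U$). By choosing the mechanisms so that $\phi(X^\perp_Z)$ has the marginal $W \indep_\Q Z$ required by the regularizer --- which is exactly what the regularization enforces during training --- while still retaining enough joint information to equal $Y \oplus Z$ up to noise, the construction goes through.

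The main obstacle is the realizability step: the XOR construction is transparent on $(Y, Z, W)$, but one must be careful that a representation $W = \phi(X^\perp_Z)$ can simultaneously be marginally independent of $Z$ under $\Q$ and yet jointly informative about both $Y$ and $Z$ once conditioned on. A clean way around this is to present the result first at the level of abstract joints $(Y, Z, W)$ satisfying only the stated independences, which is all the proposition formally requires, and then remark that such a joint arises naturally whenever $\phi$ is not further constrained to also satisfy $W \indep_\Q Z \cond Y$. This also clarifies the contrast with the previous proposition: the stronger regularizer $W \indep_\Q Z \cond Y$ rules out exactly this collider failure mode.
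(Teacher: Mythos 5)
Your proof is correct and takes essentially the same route as the paper: the paper's counterexample (three independent fair coins $A,B,C$ with $X^\perp_Z=\mathbf 1\{A=B\}$, $Y=\mathbf 1\{A=C\}$, $Z=\mathbf 1\{B=C\}$) produces exactly the pairwise-independent XOR-type triple you construct directly via $W=Y\oplus Z$, and both arguments conclude by noting that conditioning on $W$ makes $Y$ and $Z$ determine each other. Your extra care about realizability as $\phi(X^\perp_Z)$ in a CBN goes slightly beyond the paper, which simply posits the joint on $(X^\perp_Z,Y,Z)$ and observes it is consistent with the stated independences.
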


\begin{proof}
    We give a counter-example. Let $A,B,C$ be three independent variables with values in $\{0,1\}$. Let $X^\perp_Z=\mathbf 1\{A=B\}, Y=\mathbf 1\{A=C\}, Z=\mathbf 1\{B=C\}$. Let $\Q$ be a probability distribution over $(X^\perp_Z,Y,Z)$. In particular, we could imagine $\Q$ to be generated after balancing and regularization since $W \indep_\Q Z$ and $Y \indep_\Q Z$. However, conditioned on $X^\perp_Z$, $Y$ and $Z$ determine each other and so predictive parity does not hold in $\Q$.
\end{proof}

\begin{proposition}[Equalized odds]
    Balancing and regularization such that $W=\phi(X^\perp_Z)$ and $W \indep_\Q Z$ is not sufficient for equalized odds, i.e. $W \indep_\Q Z \cond Y$ does not hold.
\end{proposition}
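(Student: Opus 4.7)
The plan is to give a counter-example closely analogous to the one used for predictive parity, exploiting the parity (XOR) structure among three independent fair bits. Specifically, I would let $A, B, C$ be mutually independent $\mathrm{Bernoulli}(1/2)$ variables and define
\[
W = \mathbf{1}\{A = B\}, \qquad Y = \mathbf{1}\{A = C\}, \qquad Z = \mathbf{1}\{B = C\},
\]
taking $\Q$ to be the induced joint distribution on $(W, Y, Z)$.

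First, I would verify that both hypotheses of the proposition hold. Each pair among $W, Y, Z$ depends on two overlapping but not identical pairs drawn from $(A, B, C)$, and a short direct calculation on the eight equally likely assignments of $(A,B,C)$ shows that every such pair is marginally independent. In particular $W \indep_\Q Z$ (the regularization condition) and $Y \indep_\Q Z$ (the joint-balancing condition), so $\Q$ is a legitimate witness distribution.

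Next, I would observe the deterministic parity identity
\[
W \oplus Y \oplus Z \;=\; 1 \quad \text{almost surely},
\]
obtained by XORing the three indicator definitions and using $(A\oplus B)\oplus(A\oplus C)\oplus(B\oplus C) = 0$. Equivalently, $W = 1 \oplus Y \oplus Z$, so once $Y$ is fixed, $W$ becomes a non-constant (indeed bijective) function of $Z$. This immediately yields $W \not\indep_\Q Z \mid Y$, violating equalized odds despite both preconditions being satisfied.

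The only real obstacle is the routine verification of the three pairwise independences; apart from that, the argument is essentially the same XOR construction used for predictive parity, with the roles of $W$ and $Y$ swapped in the conditioning so as to target equalized odds rather than predictive parity.
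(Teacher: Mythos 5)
Your proposal is correct and is essentially the paper's own argument: the paper proves this proposition by reusing the three-bit parity counter-example $X^\perp_Z=\mathbf 1\{A=B\}$, $Y=\mathbf 1\{A=C\}$, $Z=\mathbf 1\{B=C\}$ from the predictive-parity case, exactly as you do. Your explicit verification via the identity $W\oplus Y\oplus Z=1$ is just a cleaner way of stating the paper's observation that, after conditioning, the remaining two variables determine each other.
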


\begin{proof}
    The counter-example above applies.
\end{proof}

\section{Experiments}
\label{app:experiments}

\subsection{Datasets}
This work uses the MNIST \citep[][\url{http://yann.lecun.com/exdb/mnist/}]{Lecun1998-kx,deng2012mnist}, Amazon reviews \citep{ni2019justifying}, ImageNet \citep[][\url{https://image-net.org/}]{Deng2009} and CelebA \citep[][\url{http://mmlab.ie.cuhk.edu.hk/projects/CelebA.html}]{Liu2015} datasets, which are all openly accessible and can be used for research purposes. 

\noindent\textbf{MNIST semi-synthetic data}: For simplicity, we binarize the digit recognition task to a label $Y \in \{0,1\}$ according to whether the number in the image is $<5$ or $\geq 5$ such that $Y$ matches the ground truth with probability $0.98$. The top of the image is replaced by noise coloured in red for $Z=0$ and blue for $Z=1$ (see Figure~\ref{fig:mnist_samples}). We can relate the confounder and the label such that $95\%$ (resp.\ $5\%$) of images with $Y=0$ have a red (resp. blue) noise pattern, while $10\%$ (resp.\ $90\%$) of the images with $Y=1$ have a red (resp.\ blue) pattern, corresponding to our original distribution $\Pstar$. In this distribution, the marginal distributions of $Y$ and $Z$ are (close to) uniform.  We sample $n=30,000$ samples from $\Pstar$, as well as a dataset jointly balanced on $Y$ and $Z$ ($\Q$, $n=30,000$). We also sample test data based on a ground truth $\Pideal$ generated with $\Pideal(Z=0 | Y)=0.5$ ($n=2,000$). Finally, we generate an $X^{\perp}_Z$ dataset that contains white instead of colored noise.

\noindent\textbf{MNIST semi-synthetic data with added confounder}: We add $V$ and $X_V$ to our data generating process where $X_V$ is a green cross either on the left or right of the image, with a fixed vertical position. The horizontal position of the cross is given by $V$ and $V$ is correlated with $Y$ ($\Pstar(V=0|Y=0)=0.2$, $\Pstar(V=0|Y=1)=0.9$). We generate a confounded dataset (95/10) as previously, which we balance jointly on $Y$ and $Z$. We then train 5 replicates of the same architecture, and test our model on $\Q$, as well as on the ground truth $\Pideal$ where $\Pideal(V=0|Y=0)=\Pideal(V=0|Y=1)=\Pideal(Z=0|Y=0)=\Pideal(Z=0|Y=1)=0.5$.

\noindent\textbf{MNIST semi-synthetic data, entangled}: We define the color of the noise based on an $\textsc{OR}(Y,Z)$. We define $\Q$ by generating samples with $\Q(Z=0|Y=0)=\Q(Z=0|Y=1)=0.5$, while $\Pideal$ is represented by the disentangled test dataset described above.

\noindent\textbf{Amazon reviews with confounder}: We refer to \citet{veitch2021} and define a causal task based on Amazon reviews for the clothing category which predicts whether the review was found to be helpful (i.e. obtained `thumbs up' votes) or not based on the review's text. We generate a random variable $U$ as the unobserved confounder, and define $Y$ as the binary helpfulness label, randomly flipping the label based on $U$ (association: p=0.4). This leads to reviews with $Y=0$ being more associated with $U=0$. We define $Z$ as $Z= \lambda * U + (1-\lambda) * U_2 $, where $U_2$ is another random variable distributed uniformly and $\lambda$ is a parameter that controls the relationship between $U$ and $Z$, and by transitivity, between $Z$ and $Y$. In $\Pstar$, $\lambda$ is selected to be 0.8, leading to a correlation of 0.35 between $Y$ and $Z$. To create $X^{\perp}_Y$, we add perturbations to the text based on the value of $Z$ that wouldn't (in theory) affect $Y$. We select the words \{and, the, you, my, they\} and add a suffix `xxxx' (resp. `yyyy') when $Z=0$ (resp. $Z=1$). Finally, $Y$ is imbalanced, with only $5\%$ of the dataset with $Y=1$. We hence re-balance the classes before the modelling. This operation is also performed by the joint balancing.

\subsection{Metric definitions and operationalization}
\label{app:metrics}
Our work focuses on statistical group fairness criteria \citep{barocas}. These can be translated as independence criteria on the model's predictions.

\begin{definition}[Demographic parity]
    A predictor $f(X)$ is said to satisfy demographic parity w.r.t. sensitive attribute $Z$ and distribution $\Pstar$ if $f(X) \indep_\Pstar Z$.
\end{definition}

\begin{definition}[Predictive parity]
    A predictor $f(X)$ trained to predict an outcome $Y$ is said to satisfy predictive parity w.r.t. sensitive attribute $Z$ and distribution $\Pstar$ if $Y \indep_\Pstar Z \cond f(X)$.
\end{definition}

\begin{definition}[Equalized odds]
    A predictor $f(X)$ trained to predict an outcome $Y$ is said to satisfy equalized odds w.r.t.  a sensitive attribute $Z$ and distribution $\Pstar$ if $f(X) \indep_\Pstar Z \cond Y$.
\end{definition}

In our experiments, we estimate equalized odds as in \citet{alabdulmohsin2021}. For this metric, the lower, the better.

\begin{align*}
EO &= 0.5* \max_{z\in\mathcal{Z}}\,\mathbb{E}_{X}[f(X)\,|\,Z=z, Y=0] \,-\,\min_{z\in\mathcal{Z}}\,\mathbb{E}_{X}[f(X)\,|\,Z=z, Y=0]\\
&+  0.5* \max_{z\in\mathcal{Z}}\,\mathbb{E}_{X}[f(X)\,|\,Z=z, Y=1] \,-\,\min_{z\in\mathcal{Z}}\,\mathbb{E}_{X}[f(X)\,|\,Z=z, Y=1].
\end{align*}

In terms of robustness metrics, we evaluate a simplified version of risk-invariance by computing model performance on a test set sampled from $\Pstar$, and contrasting this result with the model's performance on a test set sampled from $\Pideal$ (when known), or from $\Q$. We also estimate worst-group performance \citep{Sagawa2020} as:
\begin{equation*}
WG = \min_{z'\in\mathcal{Z}}\,\E_{X,y}[\mathbbm{1} [f(X)=y]\cond z=z']
\end{equation*}
An invariant model that is optimal would hence display high performance on both $\Pstar$ and $\Pideal$/$\Q$, as well as high worst-group accuracy.

Metrics like risk-invariance or equalized odds provide insights on the model's outputs, but do not probe the model's representation. As we are interested in large-scale models that might be further fine-tuned, it is important to understand whether the model's representation is invariant on $\mathcal{P}$. Defining a representation as $\phi(X)$, we can write $f(X)=h(\phi(X))$ in which we assume the representation to be fixed (i.e. frozen model weights) and $h$ is a learnable function. In \citet{zemel13}, the authors define a fair representation w.r.t. a binary $Z$ as demographic parity on the representation:
\begin{equation*}
    \E_{X \in X^{Z=z}}\phi(X)=\E_{X \in X^{Z=z'}}\phi(X), \forall z,z' \in \mathcal{Z},
\end{equation*}
where $X^{Z=z}$ corresponds to the samples with $Z=z$. This is equivalent to assessing the `encoding' of $Z$ in $\phi(X)$, by training a linear layer $h: \phi(X) \rightarrow Z$ \citep{Gichoya2022,Brown2022}. Chance level performance of $h(\phi(X))$ would then suggest that the representation is independent of $Z$. In the present work, we estimate the encoding of $Z$ using $\Pideal$ or $\Q$ such that assessing the encoding of $Z$ is equivalent to assessing the encoding of $Z|Y$. Models that encode less of the auxiliary factor $Z$ have been shown to reach a more `global' optimum compared to models that encode the signal more strongly \citep[independently of whether invariant predictions are obtained][]{yang2023}.

\subsection{Model architectures}
We consider multiple architectures in this work, with an attempt to cover different model sizes and characteristics.

\begin{itemize}
    \item Small convolutional network, similar in spirit to AlexNet \citep{Krizhevsky2012}. It includes 5 convolution blocks with kernel sizes (4, 3, 2, 2, 2, 2) and output channels (3, 6, 9, 12, 12, 9), with max pooling after each convolution, as well as two dense layers with Relu non-linearity before the output head.
    \item VGG network \citep{Simonyan15} with square kernels of size 3, output channels of dimensions  (64, 64, 128, 128, 128,
 256, 256, 256, 512, 512, 512) and strides (1, 1, 2, 1, 1, 2, 1, 1, 2, 1, 1).
 \item Vision Transformers \citep{Dosovitskiy2020} of different sizes: ViT-micro (17M parameters), ViT-Tiny (44M), ViT-S (174M) and ViT-B (690M), with the Tiny sizes and up taken from \citep{Touvron2021-sc}.
 \item For text data, we use the BERT architecture, as defined in TensorFlow Hub.
\end{itemize}

We use a stochastic gradient descent optimizer with Nesterov momentum of $0.9$ for all models.

\subsubsection{Hyper-parameter searches}

We include a hyper-parameter search over the learning rate (5 values in log-scale between $9e-5$ and $0.1$) coupled with a batch size search between sizes of 128, 256 and 512 examples. In terms of regularization, the small convolutional network include dropout in the dense layers (search on 0.1, 0.2, 0.3), while VGG includes batch normalization in the dense layers (as per their original implementations). We impose an L2-regularization of $1e-4$ during training for all architectures.

We note that hyper-parameters did not seem to make a difference on the MNIST results. For VGG, there was a larger variation, as well as a larger variance across multiple seeds.

When performing MMD conditional regularization, we vary the strength of the regularizer in $[0.0, 0.1, 0.2, 0.5, 1., 2., 3., 4., 5., 6., 7., 8., 9., 10.]$, with 5 replicates for each value. To minimize computational expenses, we fix the learning rate to $0.001$, dropout rate to $0.1$ and batch size to $64$ (for downsampled datasets) or $256$.

\subsection{Assets, code and resources}
We use the BERT model bert\_en\_uncased\_L-12\_H768\_A-12 from TensorFlow Hub. All other models are trained from scratch in our code infrastructure written in Python and JAX \citep{jax2018github}. The results are then analyzed with Python and the numpy \citep{Harris2020-nx}, matplotlib \citep[][\url{https://matplotlib.org/}]{Hunter2007-ic} and pandas \citep[][\url{https://pandas.pydata.org/}]{McKinney2010-vn} packages. For the small convolutional networks, training was performed with 4 GPUs (V100) and evaluation used 1 GPU per model instance. BERT used 2 Tensor Processing Units (TPUs) for training and 1 TPU for evaluation. For all other models, we used 4 Tensor Processing Units for training and 1 TPU or GPU (P100) for evaluation. We note that, apart from ViT-B and BERT, all experiments could be run on CPU.

\section{Results}

\subsection{Failure modes of data balancing with MNIST}
\label{app:failure_modes_experiments}

\paragraph{Other confounder}
We notice that correlation between $V$ and $Z$ in $\Q$ is decreased ($\rho=-0.16$) compared to $\Pstar$ ($\rho=-0.60$) but is not null. In addition, we observe that the model relies on $V$ (accuracy on $\Q$: $0.769\pm 0.008$, on $\Pideal$: $0.647\pm 0.023$). As a consequence, models trained on $\Q$ display a bias w.r.t. $Z$ (see equalized odds and worst group performance).

\paragraph{Entangled signals} During training, the model reaches $0.903 \pm 0.011$ accuracy on $\Q$, but only $0.672 \pm 0.004$ accuracy on $\Pideal$. Worst-group accuracy is low and equalized odds high, displaying a failure mode of data balancing.

\subsection{Celeb-A}
\subsubsection{Model performance} Model encoding and performance across different model sizes is displayed in Figure \ref{fig:app_celebA_perf_sizes}. We show that all models trained on the subsampled data display an encoding of the auxiliary factor $Z$.

\begin{figure*}[t]
    \centering
    \begin{subfigure}[!t]{0.2\textwidth}
    \includegraphics[width=\textwidth]{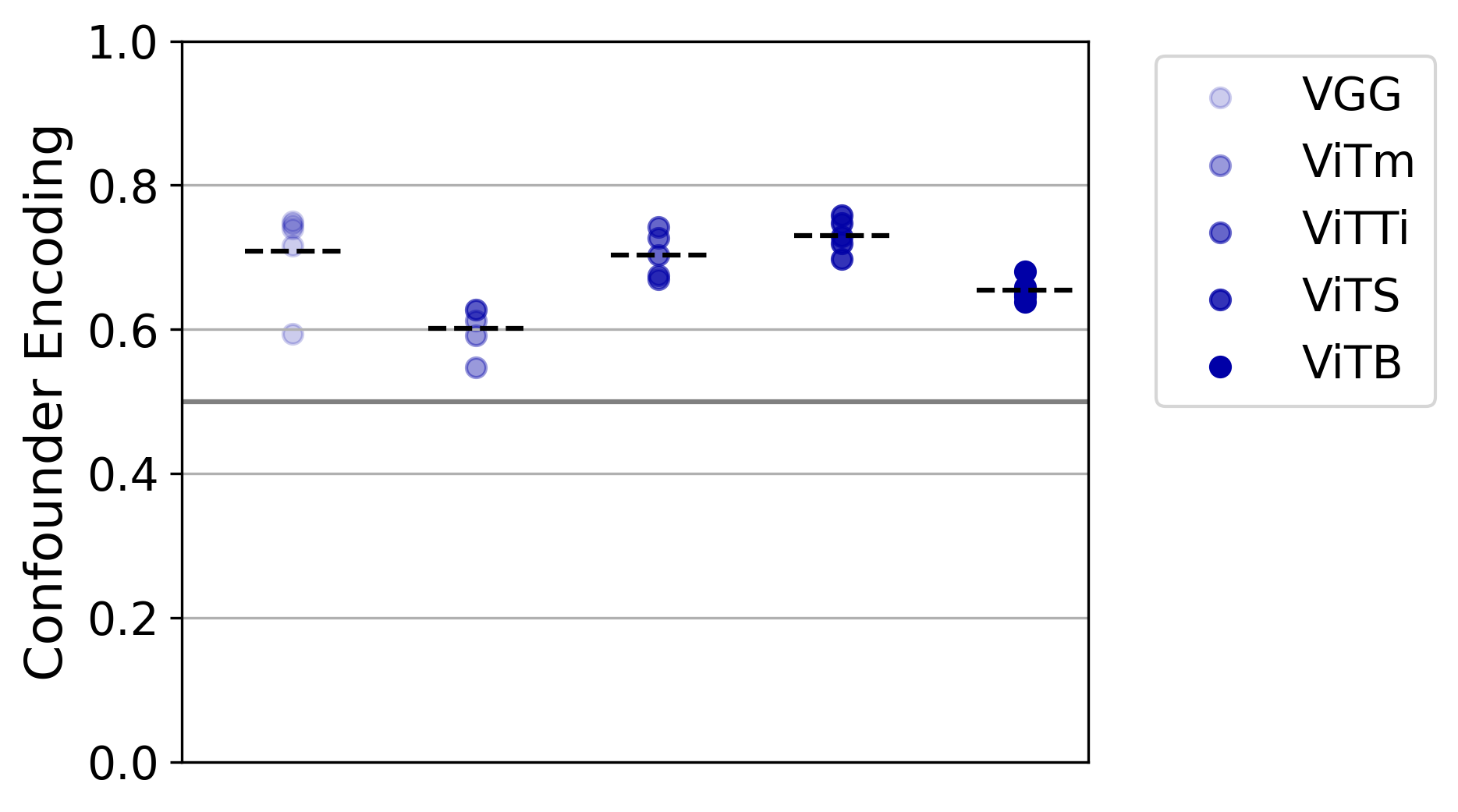}
    \end{subfigure}
    \begin{subfigure}[!t]{0.6\textwidth}
    \includegraphics[width=\textwidth]{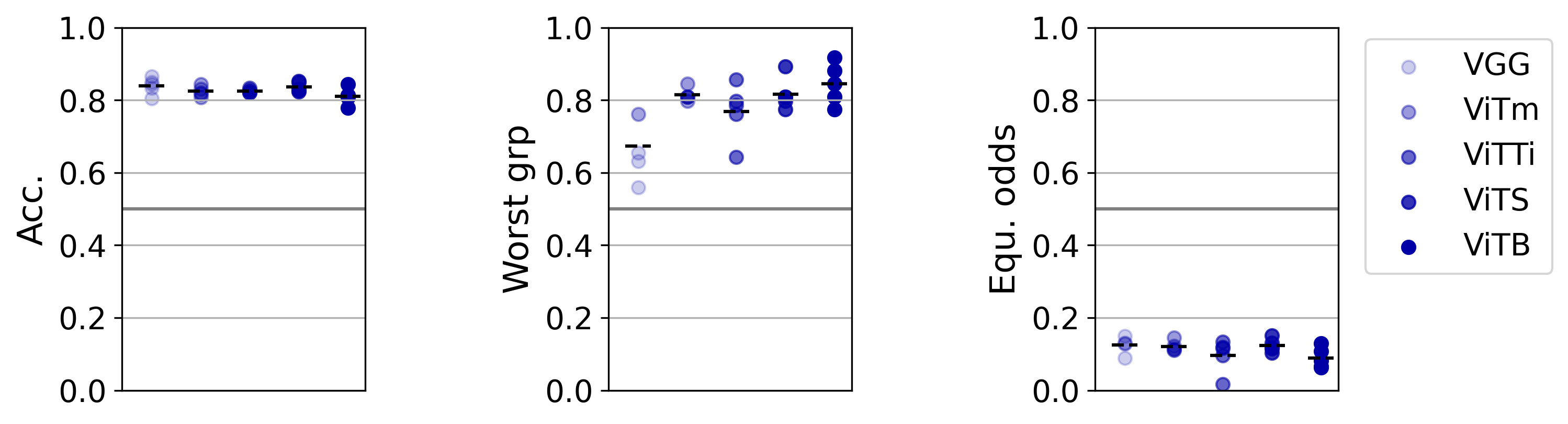}
    \end{subfigure}
    \caption{Model encoding, accuracy, worst group accuracy and equalized odds for the VGG architecture, and different sizes of ViT (m: micro, Ti: tiny, S, B) when trained with balanced CelebA data. Each dot is a model replicate, while the dashed line represents the average across replicates.}
    \label{fig:app_celebA_perf_sizes}
\end{figure*}

\subsubsection{Distinguishing between failure modes}
\label{app:sec_celebA_failures}
\noindent\textbf{Correlation patterns in balanced data} We plot the Pearson correlation between $Y$ and all other available attributes (39 in CelebA) in Figure \ref{fig:app_celeba_corr} (left), and similarly for $Z$ (right). We note that the correlation that increases most when balancing the data is between $Y$ and the `black hair' label. As this label has a low correlation with $Z$, this does not seem problematic. We also observe smaller changes in attributes related to hair (`bushy-eyebrows', `bald') and accessories (`wearing-hat').

\begin{figure*}[t]
    \centering
    \begin{subfigure}[!t]{0.7\textwidth}
    \includegraphics[width=\textwidth]{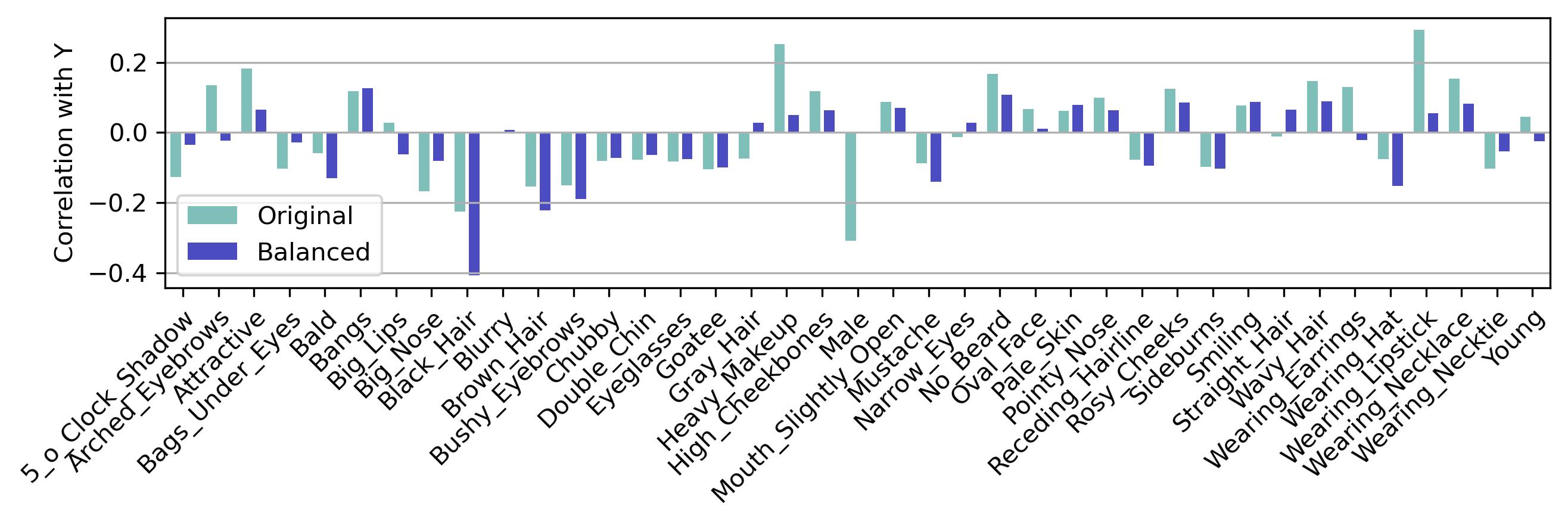}
    \end{subfigure} \\
    \begin{subfigure}[!t]{0.7\textwidth}
    \includegraphics[width=\textwidth]{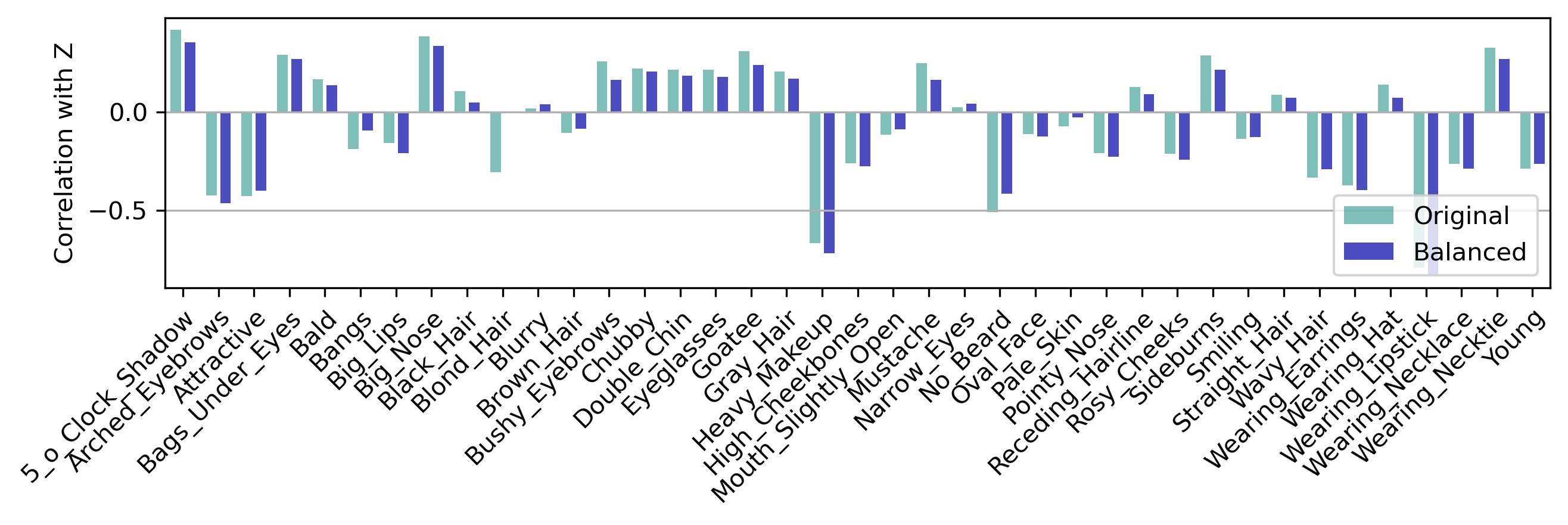}
    \end{subfigure}
    \caption{Pearson correlation between each attribute and $Y$ (left), or $Z$ (right) in a sample of the original data (teal), compared to a balanced sample (blue) of the training data.}
    \label{fig:app_celeba_corr}
\end{figure*}

\section*{Broader impact}
\label{app:broader_impact}
Our work investigates a common mitigation strategy for failures of fairness or robustness in machine learning predictive settings. We aim to clearly highlight when data balancing is promising, and when it fails, hence advancing the field of trustworthy machine learning. As with most papers addressing fairness questions, we acknowledge that our mathematical formulations of fairness criteria might not correspond to the desired societal impact, e.g. in terms of equity. Specific considerations for our work include the use of the CelebA \citep{Liu2015} dataset, and in particular the `is-male' binary label provided. We acknowledge that a binary characterization of gender is not representative and can be harmful. In addition, it would be desirable to have self-reported instead of perceived gender. Our work considers cases for which auxiliary factors of variation $Z$ are observed at train, test or fine-tuning time. This is a limitation of our investigation, as our insights might not be available when $Z$ is unobserved. This is exemplified by the more difficult case of distinguishing between failure modes without a $\Pideal$ in the classification of CelebA images.


\newpage
\section*{NeurIPS Paper Checklist}

\begin{enumerate}

\item {\bf Claims}
    \item[] Question: Do the main claims made in the abstract and introduction accurately reflect the paper's contributions and scope?
    \item[] Answer: \answerYes{} 
    \item[] Justification: We present an analysis work that investigates when joint data balancing might fail or succeed, and how to probe different failure modes. Our contributions are highlighted at the end of the introduction and all claims match the theory and experiments.
    \item[] Guidelines:
    \begin{itemize}
        \item The answer NA means that the abstract and introduction do not include the claims made in the paper.
        \item The abstract and/or introduction should clearly state the claims made, including the contributions made in the paper and important assumptions and limitations. A No or NA answer to this question will not be perceived well by the reviewers. 
        \item The claims made should match theoretical and experimental results, and reflect how much the results can be expected to generalize to other settings. 
        \item It is fine to include aspirational goals as motivation as long as it is clear that these goals are not attained by the paper. 
    \end{itemize}

\item {\bf Limitations}
    \item[] Question: Does the paper discuss the limitations of the work performed by the authors?
    \item[] Answer: \answerYes{} 
    \item[] Justification: We include a separate section in our discussion, and further provide a Broader Impact section.
    \item[] Guidelines:
    \begin{itemize}
        \item The answer NA means that the paper has no limitation while the answer No means that the paper has limitations, but those are not discussed in the paper. 
        \item The authors are encouraged to create a separate "Limitations" section in their paper.
        \item The paper should point out any strong assumptions and how robust the results are to violations of these assumptions (e.g., independence assumptions, noiseless settings, model well-specification, asymptotic approximations only holding locally). The authors should reflect on how these assumptions might be violated in practice and what the implications would be.
        \item The authors should reflect on the scope of the claims made, e.g., if the approach was only tested on a few datasets or with a few runs. In general, empirical results often depend on implicit assumptions, which should be articulated.
        \item The authors should reflect on the factors that influence the performance of the approach. For example, a facial recognition algorithm may perform poorly when image resolution is low or images are taken in low lighting. Or a speech-to-text system might not be used reliably to provide closed captions for online lectures because it fails to handle technical jargon.
        \item The authors should discuss the computational efficiency of the proposed algorithms and how they scale with dataset size.
        \item If applicable, the authors should discuss possible limitations of their approach to address problems of privacy and fairness.
        \item While the authors might fear that complete honesty about limitations might be used by reviewers as grounds for rejection, a worse outcome might be that reviewers discover limitations that aren't acknowledged in the paper. The authors should use their best judgment and recognize that individual actions in favor of transparency play an important role in developing norms that preserve the integrity of the community. Reviewers will be specifically instructed to not penalize honesty concerning limitations.
    \end{itemize}

\item {\bf Theory Assumptions and Proofs}
    \item[] Question: For each theoretical result, does the paper provide the full set of assumptions and a complete (and correct) proof?
    \item[] Answer: \answerYes{} 
    \item[] Justification: All proofs are provided in the Appendix and cross-referenced.
    \item[] Guidelines:
    \begin{itemize}
        \item The answer NA means that the paper does not include theoretical results. 
        \item All the theorems, formulas, and proofs in the paper should be numbered and cross-referenced.
        \item All assumptions should be clearly stated or referenced in the statement of any theorems.
        \item The proofs can either appear in the main paper or the supplemental material, but if they appear in the supplemental material, the authors are encouraged to provide a short proof sketch to provide intuition. 
        \item Inversely, any informal proof provided in the core of the paper should be complemented by formal proofs provided in appendix or supplemental material.
        \item Theorems and Lemmas that the proof relies upon should be properly referenced. 
    \end{itemize}

    \item {\bf Experimental Result Reproducibility}
    \item[] Question: Does the paper fully disclose all the information needed to reproduce the main experimental results of the paper to the extent that it affects the main claims and/or conclusions of the paper (regardless of whether the code and data are provided or not)?
    \item[] Answer: \answerYes{} 
    \item[] Justification: We provide all details in the Appendix, including hyper-parameter search. It is important to note that the experiments illustrate the theory and do not provide contributions per se.
    \item[] Guidelines:
    \begin{itemize}
        \item The answer NA means that the paper does not include experiments.
        \item If the paper includes experiments, a No answer to this question will not be perceived well by the reviewers: Making the paper reproducible is important, regardless of whether the code and data are provided or not.
        \item If the contribution is a dataset and/or model, the authors should describe the steps taken to make their results reproducible or verifiable. 
        \item Depending on the contribution, reproducibility can be accomplished in various ways. For example, if the contribution is a novel architecture, describing the architecture fully might suffice, or if the contribution is a specific model and empirical evaluation, it may be necessary to either make it possible for others to replicate the model with the same dataset, or provide access to the model. In general. releasing code and data is often one good way to accomplish this, but reproducibility can also be provided via detailed instructions for how to replicate the results, access to a hosted model (e.g., in the case of a large language model), releasing of a model checkpoint, or other means that are appropriate to the research performed.
        \item While NeurIPS does not require releasing code, the conference does require all submissions to provide some reasonable avenue for reproducibility, which may depend on the nature of the contribution. For example
        \begin{enumerate}
            \item If the contribution is primarily a new algorithm, the paper should make it clear how to reproduce that algorithm.
            \item If the contribution is primarily a new model architecture, the paper should describe the architecture clearly and fully.
            \item If the contribution is a new model (e.g., a large language model), then there should either be a way to access this model for reproducing the results or a way to reproduce the model (e.g., with an open-source dataset or instructions for how to construct the dataset).
            \item We recognize that reproducibility may be tricky in some cases, in which case authors are welcome to describe the particular way they provide for reproducibility. In the case of closed-source models, it may be that access to the model is limited in some way (e.g., to registered users), but it should be possible for other researchers to have some path to reproducing or verifying the results.
        \end{enumerate}
    \end{itemize}

\item {\bf Open access to data and code}
    \item[] Question: Does the paper provide open access to the data and code, with sufficient instructions to faithfully reproduce the main experimental results, as described in supplemental material?
    \item[] Answer: \answerNo{} 
    \item[] Justification: Our work uses open access datasets to illustrate a baseline method for mitigating undesired dependencies. There is no specific code contribution in our experiments.
    \item[] Guidelines:
    \begin{itemize}
        \item The answer NA means that paper does not include experiments requiring code.
        \item Please see the NeurIPS code and data submission guidelines (\url{https://nips.cc/public/guides/CodeSubmissionPolicy}) for more details.
        \item While we encourage the release of code and data, we understand that this might not be possible, so “No” is an acceptable answer. Papers cannot be rejected simply for not including code, unless this is central to the contribution (e.g., for a new open-source benchmark).
        \item The instructions should contain the exact command and environment needed to run to reproduce the results. See the NeurIPS code and data submission guidelines (\url{https://nips.cc/public/guides/CodeSubmissionPolicy}) for more details.
        \item The authors should provide instructions on data access and preparation, including how to access the raw data, preprocessed data, intermediate data, and generated data, etc.
        \item The authors should provide scripts to reproduce all experimental results for the new proposed method and baselines. If only a subset of experiments are reproducible, they should state which ones are omitted from the script and why.
        \item At submission time, to preserve anonymity, the authors should release anonymized versions (if applicable).
        \item Providing as much information as possible in supplemental material (appended to the paper) is recommended, but including URLs to data and code is permitted.
    \end{itemize}

\item {\bf Experimental Setting/Details}
    \item[] Question: Does the paper specify all the training and test details (e.g., data splits, hyperparameters, how they were chosen, type of optimizer, etc.) necessary to understand the results?
    \item[] Answer: \answerYes{} 
    \item[] Justification: The data splits, and more specifically the evaluation distributions, are central to our work, and are detailed in the main text. Other experimental details are provided in Appendix~\ref{app:experiments}.
    \item[] Guidelines:
    \begin{itemize}
        \item The answer NA means that the paper does not include experiments.
        \item The experimental setting should be presented in the core of the paper to a level of detail that is necessary to appreciate the results and make sense of them.
        \item The full details can be provided either with the code, in appendix, or as supplemental material.
    \end{itemize}

\item {\bf Experiment Statistical Significance}
    \item[] Question: Does the paper report error bars suitably and correctly defined or other appropriate information about the statistical significance of the experiments?
    \item[] Answer: \answerYes{} 
    \item[] Justification: All experiments were run from 5 seeds and results are reported in terms of their average and standard deviation, while plots report each model trained.
    \item[] Guidelines:
    \begin{itemize}
        \item The answer NA means that the paper does not include experiments.
        \item The authors should answer "Yes" if the results are accompanied by error bars, confidence intervals, or statistical significance tests, at least for the experiments that support the main claims of the paper.
        \item The factors of variability that the error bars are capturing should be clearly stated (for example, train/test split, initialization, random drawing of some parameter, or overall run with given experimental conditions).
        \item The method for calculating the error bars should be explained (closed form formula, call to a library function, bootstrap, etc.)
        \item The assumptions made should be given (e.g., Normally distributed errors).
        \item It should be clear whether the error bar is the standard deviation or the standard error of the mean.
        \item It is OK to report 1-sigma error bars, but one should state it. The authors should preferably report a 2-sigma error bar than state that they have a 96\% CI, if the hypothesis of Normality of errors is not verified.
        \item For asymmetric distributions, the authors should be careful not to show in tables or figures symmetric error bars that would yield results that are out of range (e.g. negative error rates).
        \item If error bars are reported in tables or plots, The authors should explain in the text how they were calculated and reference the corresponding figures or tables in the text.
    \end{itemize}

\item {\bf Experiments Compute Resources}
    \item[] Question: For each experiment, does the paper provide sufficient information on the computer resources (type of compute workers, memory, time of execution) needed to reproduce the experiments?
    \item[] Answer: \answerYes{} 
    \item[] Justification: We provide these details in the Appendix. We note that the architecture or specific model used is not relevant to our message.
    \item[] Guidelines:
    \begin{itemize}
        \item The answer NA means that the paper does not include experiments.
        \item The paper should indicate the type of compute workers CPU or GPU, internal cluster, or cloud provider, including relevant memory and storage.
        \item The paper should provide the amount of compute required for each of the individual experimental runs as well as estimate the total compute. 
        \item The paper should disclose whether the full research project required more compute than the experiments reported in the paper (e.g., preliminary or failed experiments that didn't make it into the paper). 
    \end{itemize}
    
\item {\bf Code Of Ethics}
    \item[] Question: Does the research conducted in the paper conform, in every respect, with the NeurIPS Code of Ethics \url{https://neurips.cc/public/EthicsGuidelines}?
    \item[] Answer: \answerYes{} 
    \item[] Justification: We do not include banned datasets and mention limitations of our data and methods clearly.
    \item[] Guidelines:
    \begin{itemize}
        \item The answer NA means that the authors have not reviewed the NeurIPS Code of Ethics.
        \item If the authors answer No, they should explain the special circumstances that require a deviation from the Code of Ethics.
        \item The authors should make sure to preserve anonymity (e.g., if there is a special consideration due to laws or regulations in their jurisdiction).
    \end{itemize}

\item {\bf Broader Impacts}
    \item[] Question: Does the paper discuss both potential positive societal impacts and negative societal impacts of the work performed?
    \item[] Answer: \answerYes{} 
    \item[] Justification: We have included an additional section at the end of our paper to discuss Broader Impact of our work.
    \item[] Guidelines:
    \begin{itemize}
        \item The answer NA means that there is no societal impact of the work performed.
        \item If the authors answer NA or No, they should explain why their work has no societal impact or why the paper does not address societal impact.
        \item Examples of negative societal impacts include potential malicious or unintended uses (e.g., disinformation, generating fake profiles, surveillance), fairness considerations (e.g., deployment of technologies that could make decisions that unfairly impact specific groups), privacy considerations, and security considerations.
        \item The conference expects that many papers will be foundational research and not tied to particular applications, let alone deployments. However, if there is a direct path to any negative applications, the authors should point it out. For example, it is legitimate to point out that an improvement in the quality of generative models could be used to generate deepfakes for disinformation. On the other hand, it is not needed to point out that a generic algorithm for optimizing neural networks could enable people to train models that generate Deepfakes faster.
        \item The authors should consider possible harms that could arise when the technology is being used as intended and functioning correctly, harms that could arise when the technology is being used as intended but gives incorrect results, and harms following from (intentional or unintentional) misuse of the technology.
        \item If there are negative societal impacts, the authors could also discuss possible mitigation strategies (e.g., gated release of models, providing defenses in addition to attacks, mechanisms for monitoring misuse, mechanisms to monitor how a system learns from feedback over time, improving the efficiency and accessibility of ML).
    \end{itemize}
    
\item {\bf Safeguards}
    \item[] Question: Does the paper describe safeguards that have been put in place for responsible release of data or models that have a high risk for misuse (e.g., pretrained language models, image generators, or scraped datasets)?
    \item[] Answer: \answerNA{} 
    \item[] Justification: The paper poses no such risk.
    \item[] Guidelines:
    \begin{itemize}
        \item The answer NA means that the paper poses no such risks.
        \item Released models that have a high risk for misuse or dual-use should be released with necessary safeguards to allow for controlled use of the model, for example by requiring that users adhere to usage guidelines or restrictions to access the model or implementing safety filters. 
        \item Datasets that have been scraped from the Internet could pose safety risks. The authors should describe how they avoided releasing unsafe images.
        \item We recognize that providing effective safeguards is challenging, and many papers do not require this, but we encourage authors to take this into account and make a best faith effort.
    \end{itemize}

\item {\bf Licenses for existing assets}
    \item[] Question: Are the creators or original owners of assets (e.g., code, data, models), used in the paper, properly credited and are the license and terms of use explicitly mentioned and properly respected?
    \item[] Answer: \answerYes{} 
    \item[] Justification: We cite the authors of the datasets used and provide details on the infrastructure and terms of use in Appendix.
    \item[] Guidelines:
    \begin{itemize}
        \item The answer NA means that the paper does not use existing assets.
        \item The authors should cite the original paper that produced the code package or dataset.
        \item The authors should state which version of the asset is used and, if possible, include a URL.
        \item The name of the license (e.g., CC-BY 4.0) should be included for each asset.
        \item For scraped data from a particular source (e.g., website), the copyright and terms of service of that source should be provided.
        \item If assets are released, the license, copyright information, and terms of use in the package should be provided. For popular datasets, \url{paperswithcode.com/datasets} has curated licenses for some datasets. Their licensing guide can help determine the license of a dataset.
        \item For existing datasets that are re-packaged, both the original license and the license of the derived asset (if it has changed) should be provided.
        \item If this information is not available online, the authors are encouraged to reach out to the asset's creators.
    \end{itemize}

\item {\bf New Assets}
    \item[] Question: Are new assets introduced in the paper well documented and is the documentation provided alongside the assets?
    \item[] Answer: \answerNA{} 
    \item[] Justification: We do not release new assets.
    \item[] Guidelines:
    \begin{itemize}
        \item The answer NA means that the paper does not release new assets.
        \item Researchers should communicate the details of the dataset/code/model as part of their submissions via structured templates. This includes details about training, license, limitations, etc. 
        \item The paper should discuss whether and how consent was obtained from people whose asset is used.
        \item At submission time, remember to anonymize your assets (if applicable). You can either create an anonymized URL or include an anonymized zip file.
    \end{itemize}

\item {\bf Crowdsourcing and Research with Human Subjects}
    \item[] Question: For crowdsourcing experiments and research with human subjects, does the paper include the full text of instructions given to participants and screenshots, if applicable, as well as details about compensation (if any)? 
    \item[] Answer: \answerNA{} 
    \item[] Justification: The paper does not include crowdsourcing nor research with human subjects.
    \item[] Guidelines:
    \begin{itemize}
        \item The answer NA means that the paper does not involve crowdsourcing nor research with human subjects.
        \item Including this information in the supplemental material is fine, but if the main contribution of the paper involves human subjects, then as much detail as possible should be included in the main paper. 
        \item According to the NeurIPS Code of Ethics, workers involved in data collection, curation, or other labor should be paid at least the minimum wage in the country of the data collector. 
    \end{itemize}

\item {\bf Institutional Review Board (IRB) Approvals or Equivalent for Research with Human Subjects}
    \item[] Question: Does the paper describe potential risks incurred by study participants, whether such risks were disclosed to the subjects, and whether Institutional Review Board (IRB) approvals (or an equivalent approval/review based on the requirements of your country or institution) were obtained?
    \item[] Answer: \answerNA{} 
    \item[] Justification: The paper does not include crowdsourcing nor research with human subjects.
    \item[] Guidelines:
    \begin{itemize}
        \item The answer NA means that the paper does not involve crowdsourcing nor research with human subjects.
        \item Depending on the country in which research is conducted, IRB approval (or equivalent) may be required for any human subjects research. If you obtained IRB approval, you should clearly state this in the paper. 
        \item We recognize that the procedures for this may vary significantly between institutions and locations, and we expect authors to adhere to the NeurIPS Code of Ethics and the guidelines for their institution. 
        \item For initial submissions, do not include any information that would break anonymity (if applicable), such as the institution conducting the review.
    \end{itemize}

\end{enumerate}

\end{document}